\documentclass[lettersize,journal]{IEEEtran}
\usepackage{amsmath,amsfonts}
\usepackage{algorithmic}
\usepackage{algorithm}
\usepackage{array}
\usepackage[caption=false,font=normalsize,labelfont=sf,textfont=sf]{subfig}
\usepackage{textcomp}
\usepackage{stfloats}
\usepackage{url}
\usepackage{verbatim}
\usepackage{graphicx}
\usepackage{cite}
\usepackage{enumitem}
\usepackage[colorlinks,
            linkcolor=blue,
            anchorcolor=blue,
            citecolor=blue]{hyperref}
\usepackage{tcolorbox}
\usepackage{amsthm}
\usepackage{makecell}  
\usepackage{multirow} 
\usepackage{booktabs}
\usepackage{tikz}
\usetikzlibrary{positioning,arrows.meta}

\newtheorem{assumption}{Assumption}
\newtheorem{definition}{Definition}
\newtheorem{lemma}{Lemma}
\newtheorem{theorem}{Theorem}
\newtheorem{corollary}{Corollary}

\newtheorem{remark}{Remark}

\begin{document}

\title{Toward Trustworthy AI: Multi-Target Adversarial Attacks and Robust Defenses for Continuous Data Summarization}
\author{
\thanks{Manuscript received April 19, 2021; revised August 16, 2021.}
Yuefang Lian, 
Longkun Guo$^{\dagger,\ddagger}$, 
\IEEEmembership{Senior Member, IEEE},
Zhongrui Zhao$^{\dagger}$, 
Zhigang Lu, 
\IEEEmembership{Member, IEEE},
Yanan Cai, 
Shuchao Pang, 
\IEEEmembership{Member, IEEE},
Dachuan Xu, 
Jason Xue
\IEEEmembership{Senior Member, IEEE},
\thanks{$\dagger$: equal contribution, $\ddagger$: corresponding author}
\thanks{Yuefang Lian is with Nankai University, Tianjin, China. This work was done when Yuefang was a visiting PhD student with James Cook University and Western Sydney University during PhD candidature at Beijing University of Technology.}
\thanks{Longkun Guo is with Fuzhou University, Fuzhou, China.}
\thanks{Zhongrui Zhao and Yanan Cai are with James Cook University, Townsville, Australia and Western Sydney University, Sydney, Australia.}
\thanks{Zhigang Lu is with Western Sydney University, Sydney, Australia. This work was partially done when Zhigang was a Lecturer with James Cook University.}
\thanks{Shuchao Pang is with Nanjing University of Science and Technology, Nanjing, China.}
\thanks{Dachuan Xu is with Beijing University of Technology, Beijing, China.}
\thanks{Jason Xue is with CSIRO's Data 61, Sydney, Australia and Responsible AI Research (RAIR) Centre, The University of Adelaide, Australia.}

}

\markboth{IEEE TRANSACTIONS ON INFORMATION FORENSICS AND SECURITY,~Vol.~14, No.~8, August~2021}%
{Lian \MakeLowercase{\textit{et al.}}: Toward Trustworthy AI: Multi-Target Adversarial Attacks and Robust Defense}


\maketitle

\begin{abstract}

Trustworthy AI requires reliable data-processing pipelines, not only robust downstream predictive models. 
As an upstream component, data summarization determines which information is retained and passed to subsequent learning or decision modules. Therefore, adversarial perturbations to the summarization process can compromise trustworthy AI in an upstream manner: they may alter the selected summary, reduce its representativeness, and further degrade the utility of subsequent learning tasks. In this paper, we study adversarial attacks on continuous data summarization under similarity-level perturbations through DR-submodular optimization. We show that a class of multi-resolution image summarization objectives can be formulated as multilinear extensions of non-negative submodular set functions and satisfy DR-submodularity with $m$-weak monotonicity. 
We then formulate multi-target attack generation as a min-max problem, where one admissible perturbation of the similarity structure is optimized to degrade multiple target summarization models. 
To mitigate such perturbations, we formulate robust defense against mixed attack types as a regularized max-min problem. 
For both problems, we develop approximation algorithms with theoretical guarantees.  Experiments on real-data and controlled clustered benchmarks show that the proposed attack is effective in representative low-to-moderate budget regimes and can induce downstream task-performance loss. 
The proposed defense improves the robustness--mitigation trade-off in structured settings, while also revealing the parameter sensitivity of robust protection on real data.
\end{abstract}

\begin{IEEEkeywords}
Trustworthy AI, continuous data summarization, adversarial robustness

\end{IEEEkeywords}
\section{Introduction}

\IEEEPARstart{T}{rustworthy} AI requires the reliability of the entire data-processing pipeline, not only the robustness of downstream predictive models. In many AI systems, data summarization and sample selection are used as upstream components to extract representative information before training, retrieval, storage, or decision making. Although these procedures are often treated as benign preprocessing steps, they determine which data are retained and passed to subsequent modules. Therefore, adversarial perturbations to the summarization process may compromise trustworthy AI in an upstream manner: they can remove representative information, increase redundancy, distort the selected training or retrieval set, and eventually reduce downstream task reliability. This makes adversarial robustness of data summarization a security-relevant problem for trustworthy AI pipelines.

Continuous submodular maximization provides a natural mathematical framework for modeling summarization and selection tasks. Classical results on submodular functions and multilinear relaxations~\cite{Lovasz1983,Chekuri2011} established the importance of diminishing-returns structure in discrete and continuous optimization, and subsequent studies have used this structure for budget allocation, data summarization, and representative selection~\cite{soma2014,schreiber2020apricot,kaushal2022submodlib}. Submodular optimization has also appeared in security-related data-processing problems, such as attack detection scheduling in large-scale networks under false data injection attacks~\cite{Suo2024}. 
However, these works do not study adversarial vulnerability of the summarization process itself as an upstream component in trustworthy AI pipelines.

This upstream vulnerability becomes particularly relevant in similarity-based continuous summarization.
Many summarization objectives are constructed from pairwise similarities or feature-induced similarity scores, and the summarizer optimizes a soft selection vector based on this similarity structure. 
Thus, the similarity-construction stage naturally becomes an attack surface: an adversary may perturb the similarity matrix without directly modifying the final summary decision variable. 
Such perturbations may arise from manipulated feature representations in retrieval-style systems or poisoned samples that alter neighborhood- or similarity-based structures~\cite{Tolias2019,chen2021data}. 
After the summarization module is executed on the perturbed objective, the selected summary may become less representative when evaluated under the original clean objective. 
The risk is further amplified in multi-target settings, where several summarization models may be built from related data sources or similar feature representations. 
In such cases, a single perturbation to the upstream similarity structure may simultaneously degrade multiple target summarization models, motivating a multi-target attack formulation for similarity-based continuous summarization.

Existing studies on adversarial attacks and robust optimization provide important foundations, but they do not directly address the setting considered in this paper. 
On the attack side, prior work has studied adversarial sample generation for discrete submodular optimization~\cite{adibi2022,Lei2019} and general non-convex min-max formulations~\cite{wang2021}. 
However, these methods do not explicitly model the combination of continuity, DR-submodularity, and $m$-weak monotonicity that arises in the continuous summarization problem considered here, nor do they focus on generating one shared perturbation that attacks multiple summarization models simultaneously. 
On the defense side, robust submodular optimization has been studied for set functions, monotone continuous models, and general non-convex robust formulations~\cite{staib2017,lee2022,lian2024zeroth,wang2021}. 
Yet these methods do not explicitly address robust protection for weakly monotone DR-submodular summarization models under mixed similarity-level attack types. 
As a result, the attack and defense mechanisms of continuous summarization remain insufficiently understood, especially from the perspective of downstream reliability in trustworthy AI pipelines.

In this paper, we study the adversarial vulnerability and robust protection of continuous data summarization under perturbations of the upstream similarity structure. 
Instead of directly modifying the final summary decision, the adversary perturbs the similarity matrix that defines the summarization objective. 
This similarity-level threat model captures an upstream attack surface in trustworthy AI pipelines, where degraded summaries may affect the representative information passed to downstream modules. To address this issue, we develop a DR-submodular optimization framework for multi-target attack generation and robust defense, and evaluate it on real-data multilinear-extension summarization and a controlled clustered multi-target benchmark. 
The former tests the theory-aligned objective on real data, while the latter provides an interpretable setting for analyzing the attack mechanism and downstream consequences through known cluster and representative structure.

\textbf{Contributions.} The main contributions of this paper are summarized as follows.
\begin{itemize}
\item We provide a DR-submodular formulation for a class of multi-resolution image summarization objectives. 
Specifically, we show that the corresponding discrete utility is non-negative, submodular, and \(m\)-weakly monotone under a controlled redundancy condition, and that its multilinear extension preserves non-negativity, DR-submodularity, and weak monotonicity.

\item We introduce a similarity-level threat model for continuous data summarization, where the adversary perturbs the upstream similarity structure rather than directly modifying the final summary decision. 
Under this model, we formulate multi-target attack generation for data summarization as a structured min-max optimization problem and develop a structure-aware approximation algorithm for constructing one shared perturbation across multiple target summarization models.

\item We formulate robust defense against admissible similarity-level perturbations as a regularized max-min optimization problem. 
We develop a robust continuous grtype algorithm for \(m\)-weakly monotone DR-submodular objectives and establish an approximation guarantee under standard smoothness.

\item We conduct empirical evaluations on real-data multilinear-extension summarization and a controlled clustered multi-target benchmark. 
The results show that optimized similarity perturbations can induce measurable summarization degradation in representative budget regimes. 
Downstream evaluations further demonstrate that such degradation can translate into task-level performance loss, and that robust summaries can recover downstream reliability by restoring class/cluster coverage. 
These findings connect adversarial summarization to the reliability of trustworthy AI pipelines.
\end{itemize}

\section{Related Work}

This section reviews prior work related to adversarial attack generation and robust optimization in submodular settings, with an emphasis on their relevance to secure data processing and trustworthy AI.

\subsection{Attack Generation with Submodular Structure}

Attack generation has been studied in several optimization and security-related settings where submodular structure is used to model diminishing returns in perturbation, selection, or degradation effects. In DC microgrids, Liu et al.~\cite{Liu2025} leveraged submodular structure to construct false data injection attacks by exploiting the submodularity of system state error. Zhu et al.~\cite{Zhu2023} studied attacks that degrade network robustness through edge perturbations, using greedy optimization for robustness-related objectives. In text classification, Lei et al.~\cite{Lei2019} showed that the perturbation search space for attacking certain neural models exhibits submodularity, allowing attack generation to be formulated as a submodular optimization problem. More closely related to our setting, Adibi et al.~\cite{adibi2022} studied convex-submodular min-max optimization and obtained theoretical guarantees for discrete submodular attack generation and the multilinear extension of monotone set-submodular functions. In a different direction, Wang et al.~\cite{wang2021} considered adversarial attack generation from a general non-convex min-max optimizaton perspective and analyzed convergence toward stationary solutions.

These works provide important foundations for studying adversarial vulnerability in structured optimization problems. However, they do not explicitly address the setting considered here, where the attack target consists of multiple continuous summarization models with DR-submodularity and $m$-weak monotonicity, and where the goal is to construct a single perturbation that simultaneously degrades several target summarization models. Our work complements these lines of research by focusing on adversarial robustness evaluation for continuous data summarization, a security-relevant component of trustworthy AI pipelines.

\subsection{Robust Submodular Optimization}

Robust submodular optimization has been studied in several important application domains, including observation selection~\cite{Krause2008} and influence maximization~\cite{chen2016,he2018}. These works typically consider maximizing the worst-case value over a family of submodular set functions. Another relevant line studies robust optimization in adversarial non-convex settings, where problems are formulated as nonconvex-concave saddle-point problems and solved by exploiting connections to continuous submodularity~\cite{staib2017}, with extensions to security games~\cite{Wilder2018} and distributionally robust settings~\cite{staib2019}.

For continuous submodular optimization, Lee et al.~\cite{lee2022} studied non-smooth and H\"older-smooth submodular maximization and derived approximation guarantees that can be used in robust formulations. Lian et al.~\cite{lian2024zeroth} proposed a zeroth-order approximation algorithm for robust DR-submodular maximization by solving a non-smooth up-concave optimization problem. Wang et al.~\cite{wang2021} also considered robust optimization against mixed adversarial attacks in a general non-convex setting, but the resulting guarantees are given in terms of stationary solutions. These works are closely related to the defense side of our problem, but they do not explicitly address robust protection for continuous DR-submodular summarization models with $m$-weak monotonicity under mixed attack types. Our work extends this literature by studying robust protection for an upstream data summarization component in trustworthy AI pipelines, where adversarial perturbations may affect multiple target summarization models simultaneously.

Table~\ref{tab:related_positioning} summarizes representative theoretical frameworks related to our attack and defense formulations. 
This table is intended to clarify differences in modeling assumptions and guarantee types; it is not an empirical baseline list, since several methods rely on monotonicity or single-target formulations and are therefore not directly applicable to our weakly monotone multi-target summarization setting.
\begin{table*}[t!]
\centering
\normalsize
\caption{Positioning of representative optimization frameworks related to attack generation and robust defense.}
\label{tab:related_positioning}
\resizebox{\textwidth}{!}{
\begin{tabular}{@{}c c c c c c c@{}}
\toprule
\textbf{Ref.} 
& \textbf{Role} 
& \textbf{Model setting} 
& \textbf{Cont.} 
& \textbf{Multi-target / mixed}
& \textbf{Guarantee} 
& \textbf{Relation to this work} \\
\midrule
\cite{adibi2022} 
& Attack 
& Set submodular, monotone 
& No 
& No 
& $(1-1/e)$-approx. 
& Discrete, single-target \\
\cite{adibi2022} 
& Attack 
& Multilinear extension, monotone 
& Yes 
& No 
& $(1/2)$-approx. 
& Continuous but monotone and single-target \\
\cite{wang2021} 
& Attack 
& General non-convex min-max 
& Yes 
& Yes 
& Stationary point 
& Baseline, no DR-submodular ratio \\
\midrule
\cite{lee2022} 
& Defense 
& DR-submodular, monotone 
& Yes 
& No 
& $(1/2)$-approx. 
& Monotone model, no mixed attack defense \\
\cite{lian2024zeroth} 
& Defense 
& DR-submodular, monotone 
& Yes 
& No 
& $(1-1/e)$-approx. 
& Monotone robust DR-submodular setting \\
\cite{wang2021} 
& Defense 
& General non-convex robust 
& Yes 
& Yes 
& Stationary point 
& Handles mixed attacks, stationary guarantees \\
\midrule
This work 
& Attack / defense 
& DR-submodular, $m$-weakly monotone 
& Yes 
& Yes 
& $m(1-1/e)$-approx. 
& Continuous weakly monotone summarization with multi-target attack and mixed defense \\
\bottomrule
\end{tabular}
}
\vspace{-8pt}
\end{table*}

\section{Preliminaries}\label{sec:preli}

\subsection{Basic Notation and Structural Properties}

For $\mathbf{x},\mathbf{y}\in\mathbb{R}^{n}$, we denote $(\mathbf{x}\vee \mathbf{y})_{i}=\max\{\mathbf{x}_{i},\mathbf{y}_{i}\}$ and $(\mathbf{x}\wedge \mathbf{y})_{i}=\min\{\mathbf{x}_{i},\mathbf{y}_{i}\}$. Moreover, $\mathbf{x}\le \mathbf{y}$ means that $\mathbf{x}_{i}\le \mathbf{y}_{i}$ for all $i\in[n]$. We first recall the definition of continuous DR-submodularity.

\begin{definition}[DR-submodularity~\cite{bian2017guaranteed}]
\label{def:DR-proper}
Given a convex set $\mathcal{D}\subseteq \mathbb{R}^{n}$, a continuous function $f:\mathcal{D}\rightarrow \mathbb{R}$ is said to be DR-submodular over $\mathcal{D}$ if, for any $\mathbf{x}\le \mathbf{y}\in\mathcal{D}$ and any $a\in\mathbb{R}_{+}$ such that $a\mathbf{e}_{i}+\mathbf{x}\in\mathcal{D}$ and $a\mathbf{e}_{i}+\mathbf{y}\in\mathcal{D}$, the following diminishing-returns property holds for every $i\in\{1,\dots,n\}$:
\[
f(a\mathbf{e}_{i}+\mathbf{x})-f(\mathbf{x})
\ge
f(a\mathbf{e}_{i}+\mathbf{y})-f(\mathbf{y}).
\]
\end{definition}

Throughout the paper, the feasible set $\mathcal{X}$ is assumed to be a down-closed convex set, namely, if $\mathbf{x}\in\mathcal{X}$ and $\mathbf{0}\le \mathbf{y}\le \mathbf{x}$, then $\mathbf{y}\in\mathcal{X}$. We also assume that the objective function is non-negative. To characterize DR-submodular functions that are not fully monotone, we use the notion of $m$-weak monotonicity over continuous domains, extending the corresponding concept for set functions in~\cite{Mualem2022}.

\begin{definition}[$m$-weakly monotone]
\label{def:m-weak}
For the maximization of a non-negative continuous function $f$ over a feasible set $\mathcal{X}$, we say that $f$ is $m$-weakly monotone if
\[
f(\mathbf{x}\vee \mathbf{y})\ge m f(\mathbf{x}), \qquad \forall \mathbf{x},\mathbf{y}\in\mathcal{X}.
\]
\end{definition}

When $m=1$, the above definition reduces to standard monotonicity, i.e., $f(\mathbf{x}\vee\mathbf{y})\ge f(\mathbf{x})$ for all $\mathbf{x},\mathbf{y}\in\mathcal{X}$. For additional properties of DR-submodular functions and examples of $m$-weak monotonicity, we refer readers to~\cite{bian2017guaranteed,Mualem2022}. 
For constrained maximization problems, we use the standard notion of an $(\alpha,\epsilon)$-approximation, meaning that the returned solution achieves at least an $\alpha$ fraction of the optimal value up to an additive error $\epsilon$, i.e., $f(\mathbf{x}_{\rm output})\ge \alpha f(\mathbf{x}^{*})-\epsilon$, where $\mathbf{x}^{*}$ is an optimal solution, $\alpha\in(0,1]$ is the approximation ratio, and $\epsilon\ge 0$ is the optimization accuracy.

\subsection{Multi-Resolution Image Summarization Model}\label{subsec:model}

We instantiate our study through multi-resolution image summarization, which serves as a representative upstream data summarization model in trustworthy AI pipelines. 
Given a dataset $\Omega$ with $|\Omega|=n$, the goal is to select a subset of representative images that provides good coverage of the dataset while avoiding excessive redundancy. 
Instead of optimizing directly over discrete subsets, we adopt a continuous relaxation in which each coordinate $x_\nu\in[0,1]$ represents the probability or soft weight of selecting image $\nu$. 
After optimization, a deterministic summary can be obtained either by thresholding, e.g.,
\[
S_{\tau}=\{\nu\in\Omega:x_\nu\ge \tau\},
\]
or by selecting the top-ranked images according to $\{x_\nu\}_{\nu\in\Omega}$.

We first define the corresponding discrete summarization utility. 
For a subset $S\subseteq\Omega$, let
\begin{equation}
\label{eq:discrete_summary}
f(S,\Omega)
=
r(S,\Omega)-\frac{\lambda}{n}q(S,\Omega),
\end{equation}
where
$$
r(S,\Omega)=\sum_{\mu\in\Omega}\max_{\nu\in S}s_{\mu,\nu}
$$
is the facility-location representativeness term, and
$$
q(S,\Omega)=
\sum_{\mu\in S}\sum_{\nu\in S,\,\nu\neq \mu}s_{\mu,\nu}
$$
is the redundancy penalty. 
Here $s_{\mu,\nu}\ge 0$ denotes the similarity between images $\mu$ and $\nu$, and $\lambda\ge 0$ controls the strength of the redundancy penalty. 
For the empty set, we adopt the convention $\max_{\nu\in\emptyset}s_{\mu,\nu}=0$, and hence $f(\emptyset,\Omega)=0$.

The first term rewards selected images that well represent the remaining data points, while the second term discourages the simultaneous selection of highly similar images. 
To ensure that the redundancy penalty does not dominate the representativeness term, we define the redundancy-to-representativeness ratio
\begin{equation*}
\rho_\Omega
=
\sup_{\emptyset\neq S\subseteq\Omega}
\frac{\frac{\lambda}{n}q(S,\Omega)}{r(S,\Omega)}.
\end{equation*}
Throughout this paper, we assume that $r(S,\Omega)>0$ for every nonempty feasible set $S$ and that $0\le \rho_\Omega<1$. 
This condition means that the redundancy penalty is controlled relative to the coverage utility. 
Under this assumption, we have that $f(S,\Omega)\ge 0$ for all $S\subseteq\Omega$.

The continuous objective studied in this paper is defined as the multilinear extension of the discrete utility in~\eqref{eq:discrete_summary}. 
Specifically, for $\mathbf{x}\in[0,1]^n$, let $S\sim\mathbf{x}$ denote a random subset of $\Omega$ in which each image $\nu$ is independently selected with probability $x_\nu$. 
We define
\begin{equation}
\label{eq:image_fun}
F(\mathbf{x},\Omega)
=
\mathbb{E}_{S\sim \mathbf{x}}
\left[
f(S,\Omega)
\right].
\end{equation}
Equivalently,
$F(\mathbf{x},\Omega)
=
R_{\mathrm{multi}}(\mathbf{x},\Omega)
-
Q_{\mathrm{multi}}(\mathbf{x},\Omega),
$
where
\[
R_{\mathrm{multi}}(\mathbf{x},\Omega)
=
\sum_{\mu\in\Omega}
\mathbb{E}_{S\sim \mathbf{x}}
\left[
\max_{\nu\in S}s_{\mu,\nu}
\right],
\]
and
\[
Q_{\mathrm{multi}}(\mathbf{x},\Omega)
=
\frac{\lambda}{n}
\sum_{\mu\in\Omega}\sum_{\nu\in\Omega,\,\nu\neq \mu}
x_\mu x_\nu s_{\mu,\nu}.
\]
The expression for $Q_{\mathrm{multi}}$ follows from the independence of the random selections, since $
\mathbb{E}[\mathbf{1}_{\mu\in S}\mathbf{1}_{\nu\in S}]=x_\mu x_\nu$, where $\mu\neq \nu$.

The following lemma shows that the continuous objective belongs to the structural class studied in this paper.

\begin{lemma}[Proof in Sec.~\ref{supp:proofs-pre}]
\label{lem:multi-reso}
Suppose that $s_{\mu,\nu}\ge 0$ for all $\mu,\nu\in\Omega$, and define
\[
\rho_\Omega
=
\sup_{\emptyset\neq S\subseteq\Omega}
\frac{\frac{\lambda}{n}q(S,\Omega)}{r(S,\Omega)}.
\]
Assume that $r(S,\Omega)>0$ for every nonempty feasible set $S$ and that $0\le \rho_\Omega<1$. 
Then the discrete utility $f(S,\Omega)$ defined in~\eqref{eq:discrete_summary} is non-negative, submodular, and $m_\Omega$-weakly monotone with
$m_\Omega=1-\rho_\Omega$. Consequently, its multilinear extension $F(\mathbf{x},\Omega)$ defined in~\eqref{eq:image_fun} is non-negative, DR-submodular over $[0,1]^n$, and $m_\Omega$-weakly monotone.
\end{lemma}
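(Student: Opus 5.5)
The plan is to prove the discrete claims first, using one pointwise ``sandwich'' inequality. Then I transfer them to the multilinear extension by linearity of expectation. I will avoid any generic claim that weak monotonicity survives multilinear extension, since I do not expect that to hold for arbitrary set functions.

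\textbf{Discrete part.} For non-negativity, the definition of $\rho_\Omega$ gives $\frac{\lambda}{n}q(S,\Omega)\le \rho_\Omega\, r(S,\Omega)$ for every nonempty $S$. Hence
\[
(1-\rho_\Omega)\,r(S,\Omega)\le f(S,\Omega)\le r(S,\Omega).
\]
This also holds trivially for $S=\emptyset$, where everything is $0$. The upper bound uses $q\ge 0$, which follows from $s_{\mu,\nu}\ge 0$. Since $\rho_\Omega<1$ and $r\ge 0$, we get $f\ge 0$.

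For submodularity, I treat the two terms separately.
\begin{itemize}
\item $r(\cdot,\Omega)$ is a non-negative sum of functions $S\mapsto\max_{\nu\in S}s_{\mu,\nu}$. Each of these is monotone and submodular: the marginal gain of adding $\nu$ is $(s_{\mu,\nu}-\max_{S}s_{\mu,\cdot})^+$, which is non-increasing in $S$.
\item The marginal gain of $q$ from adding $\nu\notin S$ is $\sum_{\mu\in S}(s_{\mu,\nu}+s_{\nu,\mu})$, which is non-decreasing in $S$. So $q$ is supermodular and $-\frac{\lambda}{n}q$ is submodular.
\end{itemize}
A sum of submodular functions is submodular.

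For $m_\Omega$-weak monotonicity (in the set form $f(S\cup T)\ge m f(S)$), I chain the sandwich with monotonicity of $r$:
\[
f(S\cup T)\ge(1-\rho_\Omega)\,r(S\cup T)\ge(1-\rho_\Omega)\,r(S)\ge(1-\rho_\Omega)\,f(S).
\]

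\textbf{Continuous part.} Non-negativity of $F$ is immediate, since $F$ is an expectation of $f\ge 0$.

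For DR-submodularity, I use the standard fact (Bian et al.) that a twice-differentiable $F$ on a box is DR-submodular iff all second partials are $\le 0$, including the diagonal ones. $F$ is multilinear, so $\partial^2F/\partial x_i^2=0$. For $i\neq j$,
\[
\partial^2F/\partial x_i\partial x_j=\mathbb{E}_{S\sim\mathbf{x}_{-ij}}\big[f(S+i+j)-f(S+i)-f(S+j)+f(S)\big]\le 0
\]
by submodularity of $f$. Alternatively, one can check Definition~\ref{def:DR-proper} directly along a coordinate, because $F$ is affine in each coordinate with slope $\mathbb{E}[f(S+i)-f(S)]$, and this slope is non-increasing in $\mathbf{x}$ by submodularity.

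For weak monotonicity, I take expectations of the pointwise sandwich. This gives
\[
(1-\rho_\Omega)\,R_{\mathrm{multi}}(\mathbf{z})\le F(\mathbf{z})\le R_{\mathrm{multi}}(\mathbf{z})\quad\text{for all }\mathbf{z}\in[0,1]^n,
\]
where the upper bound also follows from $Q_{\mathrm{multi}}\ge 0$. Next, $R_{\mathrm{multi}}$ is the multilinear extension of the monotone function $r$, so its partials $\mathbb{E}[r(S+i)-r(S)]$ are $\ge 0$ and $R_{\mathrm{multi}}$ is coordinatewise non-decreasing. Hence
\[
F(\mathbf{x}\vee\mathbf{y})\ge(1-\rho_\Omega)R_{\mathrm{multi}}(\mathbf{x}\vee\mathbf{y})\ge(1-\rho_\Omega)R_{\mathrm{multi}}(\mathbf{x})\ge(1-\rho_\Omega)F(\mathbf{x}).
\]

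\textbf{Main obstacle.} The delicate step is continuous weak monotonicity. A naive coupling argument, which takes $S\sim\mathbf{x}$ and $T\sim\mathbf{y}$ and applies the set inequality to $S\cup T$, does not work. The reason is that $S\cup T$ is not distributed as $\mathbf{x}\vee\mathbf{y}$: it includes $\nu$ with probability $x_\nu+y_\nu-x_\nu y_\nu\ge\max(x_\nu,y_\nu)$. The sandwich argument through the monotone surrogate $R_{\mathrm{multi}}$ avoids this issue. The one thing I will check carefully is that the bound $\frac{\lambda}{n}q\le\rho_\Omega r$ is applied pointwise for every realized $S$, including the empty set, so that it survives taking expectations.
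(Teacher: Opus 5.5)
Your proposal is correct. The discrete part matches the paper's proof: the sandwich $(1-\rho_\Omega)r\le f\le r$, the split into a submodular facility-location term and a supermodular penalty, and the chain through monotonicity of $r$. The Hessian argument for DR-submodularity also matches, and you handle it a little more carefully than the paper does. You note that the diagonal second partials vanish, and that the direction needed is ``nonpositive Hessian implies DR-submodular.'' The paper lists only the converse.

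The genuine difference is the continuous weak-monotonicity step. The paper does not argue it for this particular $f$. It invokes a general fact, asserted without proof in the appendix preliminaries: $f(A\cup B)\ge m f(A)$ for all $A,B$ implies $F(\mathbf{x}\vee\mathbf{y})\ge mF(\mathbf{x})$. You instead take expectations of the pointwise sandwich and use that $R_{\mathrm{multi}}$, the multilinear extension of the monotone $r$, is coordinatewise nondecreasing. This is valid and self-contained, but it depends on the specific structure of this $f$: a monotone surrogate that controls $f$ up to the factor $1-\rho_\Omega$.

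Your reason for avoiding the general fact is mistaken, however: the fact holds for every set function. Only the independent coupling fails; the threshold coupling works. Draw $U_\nu$ i.i.d.\ uniform on $[0,1]$, and set $S=\{\nu:U_\nu\le x_\nu\}$ and $C=\{\nu:U_\nu\le(\mathbf{x}\vee\mathbf{y})_\nu\}$. Then $S\sim\mathbf{x}$, $C\sim\mathbf{x}\vee\mathbf{y}$, and $S\subseteq C$. Hence $f(C)=f(S\cup C)\ge m f(S)$ pointwise, and taking expectations gives $F(\mathbf{x}\vee\mathbf{y})\ge mF(\mathbf{x})$. This argument is shorter, needs neither non-negativity nor the sandwich, and is exactly what justifies the paper's one-line transfer. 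Your route gives an explicit argument tailored to the summarization objective, but it does not extend to other weakly monotone functions.
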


Lemma~\ref{lem:multi-reso} provides the structural foundation for applying DR-submodular optimization tools to continuous data summarization. 
It shows that the proposed continuous objective inherits the diminishing-returns structure of the underlying discrete summarization utility. 
However, once the similarity matrix is perturbed, the resulting objective must be re-examined to verify whether it remains in the same structural class. 
To formalize this, we perturb the similarity matrix entrywise. Specifically, for each victim model, the perturbed similarity matrix is given by
$
s_{\mu,\nu}(\mathbf{v}) = s_{\mu,\nu} + v_{\mu,\nu},
$
where the perturbation satisfies
\[
\mathbf{v}\in\mathcal P_p(\epsilon_p)
=
\left\{
\mathbf{v}:
\|\mathbf{v}\|_p\le \epsilon_p,\;
0\le s_{\mu,\nu}+v_{\mu,\nu}\le 1,\ \forall \mu,\nu
\right\}.
\]
Thus, all perturbed similarities remain nonnegative and bounded in $[0,1]$.

The following lemma clarifies when the perturbed objective preserves the structural properties required by our analysis.

\begin{lemma}[Structural preservation under admissible perturbations]
\label{lem:attacked-structure}
Under the perturbation set $\mathcal P_p(\epsilon_p)$, the perturbed similarity matrix remains entrywise nonnegative. Moreover, if the perturbed discrete utility $f(S,\Omega(\mathbf v))$ remains submodular and satisfies $\rho_{\Omega(\mathbf v)}<1$, then its multilinear extension $F(\mathbf{x},\Omega(\mathbf v))$ is non-negative, DR-submodular with respect to $\mathbf{x}$, and $m_{\Omega(\mathbf v)}$-weakly monotone, where $
m_{\Omega(\mathbf v)} = 1-\rho_{\Omega(\mathbf v)}$.
\end{lemma}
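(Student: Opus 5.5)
The plan is to reduce Lemma~\ref{lem:attacked-structure} to Lemma~\ref{lem:multi-reso} applied to the perturbed similarity matrix $s(\mathbf v)=(s_{\mu,\nu}+v_{\mu,\nu})_{\mu,\nu}$. The first claim is immediate from the definition of $\mathcal P_p(\epsilon_p)$. Every admissible $\mathbf v$ satisfies $0\le s_{\mu,\nu}+v_{\mu,\nu}\le 1$ for all $\mu,\nu$. Hence $s_{\mu,\nu}(\mathbf v)\ge 0$ entrywise, and the norm constraint $\|\mathbf v\|_p\le\epsilon_p$ plays no role here. Consequently $r(S,\Omega(\mathbf v))\ge 0$ and $q(S,\Omega(\mathbf v))\ge 0$ for every $S$, with the same convention $\max_{\nu\in\emptyset}=0$, so $f(\emptyset,\Omega(\mathbf v))=0$.

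Next I would check that the hypotheses of Lemma~\ref{lem:multi-reso} hold for $s(\mathbf v)$. Nonnegativity was just established, and $\rho_{\Omega(\mathbf v)}<1$ is assumed. One hypothesis needs care: $r(S,\Omega(\mathbf v))>0$ for nonempty $S$, which is needed for $\rho$ to be finite and well defined. I would argue that it follows from $\rho_{\Omega(\mathbf v)}<1$ being a meaningful finite supremum. Alternatively, if $r(S,\Omega(\mathbf v))=0$, then every $s_{\mu,\nu}(\mathbf v)$ with $\nu\in S$ vanishes, so $q(S,\Omega(\mathbf v))=0$ as well, and such $S$ can be dropped from the supremum with the convention $0/0=0$. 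In that case $f(S,\Omega(\mathbf v))=0\ge 0$ is harmless. With these in place, non-negativity follows directly from the definition of $\rho$:
\[
f(S,\Omega(\mathbf v))=r(S,\Omega(\mathbf v))-\tfrac{\lambda}{n}q(S,\Omega(\mathbf v))\ge (1-\rho_{\Omega(\mathbf v)})\,r(S,\Omega(\mathbf v))\ge 0 .
\]
Submodularity of $f(\cdot,\Omega(\mathbf v))$ is a hypothesis here, so it need not be reproved. This is convenient, because $q$ is supermodular and the submodularity of $r-\frac{\lambda}{n}q$ is exactly what could fail under perturbation.

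Weak monotonicity at the set level I would redo explicitly, since it is the only genuinely computational step:
\[
f(S\cup T,\Omega(\mathbf v))\ge(1-\rho_{\Omega(\mathbf v)})\,r(S\cup T,\Omega(\mathbf v))\ge(1-\rho_{\Omega(\mathbf v)})\,r(S,\Omega(\mathbf v))\ge(1-\rho_{\Omega(\mathbf v)})\,f(S,\Omega(\mathbf v)).
\]
Here the middle inequality uses monotonicity of facility location under nonnegative similarities, and the last uses $q\ge 0$.

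The final step lifts these properties to the multilinear extension, using the same argument as the ``consequently'' part of Lemma~\ref{lem:multi-reso}. That argument depended only on non-negativity, submodularity and $m$-weak monotonicity of the set function. Non-negativity of $F$ is an expectation of nonnegative values. DR-submodularity on $[0,1]^n$ follows because $F$ is multilinear, so second partials $\partial^2F/\partial x_i\partial x_j=\mathbb E[f(R+i+j)-f(R+i)-f(R+j)+f(R)]\le 0$ by submodularity, and $F$ is linear in each coordinate. For weak monotonicity of $F$, the main obstacle is passing from sets to $F(\mathbf x\vee\mathbf y)\ge mF(\mathbf x)$. I would couple the random sets: draw a single $u\in[0,1]^n$ uniform and set $S=\{i:u_i<x_i\}$ and $T=\{i:u_i<y_i\}$. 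Then $S\cup T=\{i:u_i<(x\vee y)_i\}$ has exactly the distribution $S\sim\mathbf x\vee\mathbf y$. Taking expectations of the set inequality then gives the claim with $m_{\Omega(\mathbf v)}=1-\rho_{\Omega(\mathbf v)}$.
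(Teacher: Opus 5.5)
Your proposal is correct. It is essentially the argument the paper intends: the paper writes out no separate proof, and the lemma is Lemma~\ref{lem:multi-reso} re-run on the perturbed matrix $s(\mathbf v)$. Your handling of the degenerate case $r(S,\Omega(\mathbf v))=0$ and your uniform-coupling proof that weak monotonicity lifts to $F$ fill in details the paper only asserts.

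One side remark is backwards. Since $q$ is supermodular, $-\frac{\lambda}{n}q$ is submodular. So, exactly as in the proof of Lemma~\ref{lem:multi-reso}, the perturbed $f(\cdot,\Omega(\mathbf v))$ is automatically submodular for every admissible $\mathbf v$ (entrywise nonnegative similarities, $\lambda\ge 0$). The submodularity hypothesis is therefore redundant, not the thing that could fail.
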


Accordingly, the theoretical results in the following sections are stated for perturbations under which the perturbed objective remains within the same non-negative DR-submodular and weakly monotone structural class.

\subsection{Threat Model and Attack Surface}
\label{subsec:threat_model}

We consider a data-processing pipeline in which raw data are first converted into feature representations, a similarity structure is then constructed from these representations, and a continuous summarization module selects or weights representative items before the summarized data are used by downstream learning or decision modules. 
In this pipeline, the summarization module is an upstream component: it determines which information is retained and therefore can affect the reliability of subsequent tasks.

\subsubsection{Attack surface}

This paper focuses on similarity-level attacks against the summarization module. 
This perturbation model abstracts attacks on the representation or similarity-construction stage of the summarization pipeline. 
As formalized in Sec.~\ref{subsec:model}, the adversary applies an admissible perturbation $\mathbf v\in\mathcal P_p(\epsilon_p)$ to the upstream similarity structure used by the summarization objective, while all perturbed similarities remain valid scores in $[0,1]$. Note that the adversary does not directly choose the final summary, modify the summarization decision variable $\mathbf x$, or edit the rounded summary $S_k(\mathbf{x})$. 
Instead, the attack affects the summarizer indirectly by changing the objective on which the summary is optimized. The summarization algorithm is then executed on the perturbed objective, and the resulting summary is evaluated under the original clean objective to measure the quality loss caused by the upstream perturbation.

\subsubsection{Adversary's knowledge and capability}
We mainly study a white-box, gradient-access adversary. 
The adversary knows the summarization objective, the feasible set, and the perturbation budget, and can query the target summarization models to obtain function-value and gradient information. 
This setting is used to evaluate worst-case vulnerability of optimization-based summarization systems and provides an upper bound on the damage that a structure-aware adversary may cause. 
In the multi-target setting, the adversary aims to construct a single perturbation that degrades multiple target summarization models simultaneously, rather than generating a separate perturbation for each model.

\subsubsection{Defender's knowledge and objective}
The defender controls the summarization algorithm and seeks a solution that remains useful under possible perturbations of the similarity structure. 
Since the defender may not know in advance which type of perturbation will occur, we consider a mixed-attack setting. 
Each attack type $j\in[J]$ is associated with a feasible perturbation set $\mathcal P_j$. 
The defender therefore optimizes the summarization decision against the worst-case mixture of these perturbations. 
The goal is not to eliminate all possible performance loss, but to reduce attack-induced utility degradation while maintaining high clean-data summarization quality.

\subsubsection{Scope and limitations}

The proposed threat model is most appropriate for systems in which the similarity matrix, or the representation module used to construct it, is accessible or indirectly manipulable. 
We do not assume that every admissible entrywise similarity perturbation can be realized by imperceptible pixel-level changes to raw inputs. 
Instead, the perturbation set provides a controlled abstraction for analyzing how bounded changes to the upstream similarity structure affect the summarization solution and its clean-objective utility.

Fig.~\ref{fig:threat_model_attack_surface} summarizes the clean summarization pipeline, the similarity-level attack surface, the defender's robust protection mechanism, and the downstream evaluation protocol. \begin{figure*}[t]
\centering
\begin{tikzpicture}[
    font=\footnotesize,
    every node/.style={inner sep=4pt},
    pipebox/.style={
        rectangle,
        draw=black!70,
        line width=0.7pt,
        rounded corners=2pt,
        align=center,
        minimum height=0.82cm,
        text width=1.75cm,
        fill=gray!8
    },
    corebox/.style={
        rectangle,
        draw=black!80,
        line width=0.85pt,
        rounded corners=2pt,
        align=center,
        minimum height=0.92cm,
        text width=2.05cm,
        fill=gray!12
    },
    attackbox/.style={
        rectangle,
        draw=red!75!black,
        line width=0.8pt,
        rounded corners=2pt,
        align=center,
        minimum height=0.88cm,
        text width=2.05cm,
        fill=red!7
    },
    defensebox/.style={
        rectangle,
        draw=blue!70!black,
        line width=0.8pt,
        rounded corners=2pt,
        align=center,
        minimum height=0.88cm,
        text width=2.20cm,
        fill=blue!7
    },
    evalbox/.style={
        rectangle,
        draw=green!45!black,
        line width=0.75pt,
        rounded corners=2pt,
        align=center,
        minimum height=0.82cm,
        text width=2.15cm,
        fill=green!7
    },
    notebox/.style={
        rectangle,
        draw=black!45,
        line width=0.6pt,
        dashed,
        rounded corners=2pt,
        align=center,
        font=\scriptsize,
        minimum height=0.72cm,
        text width=2.75cm,
        fill=white
    },
    dataarrow/.style={
        -{Latex[length=3.2mm,width=2.2mm]},
        line width=0.9pt,
        draw=black!80,
    },
    attackarrow/.style={
        -{Latex[length=3.2mm,width=2.2mm]},
        line width=0.9pt,
        draw=red!75!black,
        dashed,
    },
    defensearrow/.style={
        -{Latex[length=3.2mm,width=2.2mm]},
        line width=0.9pt,
        draw=blue!70!black,
        dashed,
    },
    evalarrow/.style={
        -{Latex[length=3.2mm,width=2.2mm]},
        line width=0.9pt,
        draw=green!45!black,
        dashed,
    }
]

\node[pipebox] (raw) at (0,0) {Raw data\\$\Omega$};
\node[pipebox] (feat) at (2.55,0) {Feature\\vectors};
\node[pipebox] (sim) at (5.10,0) {Clean\\similarity\\$S$};
\node[corebox] (summ) at (8.00,0) {Continuous\\summarizer\\decision $\mathbf{x}$};
\node[pipebox] (summary) at (10.7,0) {Selected\\summary};
\node[evalbox] (downstream) at (13.45,0) {Downstream\\task};

\draw[dataarrow] (raw.east) -- (feat.west);
\draw[dataarrow] (feat.east) -- (sim.west);
\draw[dataarrow] (sim.east) -- (summ.west);
\draw[dataarrow] (summ.east) -- (summary.west);
\draw[dataarrow] (summary.east) -- (downstream.west);

\node[attackbox] (adv) at (0,2.35) {White-box\\adversary};
\node[attackbox] (query) at (3.15,2.35) {Queries\\values/gradients};
\node[attackbox, text width=2.35cm] (pert) at (6.10,2.35) {Admissible\\perturbation\\$\mathbf{v}\in\mathcal{P}_{p}(\epsilon_p)$};
\node[attackbox] (pertsim) at (9.05,2.35) {Perturbed\\similarity\\$S+\mathbf{v}$};

\node[notebox, text width=2.45cm] (valid) at (12.00,2.8) {Validity\\$0\le S+\mathbf{v}\le 1$};
\node[notebox, text width=2.45cm] (nodirect) at (12.00,1.6) {No direct edit of\\$\mathbf{x}$ or summary};

\draw[attackarrow] (adv.east) -- (query.west);
\draw[attackarrow] (query.east) -- (pert.west);
\draw[attackarrow] (pert.east) -- (pertsim.west);
\draw[attackarrow] (valid.south) -- (nodirect.north);

\draw[attackarrow] 
    (pertsim.south) 
   -- ++(0,-0.70)
    -| 
    (summ.north);
    
\draw[attackarrow] 
     (pertsim.north) 
     -- ++(0,0.7)
     -|
     (valid.north);

\draw[attackarrow] 
    (query.south) 
    -- ++(0,-0.7)
    -| 
    (summ.north);

\node[notebox, text width=2.95cm] (multi) at (1.45,-3.45)
{Multi-model setting:\\one perturbation affects\\multiple target summarizers};

\draw[attackarrow]
    (pert.south)
    -- ++(0,-0.35)
    -| 
    (multi.north);

\node[defensebox, text width=2.30cm] (sets) at (3.10,-2.15) {Mixed attack\\sets\\$\{\mathcal{P}_{j}\}_{j=1}^{J}$};
\node[defensebox, text width=2.55cm] (robust) at (6.25,-2.15) {Max--min\\robust solver\\$\max_{\mathbf{x}}\min_{\mathbf{w},\mathbf{v}^{j}}G$};
\node[defensebox, text width=2.30cm] (robx) at (9.4,-2.15) {Robust\\summary decision\\$\mathbf{x}^{\rm rob}$};

\draw[defensearrow] (sets.east) -- (robust.west);
\draw[defensearrow] (robust.east) -- (robx.west);

\draw[defensearrow] 
    (robx.north) 
    -- ++(0,0.65)
    -| 
    (summ.south);

\node[evalbox, text width=2.25cm] (clean_eval) at (10.5,-3.55) {Clean-objective\\evaluation};
\node[evalbox, text width=2.25cm] (task_eval) at (13.45,-3.55) {Task-level\\reliability};

\draw[evalarrow] (downstream.south) -- (task_eval.north);
\draw[evalarrow] (clean_eval.east) -- (task_eval.west);

\draw[evalarrow] 
    (summary.south) 
    -- ++(0,-0.20)
    -- ++(0.85,0)
    -- ++(0,-2.2)
    -- ++(-0.85,0)
    -- 
    (clean_eval.north);
\end{tikzpicture}
\caption{Threat model and attack surface for similarity-based continuous summarization. 
The clean pipeline constructs a similarity matrix \(S\), optimizes a continuous summarization decision \(\mathbf{x}\), and passes the selected summary to downstream tasks. 
The adversary has white-box query access and perturbs the upstream similarity structure through an admissible perturbation \(\mathbf{v}\in\mathcal{P}_{p}(\epsilon_p)\), yielding \(S+\mathbf{v}\), but does not directly edit \(\mathbf{x}\) or the final summary. 
The defender solves a max--min robust summarization problem over mixed perturbation sets \(\{\mathcal{P}_j\}_{j=1}^{J}\). 
Both summarization utility and downstream task reliability are evaluated to connect optimization-level degradation with trustworthy-AI pipeline reliability.}
\label{fig:threat_model_attack_surface}
\end{figure*}

\subsection{Attack and Defense Formulations}
\label{subsec:attack_defense_formulations}

Based on the threat model above, we now formalize the attack and defense problems. On the attack side, the adversary searches for a bounded perturbation of the similarity structure that causes the summarization algorithm to output lower-quality solutions under the clean objective. On the defense side, the model owner seeks a summarization solution that remains stable and useful under mixed perturbation types.

\subsubsection{Multi-Target Attack Formulation}

Suppose there are $I$ target summarization models. To generate a single perturbation that degrades all target models simultaneously, we consider
\[
\min_{\mathbf{v}\in\mathcal{P},\,t} t
\qquad
\text{s.t.}\quad
\max_{\mathbf{x}^{i}\in\mathcal{X}_{i}}F_{i}(\mathbf{v},\mathbf{x}^{i})\le t,
\quad \forall i\in[I],
\]
where $\mathbf{v}\in\mathcal{P}$ is the perturbation applied to the similarity structure and $\mathbf{x}^{i}\in\mathcal{X}_{i}$ is the summarization decision for the $i$-th model. To obtain a tractable formulation, we introduce the weight vector $\mathbf{w}$ and define the min-max convex-submodular problem as
\begin{equation}
\label{eq:attck_problem}
\min_{\mathbf{v}\in\mathcal{P}}
\max_{\mathbf{w}\in\mathcal{W},\,\mathbf{x}^{i}\in\mathcal{X}_{i}}
\phi(\mathbf{v},\mathbf{w},\{\mathbf{x}^{i}\}),
\end{equation}
where $\phi(\mathbf{v},\mathbf{w},\{\mathbf{x}^{i}\})=
\sum_{i=1}^{I}\mathbf{w}_{i}F_{i}(\mathbf{v},\mathbf{x}^{i})$, and $\mathcal{W}=
\left\{
\mathbf{w}\ \middle|\ \mathbf{1}^{T}\mathbf{w}=1,\ \mathbf{w}_{i}\in[0,1],\ \forall i
\right\}$. Here, $F_i(\mathbf{v},\mathbf{x}^{i})$ denotes the objective value of the $i$-th attacked summarization model defined on the perturbed dataset $\Omega_i(\mathbf{v})$. We use the following approximation notion for the attack problem. We say that $\bar{\mathbf v}$ is an $(\alpha,\epsilon)$-approximation min-max solution of~\eqref{eq:attck_problem} if
\[
\alpha
\max_{\mathbf{w}\in\mathcal{W},\,\mathbf{x}^{i}\in\mathcal{X}_{i}}
\phi(\bar{\mathbf{v}},\mathbf{w},\{\mathbf{x}^{i}\})
\le
{\rm OPT}_{\rm minmax}+\epsilon,
\]
where ${\rm OPT}_{\rm minmax}=
\min_{\mathbf{v}\in\mathcal{P}}
\max_{\mathbf{w}\in\mathcal{W},\,\mathbf{x}^{i}\in\mathcal{X}_{i}}
\phi(\mathbf{v},\mathbf{w},\{\mathbf{x}^{i}\}).$

\subsubsection{Robust Defense Formulation}

We now turn to the defender's perspective. Since the model owner may face multiple possible attack types and may not know in advance which one will occur, we seek a robust solution that performs well against mixed attacks. Let $\mathbf{v}^{j}\in\mathcal{P}_{j}$ denote the perturbation associated with the $j$-th attack type, where $j\in[J]$. We then consider the following regularized max-min submodular-convex formulation:
\begin{equation}
\label{eq:robust_problem}
\max_{\mathbf{x}\in\mathcal{X}}
\min_{\mathbf{w}\in\mathcal{W},\,\mathbf{v}^{j}\in\mathcal{P}_{j}}
G(\mathbf{x},\mathbf{w},\{\mathbf{v}^{j}\}),
\end{equation}
where
\begin{eqnarray*}
G(\mathbf{x},\mathbf{w},\{\mathbf{v}^j\})
&=&
\sum_{j=1}^J w_j F(\mathbf{x},\mathbf{v}^j)\\
&&+
\frac{\lambda_v}{2}\sum_{j=1}^J \|\mathbf{v}^j\|_2^2
+
\frac{\gamma}{2}\left\|\mathbf{w}-\frac{1}{J}\mathbf{1}\right\|_2^2.
\end{eqnarray*}

Here, for perturbations under which the perturbed objective remains in the same structural class, $F(\mathbf{x},\mathbf{v}^{j})$ is non-negative, $m$-weakly monotone, and DR-submodular with respect to $\mathbf{x}$, while $\mathbf{w}$ lies in the simplex set
\[
\mathcal{W}
=
\left\{
\mathbf{w}\ \middle|\ \mathbf{1}^{T}\mathbf{w}=1,\ w_{j}\in[0,1],\ \forall j\in[J]
\right\}.
\]
The regularization parameter $\lambda_v>0$ is introduced to stabilize the inner minimization with respect to the perturbation variables and to induce strong convexity in $\mathbf{v}^{j}$. The regularization parameter $\gamma>0$ is introduced to avoid over-concentration on any single attack type and to promote balanced robustness across mixed attacks.

We use the following approximation notion for the robust defense problem. We say that $\bar{\mathbf{x}}$ is a $(\beta,\epsilon)$-approximation max-min solution of~\eqref{eq:robust_problem} if
\[
\min_{\mathbf{w}\in\mathcal{W},\,\mathbf{v}^{j}\in\mathcal{P}_{j}}
G(\bar{\mathbf{x}},\mathbf{w},\{\mathbf{v}^{j}\})
\ge
\beta\,{\rm OPT}_{\rm maxmin}-\epsilon,
\]
where ${\rm OPT}_{\rm maxmin}=
\max_{\mathbf{x}\in\mathcal{X}}
\min_{\mathbf{w}\in\mathcal{W},\,\mathbf{v}^{j}\in\mathcal{P}_{j}}
G(\mathbf{x},\mathbf{w},\{\mathbf{v}^{j}\})$.

\section{Adversarial Attack Generation against Multiple Summarization Models}\label{sec:attack_gener}
In this section, we study multi-target adversarial attack generation for continuous summarization models, with the goal of characterizing adversarial vulnerability in an upstream data summarization component of trustworthy AI pipelines. We first present a greedy maximization subroutine for $m$-weakly monotone DR-submodular objectives, and then build a multi-target attack algorithm on top of it. Detailed proofs are deferred to the Appendix.
\subsection{Greedy Maximization Subroutine}\label{subsec:appro}

For a fixed perturbation $\mathbf{v}$, we consider the problem
\[
\max_{\mathbf{x}\in \mathcal{X}}F(\mathbf{v},\mathbf{x}),
\]
where $\mathcal{X}$ is a down-closed convex set with diameter $D$. To solve this problem, we introduce a continuous greedy-type procedure, denoted by $\mathcal{M}_{\rm greedy}$. Since $F(\mathbf{v},\mathbf{x})$ is only $m$-weakly monotone rather than fully monotone, negative gradient coordinates may decrease the objective value. We therefore use the clipped gradient
\[
g(\mathbf{x}_{t}) := \nabla_{\mathbf{x}}F(\mathbf{v},\mathbf{x}_{t})\vee \mathbf{0},
\]
which removes descent directions. As a result, the update direction only keeps coordinates corresponding to nonnegative partial derivatives, and the following identity holds:
\begin{equation}
\label{eq:pre_equ}
\left\langle \nabla_{\mathbf{x}} F(\mathbf{v},\mathbf{x}_{t}),\mathbf{x}_{t+1}-\mathbf{x}_{t}\right\rangle
=
\left\langle g(\mathbf{x}_{t}),\mathbf{x}_{t+1}-\mathbf{x}_{t}\right\rangle.
\end{equation}
This relation is central to the approximation analysis.

\begin{algorithm}[t!]
\small
\caption{General Continuous Greedy Algorithm $\mathcal{M}_{\rm greedy}$}
\begin{algorithmic}[1] \label{alg:Mgreedy}
    \renewcommand{\algorithmicrequire}{\textbf{Input:}}
    \REQUIRE $F(\mathbf{v},\mathbf{x}), \mathcal{X}, T$, and $\mathbf{x}_{0}=\mathbf{0}$
    \FOR {$t=0,\ldots,T-1$}
        \STATE Compute
        \[
        \bar{\mathbf{d}}_{t}
        =
        \arg \max_{\mathbf{d}\in \mathcal{X}}
        \langle g(\mathbf{x}_{t}),\mathbf{d} \rangle,
        \qquad
        g(\mathbf{x}_{t})=\nabla_{\mathbf{x}}F(\mathbf{v},\mathbf{x}_{t})\vee \mathbf{0}
        \]
        \STATE Direction clipping:
        \[
        [\mathbf{d}_{t}]_{s}
        =
        \begin{cases}
        [\bar{\mathbf{d}}_{t}]_{s}, & s\in \{s:[\nabla F(\mathbf{v},\mathbf{x}_{t})]_{s}\ge 0\},\\
        0, & \text{otherwise}
        \end{cases}
        \]
        \STATE $\mathbf{x}_{t+1}=\mathbf{x}_{t}+\frac{1}{T}\mathbf{d}_{t}$
    \ENDFOR
    \STATE Return $\mathbf{x}_{T}$
\end{algorithmic}
\end{algorithm}

We can now state the approximation guarantee of $\mathcal{M}_{\rm greedy}$.

\begin{theorem} [Proof in Sec.~\ref{supp:proofs}]
\label{thm:approximation}
Let $F(\mathbf{v},\mathbf{x})$ be a nonnegative, $L$-smooth, DR-submodular function that is also $m$-weakly monotone. Then the output $\mathbf{x}_{T}$ of Alg.~\ref{alg:Mgreedy} satisfies
\[
F(\mathbf{v},\mathbf{x}_{T})
\ge
m(1-1/e)\max_{\mathbf{x}\in\mathcal{X}}F(\mathbf{v},\mathbf{x})-\epsilon
\]
after $\mathcal{O}(LD^{2}/\epsilon)$ iterations.
\end{theorem}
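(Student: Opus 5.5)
The plan is to run the standard Frank--Wolfe / continuous-greedy potential argument, with two changes: the clipped gradient takes the place of the true gradient, and $m$-weak monotonicity takes the place of monotonicity at the single point where the comparison with the optimum is made. Fix $\mathbf v$ and write $F(\mathbf x)=F(\mathbf v,\mathbf x)$. Let $\mathbf x^*\in\arg\max_{\mathcal X}F$ and set $\mathrm{OPT}=F(\mathbf x^*)$.

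\emph{Feasibility.} First I check that every iterate is feasible. Since $\mathcal X$ is down-closed and $\bar{\mathbf d}_t\in\mathcal X$, the clipped direction satisfies $\mathbf 0\le\mathbf d_t\le\bar{\mathbf d}_t$, so $\mathbf d_t\in\mathcal X$. Then $\mathbf x_T=\frac1T\sum_t\mathbf d_t$ is a convex combination of points of $\mathcal X$, hence lies in $\mathcal X$. The same holds for every intermediate $\mathbf x_t$, since $\mathbf 0\in\mathcal X$.

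\emph{One-step progress.} By $L$-smoothness,
\[
F(\mathbf x_{t+1})\ge F(\mathbf x_t)+\tfrac1T\langle\nabla F(\mathbf x_t),\mathbf d_t\rangle-\tfrac{L}{2T^2}\|\mathbf d_t\|^2 .
\]
By the identity \eqref{eq:pre_equ} and the clipping rule, $\langle\nabla F(\mathbf x_t),\mathbf d_t\rangle=\langle g(\mathbf x_t),\bar{\mathbf d}_t\rangle$: the coordinates with negative partial derivative contribute $0$ on both sides. By the choice of $\bar{\mathbf d}_t$ as a maximizer and $\mathbf x^*\in\mathcal X$,
\[
\langle g(\mathbf x_t),\bar{\mathbf d}_t\rangle\ge\langle g(\mathbf x_t),\mathbf x^*\rangle\ge\langle\nabla F(\mathbf x_t),(\mathbf x^*\vee\mathbf x_t)-\mathbf x_t\rangle .
\]
The second inequality holds because $(\mathbf x^*\vee\mathbf x_t)-\mathbf x_t\le\mathbf x^*$, $g\ge0$, and $g\ge\nabla F$ coordinatewise. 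DR-submodularity makes $F$ concave along nonnegative directions, so the last quantity is at least $F(\mathbf x^*\vee\mathbf x_t)-F(\mathbf x_t)$.

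\emph{Weak monotonicity.} Now apply Definition~\ref{def:m-weak} with $\mathbf x=\mathbf x^*$ and $\mathbf y=\mathbf x_t$, both feasible. This gives $F(\mathbf x^*\vee\mathbf x_t)\ge m\,\mathrm{OPT}$. Bounding $\|\mathbf d_t\|\le D$ (taking $D$ as the diameter with $\mathbf 0\in\mathcal X$), we get the recursion
\[
m\,\mathrm{OPT}-F(\mathbf x_{t+1})\le(1-\tfrac1T)\bigl(m\,\mathrm{OPT}-F(\mathbf x_t)\bigr)+\tfrac{LD^2}{2T^2}.
\]

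\emph{Unrolling.} Iterating $T$ times from $F(\mathbf x_0)=F(\mathbf 0)\ge0$ and using $(1-1/T)^T\le 1/e$ gives
\[
F(\mathbf x_T)\ge m(1-1/e)\,\mathrm{OPT}-\frac{LD^2}{2T}.
\]
Choosing $T=\lceil LD^2/(2\epsilon)\rceil=\mathcal O(LD^2/\epsilon)$ yields the claim.

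\emph{Main obstacle.} The delicate step is applying $m$-weak monotonicity to $\mathbf x^*\vee\mathbf x_t$. That point may leave $\mathcal X$, while Definition~\ref{def:m-weak} is stated only for $\mathbf x,\mathbf y\in\mathcal X$; I read it as a guarantee on $f(\mathbf x\vee\mathbf y)$ itself, so only $\mathbf x,\mathbf y$ need to be feasible. I also need $F$ to be defined, DR-submodular and smooth on a box containing $\mathbf x^*\vee\mathbf x_t$, which in the application is $[0,1]^n$ by Lemma~\ref{lem:multi-reso}. If that reading fails, the fallback is to run the argument with the vector $(\mathbf x^*-\mathbf x_t)\vee\mathbf 0$, which lies in $\mathcal X$ by down-closedness, so the same comparison goes through. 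The other point to verify carefully is that clipping does not lose the linear-term comparison, which is exactly what \eqref{eq:pre_equ} provides.
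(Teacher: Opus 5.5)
Your proposal is correct and follows essentially the same route as the paper's proof. Both arguments use feasibility via down-closedness, the $L$-smoothness lower bound combined with the clipping identity \eqref{eq:pre_equ}, the LMO comparison against $\mathbf{x}^*$, the bound $\mathbf{x}^*\vee\mathbf{x}_t-\mathbf{x}_t\le\mathbf{x}^*$ with $g\ge 0$, concavity along nonnegative directions, $m$-weak monotonicity at $\mathbf{x}^*\vee\mathbf{x}_t$, and unrolling the recursion from $F(\mathbf{x}_0)\ge 0$. The paper likewise evaluates $F$ at $\mathbf{x}^*\vee\mathbf{x}_t$ without comment, and it writes $(1-1/T)^T\ge 1/e$ where your $\le$ is the correct direction. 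Your fallback does not actually avoid that point, since $\mathbf{x}_t+\bigl((\mathbf{x}^*-\mathbf{x}_t)\vee\mathbf{0}\bigr)=\mathbf{x}^*\vee\mathbf{x}_t$. So the argument genuinely relies on $F$ being defined, smooth and DR-submodular on a box such as $[0,1]^n$ containing that point, exactly as you note in your first reading.
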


Assuming the availability of gradient evaluations and a linear maximization oracle (LMO), we obtain the following oracle complexity.

\begin{corollary}[Oracle complexity of $\mathcal{M}_{\rm greedy}$]
\label{cor:complexity-MGreedy}
Under the same assumptions as Theorem~\ref{thm:approximation}, Alg.~\ref{alg:Mgreedy} achieves an $(m(1-e^{-1}),\epsilon)$-approximation using $\mathcal{O}(\epsilon^{-1})$ gradient evaluations and $\mathcal{O}(\epsilon^{-1})$ LMO calls.
\end{corollary}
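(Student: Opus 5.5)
The corollary is a direct accounting of oracle calls on top of Theorem~\ref{thm:approximation}. First, read off from the theorem that running Alg.~\ref{alg:Mgreedy} with $T=\mathcal{O}(LD^{2}/\epsilon)$ iterations yields
\[
F(\mathbf{v},\mathbf{x}_{T})\ge m(1-e^{-1})\max_{\mathbf{x}\in\mathcal{X}}F(\mathbf{v},\mathbf{x})-\epsilon .
\]
By the definition of an $(\alpha,\epsilon)$-approximation in Sec.~\ref{sec:preli}, this is exactly an $(m(1-e^{-1}),\epsilon)$-approximation. Since $L$ and $D$ are fixed problem constants (smoothness and the diameter of $\mathcal{X}$), we get $T=\mathcal{O}(\epsilon^{-1})$.

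Next, count the oracle calls in a single iteration $t$. Line~2 needs one gradient $\nabla_{\mathbf{x}}F(\mathbf{v},\mathbf{x}_t)$. From it, the clipped vector $g(\mathbf{x}_t)=\nabla_{\mathbf{x}}F(\mathbf{v},\mathbf{x}_t)\vee\mathbf{0}$ is an entrywise maximum that needs no further oracle access. Line~2 also solves $\arg\max_{\mathbf{d}\in\mathcal{X}}\langle g(\mathbf{x}_t),\mathbf{d}\rangle$, which is one LMO call. Line~3 (direction clipping) reuses the sign pattern of the gradient already computed, and line~4 is a vector addition; neither queries an oracle. So each iteration uses exactly one gradient evaluation and one LMO call, and multiplying by $T$ gives $\mathcal{O}(\epsilon^{-1})$ of each.

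One well-definedness point needs checking: the iterates must stay in $\mathcal{X}$, so that gradients are always taken at feasible points. Each clipped direction satisfies $\mathbf{0}\le\mathbf{d}_t\le\bar{\mathbf{d}}_t\in\mathcal{X}$ (the LMO output can be taken nonnegative because $g\ge 0$ and $\mathcal{X}$ is down-closed), so $\mathbf{d}_t\in\mathcal{X}$ by down-closedness. Then $\mathbf{x}_T=\frac{1}{T}\sum_t\mathbf{d}_t$ is a convex combination of points in $\mathcal{X}$ together with $\mathbf{0}$, hence feasible by convexity; the same argument shows every intermediate $\mathbf{x}_t\in\mathcal{X}$.

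There is no substantive obstacle, since all the analytic work is in Theorem~\ref{thm:approximation}. The only care needed is to state that the $\mathcal{O}(\cdot)$ hides the factor $LD^{2}$, and that the clipping in line~3 adds no oracle cost.
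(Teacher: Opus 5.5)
Your proposal is correct and matches the paper's reasoning. The paper gives no separate proof of this corollary; it follows from Theorem~\ref{thm:approximation} by taking $T=\mathcal{O}(LD^{2}/\epsilon)$ iterations, each using exactly one gradient evaluation and one LMO call, with $LD^{2}$ absorbed into the $\mathcal{O}(\cdot)$. Your feasibility check via down-closedness and convexity is the same argument the paper uses at the start of its proof of Theorem~\ref{thm:approximation}.
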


\subsection{Multi-Target Attack Algorithm}

We now build a multi-target attack algorithm on top of $\mathcal{M}_{\rm greedy}$. At each outer iteration, the algorithm approximately solves the inner maximization over $\{\mathbf{x}^i\}$ using $\mathcal{M}_{\rm greedy}$, updates the model weights $\mathbf{w}$ by maximizing the current aggregated objective, and then performs a projected gradient step on the perturbation variable $\mathbf{v}$.

\begin{algorithm}[t!]
\small
\caption{Minmax Convex-Submodular Approximation Algorithm}
\begin{algorithmic}[1]\label{alg:attack}
\renewcommand{\algorithmicrequire}{\textbf{Input:}}
\REQUIRE $\mathbf{v}_{0}, \mathbf{w}_{0}$, iteration number $K$, subroutine $\mathcal{M}_{\rm greedy}$, and constraint sets $\mathcal{P},\mathcal{W},\mathcal{X}_{i}$
    \FOR {$k=0, \ldots, K-1$}
        \STATE For $i=1,\ldots,I$, run
        \[
        \mathbf{x}^{i}_{k+1}
        =
        \mathcal{M}_{\rm greedy}(F_{i}(\mathbf{v}_{k},\mathbf{x}^{i}),\mathcal{X}_{i},T)
        \]
        \STATE Update \ 
        $
        \mathbf{w}_{k+1}
        =
        \arg\max_{\mathbf{w}\in\mathcal{W}}
        \phi(\mathbf{v}_{k},\mathbf{w},\{\mathbf{x}^{i}_{k+1}\})
        $
        \STATE Update
        \[
        \mathbf{v}_{k+1}
        =
        {\rm proj}_{\mathcal{P}}
        \Bigl(
        \mathbf{v}_{k}
        -
        \eta \nabla_{\mathbf{v}}
        \phi(\mathbf{v}_{k},\mathbf{w}_{k+1},\{\mathbf{x}^{i}_{k+1}\})
        \Bigr)
        \]
    \ENDFOR
    \STATE Return $\bar{\mathbf{v}}=\frac{1}{K}\sum_{k=1}^{K}\mathbf{v}_{k}$
\end{algorithmic}
\end{algorithm}

For each iterate $\mathbf{v}_{k}$, Alg.~\ref{alg:attack} first computes approximate maximizers $\{\mathbf{x}^{i}_{k+1}\}$ for all target models, then updates the weight vector $\mathbf{w}_{k+1}$ to emphasize the currently worst-performing models, and finally updates the perturbation variable $\mathbf{v}_{k+1}$ by a projected gradient step over $\mathcal{P}$. Here the Euclidean projection operator is defined by
\[
{\rm proj}_{\mathcal{P}}(\mathbf{a})
=
\arg\min_{\mathbf{x}\in\mathcal{P}}\|\mathbf{x}-\mathbf{a}\|_{2}^{2}.
\]

We impose the following regularity conditions for the attack analysis.
\begin{assumption}
\label{ass:attack}
For each target model, $F_i(\mathbf{v},\mathbf{x}^i)$ is differentiable, $M$-Lipschitz continuous in $(\mathbf{v},\mathbf{x}^i)$, $L$-smooth with respect to $\mathbf{x}^i$, and convex with respect to $\mathbf{v}$. Moreover, each $\mathcal{X}_i$ is a down-closed convex set with diameter $D_{\mathcal{X}_i}$, and $\mathcal{P}$ is a compact convex set with diameter $D_{\mathcal{P}}$.
\end{assumption}

\subsection{Approximation Guarantee and Oracle Complexity}

The following guarantee is conditional on convexity of the attacked objective with respect to the variable $\mathbf{v}$ for Alg.~\ref{alg:attack}.

\begin{theorem} [Proof in Sec.~\ref{supp:proofs}]
\label{thm:attack}
For adversarial sample generation across multiple smooth DR-submodular maximization models that are $m$-weakly monotone, under Assumption~\ref{ass:attack} and with $\eta = 1/\sqrt{K}$, the output $\bar{\mathbf{v}}$ of Alg.~\ref{alg:attack} with subroutine $\mathcal{M}_{\rm greedy}$ satisfies
\[
m(1-1/e)
\max_{\mathbf{w}\in\mathcal{W},\,\mathbf{x}^{i}\in\mathcal{X}_{i}}
\phi(\bar{\mathbf{v}},\mathbf{w},\{\mathbf{x}^{i}\})
\le
{\rm OPT}_{\rm minmax}+\epsilon
\]
after $\mathcal{O}(D_{\mathcal{P}}/\epsilon^{2})$ outer iterations in Alg.~\ref{alg:attack} and $\mathcal{O}(LD^{2}/\epsilon)$ inner iterations in $\mathcal{M}_{\rm greedy}$.
\end{theorem}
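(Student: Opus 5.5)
The argument combines an online projected subgradient bound for the outer variable $\mathbf v$ with the per-iterate inner guarantee of Theorem~\ref{thm:approximation}. This follows the template of Adibi et al.\ for convex-submodular min-max problems.

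\textbf{Step 1 (the aggregate is convex in $\mathbf v$).} Define
\[
h_k(\mathbf v)=\phi(\mathbf v,\mathbf w_{k+1},\{\mathbf x^i_{k+1}\})=\sum_i w_{k+1,i}F_i(\mathbf v,\mathbf x^i_{k+1}).
\]
Under Assumption~\ref{ass:attack}, $h_k$ is convex in $\mathbf v$ and $M$-Lipschitz, because it is a convex combination of such functions. Step 4 of Alg.~\ref{alg:attack} is exactly online projected gradient descent on the sequence $h_k$ over $\mathcal P$. The standard regret bound with $\eta=1/\sqrt K$ then gives, for any comparator $\mathbf v^*\in\mathcal P$,
\[
\frac1K\sum_{k}h_k(\mathbf v_k)-\frac1K\sum_k h_k(\mathbf v^*)\le \frac{D_{\mathcal P}^2+M^2}{2\sqrt K}.
\]
We take $\mathbf v^*$ to be the minimizer attaining ${\rm OPT}_{\rm minmax}$. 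Since $h_k(\mathbf v^*)\le \max_{\mathbf w,\mathbf x}\phi(\mathbf v^*,\cdot)={\rm OPT}_{\rm minmax}$, we get
\[
\frac1K\sum_k h_k(\mathbf v_k)\le{\rm OPT}_{\rm minmax}+\mathcal O(1/\sqrt K).
\]
There is an off-by-one issue: the average $\bar{\mathbf v}$ uses $\mathbf v_1,\dots,\mathbf v_K$, while $h_k$ is evaluated at $\mathbf v_k$ starting from $k=0$. This shifted sum is absorbed by the Lipschitz constant, which contributes an extra $\mathcal O(M^2\eta)$ term.

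\textbf{Step 2 (lower-bound each $h_k(\mathbf v_k)$ by the inner optimum).}
\begin{itemize}
\item By Theorem~\ref{thm:approximation}, for each $i$ we have $F_i(\mathbf v_k,\mathbf x^i_{k+1})\ge m(1-1/e)\max_{\mathbf x^i}F_i(\mathbf v_k,\mathbf x^i)-\epsilon'$, with $T=\mathcal O(LD^2/\epsilon')$.
\item Since $\mathbf w_{k+1}$ maximizes the linear function $\sum_i w_iF_i(\mathbf v_k,\mathbf x^i_{k+1})$ over the simplex, $h_k(\mathbf v_k)=\max_iF_i(\mathbf v_k,\mathbf x^i_{k+1})$.
\item Hence $h_k(\mathbf v_k)\ge m(1-1/e)\max_i\max_{\mathbf x^i}F_i(\mathbf v_k,\mathbf x^i)-\epsilon'$.
\item The right-hand side equals $m(1-1/e)\,\Psi(\mathbf v_k)-\epsilon'$, where $\Psi(\mathbf v):=\max_{\mathbf w,\{\mathbf x^i\}}\phi(\mathbf v,\mathbf w,\{\mathbf x^i\})$, because the maximum of a linear function over the simplex is attained at a vertex.
\end{itemize}

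\textbf{Step 3 (pass to $\bar{\mathbf v}$).} The function $\Psi$ is a pointwise maximum of functions convex in $\mathbf v$, so it is convex. Jensen's inequality gives $\Psi(\bar{\mathbf v})\le\frac1K\sum_k\Psi(\mathbf v_k)$. Chaining with Steps 1 and 2:
\[
m(1-1/e)\Psi(\bar{\mathbf v})\le {\rm OPT}_{\rm minmax}+\epsilon'+\mathcal O\!\left(\frac{D_{\mathcal P}^2+M^2}{\sqrt K}\right).
\]
Choosing $\epsilon'=\epsilon/2$ and $K=\mathcal O(\epsilon^{-2})$ yields the claimed iteration counts. The stated $D_{\mathcal P}$ dependence is read loosely, treating $M$ and $D_{\mathcal P}$ as constants inside the $\mathcal O$.

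\textbf{Main obstacle.} The delicate point is Step 2. Theorem~\ref{thm:approximation} must apply uniformly at every iterate $\mathbf v_k$, which requires each $F_i(\mathbf v_k,\cdot)$ to stay in the nonnegative, $m$-weakly monotone, DR-submodular class with a common $m$. By Lemma~\ref{lem:attacked-structure}, this holds when $m$ is taken as $\min_{\mathbf v\in\mathcal P,\,i}m_{\Omega_i(\mathbf v)}$. We state this uniformity as part of the standing structural assumption.
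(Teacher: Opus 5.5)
Your proposal is correct and follows essentially the same route as the paper. The paper sums the one-step projected-gradient inequality of Lemma~\ref{lem:attack_key}, which is exactly your online-gradient regret bound; it then lower-bounds $\phi(\mathbf{v}_k,\mathbf{w}_{k+1},\{\mathbf{x}^i_{k+1}\})$ via Theorem~\ref{thm:approximation} and the maximality of $\mathbf{w}_{k+1}$, passes to $\bar{\mathbf{v}}$ by convexity, and takes the comparator to be the min-max minimizer, and your treatment of the index shift in $\bar{\mathbf{v}}$, the $D_{\mathcal P}^2$ term, and the sign of the $M^2$ term is more careful than the paper's write-up.
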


Assuming the availability of gradient evaluations, an LMO, and a projection oracle over convex sets, we obtain the following complexity bound.

\begin{corollary}[Oracle complexity]
\label{cor:attack}
Under the same conditions as Theorem~\ref{thm:attack}, Alg.~\ref{alg:attack} achieves an $(m(1-1/e),\epsilon)$-approximation min-max solution with $\mathcal{O}(I\epsilon^{-3}+\epsilon^{-2})$ gradient evaluations, $\mathcal{O}(I\epsilon^{-3}+\epsilon^{-2})$ LMO calls, and $\mathcal{O}(\epsilon^{-2})$ projection-oracle calls.
\end{corollary}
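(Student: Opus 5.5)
The plan is to count oracle calls by multiplying the iteration counts in Theorem~\ref{thm:attack} by the per-iteration cost of each step of Alg.~\ref{alg:attack}.

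We fix the target accuracy $\epsilon$ and take the parameters from Theorem~\ref{thm:attack}:
\begin{itemize}
\item the number of outer iterations is $K=\mathcal{O}(D_{\mathcal{P}}/\epsilon^{2})=\mathcal{O}(\epsilon^{-2})$, treating $D_{\mathcal{P}}$ as a constant;
\item each call to $\mathcal{M}_{\rm greedy}$ uses $T=\mathcal{O}(LD^{2}/\epsilon)=\mathcal{O}(\epsilon^{-1})$ inner iterations, treating $L$ and $D=\max_i D_{\mathcal{X}_i}$ as constants.
\end{itemize}
With these choices, Theorem~\ref{thm:attack} already gives the $(m(1-1/e),\epsilon)$-approximation min-max guarantee for $\bar{\mathbf v}$. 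What remains is pure accounting.

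For a single outer iteration $k$, we go through the three steps of Alg.~\ref{alg:attack} in order.
\begin{itemize}
\item \textbf{Step 2} runs $\mathcal{M}_{\rm greedy}$ once for each of the $I$ target models. By Corollary~\ref{cor:complexity-MGreedy}, each run costs $\mathcal{O}(\epsilon^{-1})$ gradient evaluations of $F_i(\mathbf v_k,\cdot)$ and $\mathcal{O}(\epsilon^{-1})$ LMO calls over $\mathcal{X}_i$. The step therefore costs $\mathcal{O}(I\epsilon^{-1})$ of each.
\item \textbf{Step 3} maximizes a linear function of $\mathbf w$ over the simplex $\mathcal{W}$. It only needs the $I$ function values $F_i(\mathbf v_k,\mathbf x^i_{k+1})$ and can be solved in closed form by putting all weight on the largest value. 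It needs no gradient, LMO or projection call beyond $\mathcal{O}(1)$ bookkeeping.
\item \textbf{Step 4} needs $\nabla_{\mathbf v}\phi=\sum_i w_{k+1,i}\nabla_{\mathbf v}F_i$, which we count as $\mathcal{O}(1)$ aggregated gradient evaluations, and exactly one projection onto $\mathcal{P}$.
\end{itemize}

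Summing over the $K$ outer iterations gives:
\begin{itemize}
\item gradient evaluations: $K\cdot(\mathcal{O}(I\epsilon^{-1})+\mathcal{O}(1))=\mathcal{O}(I\epsilon^{-3}+\epsilon^{-2})$;
\item LMO calls: the $\mathcal{O}(I\epsilon^{-3})$ from Step 2, which is covered by the stated $\mathcal{O}(I\epsilon^{-3}+\epsilon^{-2})$ bound;
\item projection calls: $K=\mathcal{O}(\epsilon^{-2})$.
\end{itemize}

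The only delicate point is the cost convention for Step 4. The stated additive term $\epsilon^{-2}$, rather than $I\epsilon^{-2}$, requires treating $\nabla_{\mathbf v}\phi$ as one gradient-oracle query for the weighted sum. If instead each $\nabla_{\mathbf v}F_i$ is counted separately, Step 4 costs $\mathcal{O}(I)$ per iteration. That contributes $\mathcal{O}(I\epsilon^{-2})$ in total, which is still dominated by $I\epsilon^{-3}$, so the stated bound holds under either convention.

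We should also check that the accuracy of the inner $\mathcal{M}_{\rm greedy}$ calls need not be refined as $K$ grows. This is fine because Theorem~\ref{thm:attack} is stated with the inner accuracy $\mathcal{O}(LD^{2}/\epsilon)$ fixed independently of $k$.
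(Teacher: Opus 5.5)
Your proposal is correct and is the same counting argument the paper relies on (the paper states the corollary without a separate proof): $K=\mathcal{O}(\epsilon^{-2})$ outer iterations, each with $I$ runs of $\mathcal{M}_{\rm greedy}$ costing $T=\mathcal{O}(\epsilon^{-1})$ gradient and LMO calls apiece, plus one $\mathbf{v}$-gradient and one projection. The additive $\epsilon^{-2}$ in the stated LMO bound is most naturally read as counting Step 3 (linear maximization over $\mathcal{W}$) as one LMO call per outer iteration; also, since your closed-form $\mathbf{w}_{k+1}$ is a vertex of the simplex, Step 4 needs only the single gradient $\nabla_{\mathbf v}F_{i^*}$, so the convention issue you flag as ``delicate'' does not actually arise.
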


The guarantee simplifies in the single-model monotone case.

\begin{remark}
When $I=1$ and $m=1$, the weight variable disappears and the formulation reduces to
\[
\min_{\mathbf{v}\in\mathcal{P}}\max_{\mathbf{x}\in\mathcal{X}}F(\mathbf{v},\mathbf{x}).
\]
In this special case, our method yields a $((1-1/e),\epsilon)$-approximation guarantee with $\mathcal{O}(\epsilon^{-3})$ gradient evaluations, $\mathcal{O}(\epsilon^{-3})$ LMO calls, and $\mathcal{O}(\epsilon^{-2})$ projection-oracle calls.
\end{remark}

\subsection{Proof Sketch}

The guarantee of Theorem~\ref{thm:attack} follows from combining an approximate inner maximization step with a projected outer update on the perturbation variable. First, by Theorem~\ref{thm:approximation}, the subroutine $\mathcal{M}_{\rm greedy}$ provides an $m(1-1/e)$-approximate solution for each inner DR-submodular maximization problem. Second, the projected gradient update on $\mathbf{v}$ yields a descent-type inequality for the min-max objective through the convexity of $\phi(\cdot,\mathbf{w},\{\mathbf{x}^{i}\})$ and the projection step.

Specifically, the gradient of $\phi$ with respect to $\mathbf{v}$ is
\[
\nabla_{\mathbf{v}}\phi(\mathbf{v},\mathbf{w},\{\mathbf{x}^{i}\})
=
\sum_{i=1}^{I}\mathbf{w}_{i}\nabla_{\mathbf{v}}F_{i}(\mathbf{v},\mathbf{x}^{i}),
\]
and Assumption~\ref{ass:attack} implies the bound
\begin{equation}
\label{eq:attack_bound_phi}
\|\nabla_{\mathbf{v}}\phi(\mathbf{v},\mathbf{w},\{\mathbf{x}^{i}\})\|^{2}
\le
M^{2},
\qquad
\forall\,\mathbf{w}\in\mathcal{W}.
\end{equation}

To quantify one-step progress, define
\[
\tau_{\mathbf{v}}
=
\phi(\mathbf{v},\mathbf{w}_{k+1},\{\mathbf{x}^{i}_{k+1}\})
-
\phi(\mathbf{v}_{k},\mathbf{w}_{k+1},\{\mathbf{x}^{i}_{k+1}\}).
\]
The key step is the following inequality.

\begin{lemma}
\label{lem:attack_key}
Under Assumption~\ref{ass:attack},
\begin{equation}
\label{eq:attack_lem}
\tau_{\mathbf{v}}
\ge
\frac{\|\mathbf{v}_{k+1}-\mathbf{v}\|^{2}-\|\mathbf{v}_{k}-\mathbf{v}\|^{2}}{2\eta}
-
\frac{\eta M^{2}}{2}.
\end{equation}
\end{lemma}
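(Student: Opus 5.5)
This is the standard one-step inequality for projected (sub)gradient descent, applied to the convex function $\psi(\cdot):=\phi(\cdot,\mathbf{w}_{k+1},\{\mathbf{x}^{i}_{k+1}\})$. Both $\mathbf{w}_{k+1}$ and $\{\mathbf{x}^{i}_{k+1}\}$ are fixed during the $\mathbf{v}$-update.

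Throughout, fix an arbitrary comparison point $\mathbf{v}\in\mathcal{P}$ and write $\mathbf{g}_k:=\nabla_{\mathbf{v}}\psi(\mathbf{v}_k)=\sum_i \mathbf{w}_{k+1,i}\nabla_{\mathbf{v}}F_i(\mathbf{v}_k,\mathbf{x}^i_{k+1})$.

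\emph{Step 1 (convexity).} By Assumption~\ref{ass:attack}, each $F_i(\cdot,\mathbf{x}^i)$ is convex in $\mathbf{v}$. The weights $\mathbf{w}_{k+1,i}$ are nonnegative, so $\psi$ is convex. The first-order condition then gives
\[
\tau_{\mathbf{v}}=\psi(\mathbf{v})-\psi(\mathbf{v}_k)\ge \langle \mathbf{g}_k,\mathbf{v}-\mathbf{v}_k\rangle .
\]
It therefore suffices to lower bound $\langle \mathbf{g}_k,\mathbf{v}-\mathbf{v}_k\rangle$, or equivalently to upper bound $\langle \mathbf{g}_k,\mathbf{v}_k-\mathbf{v}\rangle$.

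\emph{Step 2 (projection).} Set $\mathbf{y}=\mathbf{v}_k-\eta\mathbf{g}_k$, so that $\mathbf{v}_{k+1}=\mathrm{proj}_{\mathcal{P}}(\mathbf{y})$. The projection onto the closed convex set $\mathcal{P}$ is nonexpansive, and $\mathbf{v}\in\mathcal{P}$, so
\[
\|\mathbf{v}_{k+1}-\mathbf{v}\|^2\le\|\mathbf{y}-\mathbf{v}\|^2=\|\mathbf{v}_k-\mathbf{v}\|^2-2\eta\langle\mathbf{g}_k,\mathbf{v}_k-\mathbf{v}\rangle+\eta^2\|\mathbf{g}_k\|^2 .
\]
Rearranging yields
\[
\langle\mathbf{g}_k,\mathbf{v}_k-\mathbf{v}\rangle\le \frac{\|\mathbf{v}_k-\mathbf{v}\|^2-\|\mathbf{v}_{k+1}-\mathbf{v}\|^2}{2\eta}+\frac{\eta}{2}\|\mathbf{g}_k\|^2 .
\]

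\emph{Step 3 (gradient bound).} Apply \eqref{eq:attack_bound_phi}, i.e.\ $\|\mathbf{g}_k\|^2\le M^2$. This holds because each $F_i$ is $M$-Lipschitz, so $\|\nabla_{\mathbf{v}}F_i\|\le M$, and a convex combination of such gradients has norm at most $M$. Negating the display from Step 2 and combining it with Step 1 gives exactly \eqref{eq:attack_lem}.

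There is no real obstacle; the only points needing care are the following.
\begin{itemize}
\item The comparison point must lie in $\mathcal{P}$ for the nonexpansiveness step to apply.
\item Convexity of $\psi$ requires the weights to be nonnegative, which holds because $\mathbf{w}_{k+1}\in\mathcal{W}$.
\item The sign conventions must be tracked so that the inequality comes out as the stated lower bound on $\tau_{\mathbf{v}}$.
\end{itemize}
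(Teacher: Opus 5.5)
Your proof is correct and follows the same route as the paper's: nonexpansiveness of the projection against a comparison point $\mathbf{v}\in\mathcal{P}$, expansion of the square, the first-order convexity inequality for $\phi(\cdot,\mathbf{w}_{k+1},\{\mathbf{x}^{i}_{k+1}\})$, and the gradient bound \eqref{eq:attack_bound_phi}. Your sign bookkeeping is cleaner than the paper's write-up, which defines its inner-product term using $\mathbf{v}_k-\mathbf{v}$ and later reuses the same label for $\mathbf{v}-\mathbf{v}_k$.
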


Summing~\eqref{eq:attack_lem} over the outer iterations, combining it with the approximation guarantee of $\mathcal{M}_{\rm greedy}$ for the inner maximization, and then using the convexity of $\phi(\cdot,\mathbf{w},\{\mathbf{x}^{i}\})$ yields Theorem~\ref{thm:attack}. The full proofs of Lemma~\ref{lem:attack_key} and Theorem~\ref{thm:attack} are deferred to the Appendix.

\section{Robust Defense under Mixed Attacks}\label{sec:defense}

In this section, we study robust protection against adversarial perturbations in continuous data summarization under mixed attack types. Our goal is to compute a summarization solution that preserves utility while remaining stable against multiple possible attack constraints. We formulate this protection task as a robust submodular maximization problem and develop a corresponding approximation algorithm. Detailed proofs are deferred to the Appendix.

\subsection{Robust Continuous Greedy Algorithm}

We propose a robust continuous greedy algorithm for maximizing an $m$-weakly monotone DR-submodular objective under mixed attacks. At each outer iteration, the algorithm performs three steps: it first approximately updates each adversarial variable $\mathbf{v}^{j}$ by projected gradient descent, then computes the worst-case mixture weights $\mathbf{w}$ by solving the inner minimization over $\mathcal{W}$, and finally updates the primal variable $\mathbf{x}$ through a clipped-gradient continuous greedy step.

More specifically, for a fixed iterate $\mathbf{x}_{k}$, we compute $
\mathbf{v}^{j}_{T}(\mathbf{x}_{k})$
by projected gradient descent on $G(\mathbf{x}_{k},\mathbf{w},\cdot)$ for each attack type $j\in[J]$, the gradient with respect to $\mathbf v^j$ is
\[
\nabla_{\mathbf{v}^{j}}G
=
w_{t,j}\nabla_{\mathbf{v}^{j}}F(\mathbf{x}_{k},\mathbf{v}^{j}_{t})
+
\lambda_{\mathbf v}\mathbf{v}^{j}_{t}.
\]
and define
\[
\mathbf{w}^{*}(\{\mathbf{v}^{j}_{T}\}_{k})
=
\arg\min_{\mathbf{w}\in \mathcal{W}}
G(\mathbf{x}_{k},\mathbf{w},\{\mathbf{v}^{j}_{T}(\mathbf{x}_{k})\}).
\]
We then use the clipped gradient
\[
[\nabla_{\mathbf{x}}G(\mathbf{x}_{k})]_{+}
=
\nabla_{\mathbf{x}}G\!\left(
\mathbf{x}_{k},
\mathbf{w}^{*}(\{\mathbf{v}^{j}_{T}\}_{k}),
\{\mathbf{v}^{j}_{T}(\mathbf{x}_{k})\}
\right)\vee \mathbf{0},
\]
to define the ascent direction. As in the attack-side greedy routine, clipping removes directions corresponding to negative gradients. Consequently,
\begin{equation}
\label{eq:pre_defense}
\left\langle
\nabla_{\mathbf{x}}G\!\left(
\mathbf{x}_{k},
\mathbf{w}^{*}(\{\mathbf{v}^{j}_{T}\}_{k}),
\{\mathbf{v}^{j}_{T}(\mathbf{x}_{k})\}
\right),
\mathbf{d}_{k}
\right\rangle
=
\left\langle
[\nabla_{\mathbf{x}}G(\mathbf{x}_{k})]_{+},
\mathbf{d}_{k}
\right\rangle.
\end{equation}

The complete procedure is given below.

\begin{algorithm}[t!]
\small
\caption{Robust Continuous Greedy Algorithm}
\begin{algorithmic}[1]\label{alg:robust}
    \renewcommand{\algorithmicrequire}{\textbf{Input:}}
    \REQUIRE $\mathbf{v}_{0}, \mathbf{w}_{0}, \eta, K, T, \mathcal{P}_{j}, \mathcal{X}$
    \FOR {$k=0,\ldots,K-1$}
        \STATE Initialize $\mathbf{w}_{0}(\mathbf{v}_{k})=\mathbf{w}_{0}$
        \FOR {$t=0,\ldots,T-1$}
            \STATE For each \(j\in[J]\), compute
            \begin{eqnarray*}
            \mathbf{v}^{j}_{t+1}(\mathbf{x}_{k})&=&{\rm proj}_{\mathcal{P}_{j}}\left(\mathbf{v}^{j}_{t}(\mathbf{x}_{k})\right.\\
            &&-\left.\eta\nabla_{\mathbf{v}^{j}}G\left(\mathbf{x}_{k},\mathbf{w}_{t},\{\mathbf{v}^{\ell}_{t}(\mathbf{x}_{k})\}_{\ell=1}^{J}\right)\right).
            \end{eqnarray*}
        \ENDFOR
        \STATE Compute
        \[
        \mathbf{w}^{*}(\{\mathbf{v}^{j}_{T}\}_{k})
        =
        \arg\min_{\mathbf{w}\in \mathcal{W}}
        G(\mathbf{x}_{k}, \mathbf{w}, \{\mathbf{v}^{j}_{T}(\mathbf{x}_{k})\})
        \]
        and
        \[
        \bar{\mathbf{d}}_{k}
        =
        \arg\max_{\mathbf{d}\in \mathcal{X}}
        \left\langle
        [\nabla_{\mathbf{x}}G(\mathbf{x}_{k})]_{+},
        \mathbf{d}
        \right\rangle
        \]
        \STATE Direction clipping:
        \[
        [\mathbf{d}_{k}]_{s}
        =
        \begin{cases}
        [\bar{\mathbf{d}}_{k}]_{s}, & s\in \mathcal{S},\\
        0, & \text{otherwise},
        \end{cases}
        \]
        where
        \[
        \mathcal{S}
        =
        \left\{
        s:
        \left[
        \nabla_{\mathbf{x}}G\!\left(
        \mathbf{x}_{k},
        \mathbf{w}^{*}(\{\mathbf{v}^{j}_{T}\}_{k}),
        \{\mathbf{v}^{j}_{T}(\mathbf{x}_{k})\}
        \right)
        \right]_{s}\ge 0
        \right\}
        \]
        \STATE Set $\mathbf{x}_{k+1}=\mathbf{x}_{k}+\frac{1}{K}\mathbf{d}_{k}$
    \ENDFOR
    \STATE Return $\mathbf{x}_{K}$
\end{algorithmic}
\end{algorithm}

We impose the following regularity conditions for the robust defense analysis.

\begin{assumption}
\label{ass:robust}
Consider the regularized robust model~\eqref{eq:robust_problem}. For perturbations under which the perturbed objective remains in the same structural class, each function $F(\mathbf{x},\mathbf{v}^j)$ is non-negative, $m$-weakly monotone, and DR-submodular with respect to $\mathbf{x}$, and is jointly $L$-smooth with respect to $(\mathbf{x},\mathbf{v}^j)$. Moreover, each perturbation set $\mathcal{P}_j$ is compact and convex, and $\mathcal{X}\subseteq \mathbb{R}_+^n$ is a down-closed convex set with diameter $D_{\mathcal X}$.

For each fixed $\mathbf{x}$ and $\mathbf{w}$, the inner objective
\[
\sum_{j=1}^J w_j F(\mathbf{x},\mathbf{v}^j)
+
\frac{\lambda_v}{2}\sum_{j=1}^J \|\mathbf{v}^j\|_2^2
\]
is $\mu$-strongly convex with respect to $\{\mathbf{v}^j\}_{j=1}^J$, where the strong convexity is induced by the quadratic regularization term.
\end{assumption}

Under Assumption~\ref{ass:robust}, we can establish the following approximation guarantee and oracle complexity for Alg.~\ref{alg:robust}.

\subsection{Approximation Guarantee and Oracle Complexity}

We now state the main guarantee of Alg.~\ref{alg:robust}.

\begin{theorem}[Proof in Appendix]
\label{thm:robust}
Under Assumption~\ref{ass:robust}, let the inner projected-gradient step size be $\eta=1/L$ and define $\rho=1-\mu/L \le 1$. For any $\epsilon\in(0,1)$, if
\[
K\ge \mathcal{O}(\epsilon^{-1})
\qquad \text{and} \qquad
T\ge \mathcal{O}(\log_{\rho}(\epsilon^{-1})),
\]
then the output $\mathbf{x}_{K}$ of Alg.~\ref{alg:robust} is an $\left(m(1-1/e),\epsilon\right)$-approximation max-min solution to~\eqref{eq:robust_problem}. In particular,
\[
\min_{\mathbf{w}\in \mathcal{W},\,\mathbf{v}^{j}\in \mathcal{P}_{j}}
G(\mathbf{x}_{K},\mathbf{w},\{\mathbf{v}^{j}\})
\ge
m(1-1/e)\,{\rm OPT}_{\rm maxmin}-\epsilon,
\]
where 
$
{\rm OPT}_{\rm maxmin}=
\max_{\mathbf{x}\in \mathcal{X}}
\min_{\mathbf{w}\in \mathcal{W},\,\mathbf{v}^{j}\in \mathcal{P}_{j}}
G(\mathbf{x},\mathbf{w},\{\mathbf{v}^{j}\}).
$
\end{theorem}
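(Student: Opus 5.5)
The plan is to mirror the continuous-greedy analysis behind Theorem~\ref{thm:approximation}, applied to the robust value function
\[
g(\mathbf{x})=\min_{\mathbf{w}\in\mathcal{W},\,\mathbf{v}^j\in\mathcal{P}_j}G(\mathbf{x},\mathbf{w},\{\mathbf{v}^j\}),
\]
while accounting for the error from solving the inner minimization only approximately. The proof has two parts. The first is an inner-accuracy bound: for fixed $\mathbf{x}_k$ (and fixed weights), the projected-gradient loop in Alg.~\ref{alg:robust} with $\eta=1/L$ on the $\mu$-strongly convex, $L$-smooth objective of Assumption~\ref{ass:robust} contracts linearly, $\|\mathbf{v}^j_{T}-\mathbf{v}^{j,*}\|^2\le\rho^T\|\mathbf{v}^j_0-\mathbf{v}^{j,*}\|^2$. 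Choosing $T=\mathcal{O}(\log_\rho(\epsilon^{-1}))$, together with compactness of $\mathcal{P}_j$, makes this error $\mathcal{O}(\epsilon)$. Joint $L$-smoothness of $F$ in $(\mathbf{x},\mathbf{v}^j)$ then makes the gradient $\nabla_{\mathbf{x}}G(\mathbf{x}_k,\mathbf{w}^*,\{\mathbf{v}^j_T\})$ an $\mathcal{O}(\sqrt{\epsilon})$-accurate (or, with a slightly larger $T$, $\mathcal{O}(\epsilon)$-accurate) surrogate for the gradient at the exact inner minimizer. By Danskin's theorem, it is also a supergradient-type direction for $g$.

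The second part is the one-step progress inequality, which I would state for the function $h_k(\mathbf{x})=G(\mathbf{x},\mathbf{w}^*_k,\{\mathbf{v}^j_T(\mathbf{x}_k)\})$, where $\mathbf{w}^*_k$ and $\mathbf{v}^j_T$ are held fixed. This function is a nonnegative combination of DR-submodular, $m$-weakly monotone functions of $\mathbf{x}$. The regularizers do not depend on $\mathbf{x}$, so they only shift the value and do not affect differences in $\mathbf{x}$. Let $\mathbf{x}^*$ attain ${\rm OPT}_{\rm maxmin}$. The argument then runs in four steps.
\begin{enumerate}[label=(\roman*)]
\item Using \eqref{eq:pre_defense}, the clipped direction satisfies $\langle[\nabla_{\mathbf{x}}G]_+,\mathbf{d}_k\rangle\ge\langle[\nabla_{\mathbf{x}}G]_+,\mathbf{x}^*\rangle$, since $\mathbf{x}^*\in\mathcal{X}$ is a candidate for the LMO and clipping only zeroes coordinates with negative gradient.
\item DR-submodularity (concavity along nonnegative directions) gives $\langle[\nabla_{\mathbf{x}}G]_+,\mathbf{x}^*\rangle\ge\langle\nabla h_k(\mathbf{x}_k),(\mathbf{x}^*\vee\mathbf{x}_k)-\mathbf{x}_k\rangle\ge h_k(\mathbf{x}^*\vee\mathbf{x}_k)-h_k(\mathbf{x}_k)$.
\item $m$-weak monotonicity, applied to each $F(\cdot,\mathbf{v}^j)$ and summed with the weights, gives $h_k(\mathbf{x}^*\vee\mathbf{x}_k)\ge m\,h_k(\mathbf{x}^*)\ge m\,g(\mathbf{x}^*)-\mathcal{O}(\epsilon)$. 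The last inequality holds because $h_k(\mathbf{x}^*)\ge g(\mathbf{x}^*)$ by definition of the minimum. Nonnegativity of the regularizers is needed here to handle the $m$ factor on those terms, and I would check it carefully.
\item The $L$-smoothness step gives $h_k(\mathbf{x}_{k+1})\ge h_k(\mathbf{x}_k)+\frac1K\langle\nabla h_k,\mathbf{d}_k\rangle-\frac{LD_{\mathcal X}^2}{2K^2}$.
\end{enumerate}

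The main obstacle is that the recursion must be closed for $g$ itself, not for the changing $h_k$. This requires $g(\mathbf{x}_{k+1})-g(\mathbf{x}_k)\ge h_k(\mathbf{x}_{k+1})-h_k(\mathbf{x}_k)-\mathcal{O}(\epsilon/K)$. I would first try a Danskin/envelope argument: since the inner problem is strongly convex in $\mathbf{v}$ and the minimizer map is Lipschitz in $\mathbf{x}$, $g$ is smooth with gradient $\nabla_{\mathbf{x}}G$ at the exact minimizers. Then I would apply the smoothness inequality directly to $g$, inserting the inner error from the first part. If the $\mathbf{w}$-minimization breaks smoothness, the fallback is to use $g(\mathbf{x}_{k+1})\ge$ the value at the new minimizer and lower bound the change by the gradient at $\mathbf{x}_k$ using joint $L$-smoothness, absorbing cross terms into $\mathcal{O}(1/K^2)$.

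Once the per-step inequality is established, it reads
\[
m\,{\rm OPT}_{\rm maxmin}-g(\mathbf{x}_{k+1})\le\left(1-\tfrac1K\right)\bigl(m\,{\rm OPT}_{\rm maxmin}-g(\mathbf{x}_k)\bigr)+\mathcal{O}(1/K^2)+\mathcal{O}(\epsilon/K).
\]
Unrolling from $g(\mathbf{x}_0)\ge0$ gives $g(\mathbf{x}_K)\ge m(1-1/e){\rm OPT}_{\rm maxmin}-\mathcal{O}(LD_{\mathcal X}^2/K)-\mathcal{O}(\epsilon)$. Taking $K=\mathcal{O}(\epsilon^{-1})$ yields the claim. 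Feasibility $\mathbf{x}_K\in\mathcal{X}$ follows from down-closedness and convexity, since $\mathbf{x}_K\le\frac1K\sum_k\bar{\mathbf{d}}_k$.
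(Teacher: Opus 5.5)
This is essentially the paper's proof: contraction of the inner projected-gradient loop, Danskin-type smoothness of $\Phi$ with $\nabla\Phi(\mathbf{x})=\nabla_{\mathbf{x}}G$ at the exact inner minimizers (Lemma~\ref{lem:robust_phi}, which is exactly your ``first try'' for closing the recursion), the one-step inequality of Lemma~\ref{lem:robust_difference} from the LMO, DR-submodularity and $m$-weak monotonicity with an error of order $D_{\mathcal{X}}\|\Delta_k\|/K$, and telescoping from $\Phi(\mathbf{x}_0)\ge 0$. The one deviation is where you apply the DR/weak-monotonicity step: you use the frozen surrogate $h_k$, whereas the paper uses $G$ at the exact minimizers via $[\nabla\Phi(\mathbf{x}_k)]_+$ and the clipped mismatch $\bar{\Delta}_k$ (with $\|\bar{\Delta}_k\|\le\|\Delta_k\|$); this spares you that clipping argument but means you must also supply the value bound $h_k(\mathbf{x}_k)\le\Phi(\mathbf{x}_k)+\mathcal{O}(\epsilon)$, and, as in the paper, bounding $\Delta_k$ needs the Lipschitz dependence of $\mathbf{w}^*$ on $\{\mathbf{v}^j\}$, not only the smoothness of $F$.
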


Assuming the availability of gradient evaluations, an LMO, and a projection oracle over convex sets, we obtain the following complexity bound.

\begin{corollary}[Oracle complexity]
\label{cor:robust}
Under the same conditions as Theorem~\ref{thm:robust}, Alg.~\ref{alg:robust} produces an $(m(1-1/e),\epsilon)$-approximation max-min solution using
\[
\mathcal{O}\!\left(J\epsilon^{-1}\log_{\rho}\epsilon^{-1}+\epsilon^{-1}\right)
\]
gradient evaluations, $\mathcal{O}(\epsilon^{-1})$ LMO calls, and
\[
\mathcal{O}\!\left(J\epsilon^{-1}\log_{\rho}\epsilon^{-1}+\epsilon^{-1}\right)
\]
projection-oracle calls.
\end{corollary}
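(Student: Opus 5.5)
The corollary is pure accounting on top of Theorem~\ref{thm:robust}: fix $K=\Theta(\epsilon^{-1})$ and $T=\Theta(\log_{\rho}\epsilon^{-1})$, which by that theorem already make $\mathbf{x}_K$ an $(m(1-1/e),\epsilon)$-approximation max-min solution. It then remains to count, line by line in Alg.~\ref{alg:robust}, how many gradient evaluations, LMO calls and projections one run uses. I will charge one gradient evaluation per call to $\nabla_{\mathbf{v}^j}G$ or $\nabla_{\mathbf{x}}G$, one LMO call per linear maximization over $\mathcal{X}$, and one projection-oracle call per Euclidean projection onto some $\mathcal{P}_j$.

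\textbf{Inner loop.} Within outer iteration $k$, each of the $T$ steps and each of the $J$ attack types requires one gradient $\nabla_{\mathbf{v}^j}G=w_{t,j}\nabla_{\mathbf{v}^j}F+\lambda_v\mathbf{v}^j_t$ and one projection onto $\mathcal{P}_j$. So the inner loop costs $JT$ gradient evaluations and $JT$ projections per outer iteration.

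\textbf{Rest of the outer iteration.} Computing $\mathbf{w}^*$ is a minimization over the simplex of a function linear in $\mathbf{w}$ plus $\tfrac{\gamma}{2}\|\mathbf{w}-\tfrac1J\mathbf{1}\|^2$. Completing the square turns it into a single Euclidean projection onto $\mathcal{W}$, at the price of the $J$ values $F(\mathbf{x}_k,\mathbf{v}^j_T)$. I count this as $O(1)$ projection calls and treat the function values as subsumed in the gradient evaluations already made. The remaining steps are one evaluation of $\nabla_{\mathbf{x}}G$ (one gradient of $F$ per $j$, grouped as a single oracle call; counting them separately only multiplies this term by $J$, which is dominated anyway), one LMO call for $\bar{\mathbf{d}}_k$, and direction clipping and the update, both of which are oracle-free.

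\textbf{Summing.} Over $K$ outer iterations this gives $K(JT+1)=\mathcal{O}(J\epsilon^{-1}\log_{\rho}\epsilon^{-1}+\epsilon^{-1})$ gradient evaluations, $K=\mathcal{O}(\epsilon^{-1})$ LMO calls, and $K(JT+1)=\mathcal{O}(J\epsilon^{-1}\log_{\rho}\epsilon^{-1}+\epsilon^{-1})$ projections. The only delicate point is the convention for the $\mathbf{w}$-step: whether it counts as one projection or needs an explicit solver. Since the regularized objective makes it a simplex projection, it adds only the $\epsilon^{-1}$ term, which matches the stated bounds.
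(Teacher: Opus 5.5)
Your accounting is correct and is the same argument the paper relies on. The paper gives no separate proof of the corollary; it is a direct count from Theorem~\ref{thm:robust} with $K=\mathcal{O}(\epsilon^{-1})$ and $T=\mathcal{O}(\log_{\rho}\epsilon^{-1})$. That count is $K(JT+1)$ gradient and projection calls, from the $JT$ inner $\mathbf{v}^j$-steps plus one $\nabla_{\mathbf{x}}G$ evaluation and one simplex projection for $\mathbf{w}^*$ per outer iteration, and $K$ LMO calls.

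One minor caveat comes from the paper's notation: $\log_{\rho}\epsilon^{-1}<0$ for $\rho<1$. What the theorem's proof actually requires is $T\ge 2\log_{\rho}\bigl(\epsilon/(2QD_{\mathcal{X}}D_{\mathcal{P}})\bigr)=\Theta\bigl(\ln(1/\epsilon)/\ln(1/\rho)\bigr)$, which does not affect your count.
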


\begin{remark}
For the special case $J=1$ and $m=1$, our method provides a $(1-1/e)$-approximation guarantee for robust monotone DR-submodular maximization. Thus, it recovers the best-known approximation ratio in the single-attack setting, while additionally accommodating mixed attack types in the more general framework studied here.
\end{remark}

\subsection{Proof Sketch}

We briefly outline why Alg.~\ref{alg:robust} achieves a robust approximation guarantee under mixed attacks.

First, under Assumption~\ref{ass:robust}, the inner projected-gradient updates on $\mathbf{v}^{j}$ contract toward the corresponding minimizers of the regularized inner problem. In particular, if $\rho=1-\mu/L$,
then
$$
\|\mathbf{v}^{j}_{T}(\mathbf{x}_{k})-{\mathbf{v}^{j}}^{*}(\mathbf{x}_{k})\|
\le
\rho^{T/2}D_{\mathcal{P}_{j}}.
$$
Second, define the value function
\begin{equation}
\label{eq:defense_phi}
\Phi(\mathbf{x})
:=
\min_{\mathbf{w}\in \mathcal{W},\,\mathbf{v}^{j}\in \mathcal{P}_{j}}
G(\mathbf{x},\mathbf{w},\{\mathbf{v}^{j}\}).
\end{equation}
Using standard sensitivity arguments for strongly convex inner problems, one can show that $\Phi$ is smooth and that the minimizer $\mathbf{w}^{*}$ varies Lipschitz-continuously with the adversarial variables. Third, the continuous greedy update on $\mathbf{x}$ yields a one-step improvement inequality of the form
\begin{eqnarray*}
&&\Phi(\mathbf{x}_{k+1})-\Phi(\mathbf{x}_{k})\\
&&\ge
\frac{1}{K}
\left(
m\Phi(\mathbf{x}^{*})-\Phi(\mathbf{x}_{k})
-2D_{\mathcal{X}}\|\Delta_{k}\|
-\frac{L_{\Phi}}{2K}D_{\mathcal{X}}^{2}
\right),
\end{eqnarray*}
where $\Delta_{k}=\nabla \Phi(\mathbf{x}_{k})-
\nabla_{\mathbf{x}}G\!\left(
\mathbf{x}_{k},
\mathbf{w}^{*}(\{\mathbf{v}^{j}_{T}\}_{k}),
\{\mathbf{v}^{j}_{T}(\mathbf{x}_{k})\}
\right)$. Bounding $\|\Delta_{k}\|$ by the contraction error of the inner $\mathbf{v}^{j}$-updates and telescoping the above inequality over $k=0,\dots,K-1$ yields Theorem~\ref{thm:robust}. The detailed proofs of the auxiliary lemmas and the complete proof of Theorem~\ref{thm:robust} are deferred to the Appendix.

\section{Experimental Evaluation}\label{sec:experi}

We organize the experimental evaluation around two complementary theory-aligned settings. 
First, we evaluate the proposed attack and defense algorithms on the multilinear-extension summarization model constructed from real image datasets, which is directly consistent with the DR-submodular analysis in Sec.~\ref{sec:attack_gener} and Sec.~\ref{sec:defense}. 
Second, we introduce a controlled clustered multi-target benchmark to isolate the attack-defense mechanism under a known representative structure. 
The first setting is used to evaluate utility degradation and robustness on real data, while the second is used to verify whether a single shared perturbation can consistently disrupt representative selection across multiple related victim models. 
For completeness, additional empirical results on the direct continuous-score objective are reported in Appendix~\ref{supp:Experimental Evaluation}, since that objective is not the primary model covered by our theoretical guarantees.

\subsection{Experimental Setup}

\subsubsection{Real-data multilinear-extension summarization}

We conduct experiments on CIFAR-10~\cite{alex2009}, MNIST~\cite{lecun2021mnist}, and Fashion-MNIST~\cite{xiao2017fashionmnist}. 
Unless otherwise specified, the main numerical results are reported on CIFAR-10, while MNIST and Fashion-MNIST are used for additional validation. 
In each experiment, we construct $I=10$ victim summarization models, and each model contains $n=50$ images sampled from the raw dataset. 
To evaluate the effect of model size, we also report additional results for $n\in\{100,200\}$ in the Appendix.

\subsubsection{Controlled clustered multi-target benchmark}

To further isolate the attack-defense mechanism, we construct a controlled clustered multi-target benchmark. 
This benchmark is designed to capture a multi-resolution image summarization scenario in which visually similar images form semantic clusters and each cluster contains several near-tie representative images. 
Unless otherwise specified, we construct $I=10$ victim models, each containing $n=50$ items organized into $5$ clusters with $10$ items per cluster. 
The similarity matrix of each victim model is normalized to $[0,1]$. 
Inter-cluster similarities are sampled from a low range, while intra-cluster similarities are sampled from a moderate range. 
Within each cluster, two representative candidates are assigned near-tie similarity scores to the remaining items. 
In the main benchmark, inter-cluster similarities are sampled from $[0.05,0.12]$, ordinary intra-cluster similarities from $[0.44,0.56]$, hub-representative similarities from $[0.76,0.81]$, and runner-up representative similarities from $[0.75,0.80]$.

All victim models share the same clustered representative structure, while small symmetric noise is added to the similarity matrices across models. 
This shared-but-nonidentical construction is designed to reflect the multi-target attack setting: the attacker must generate a single perturbation that degrades several related summarization models simultaneously. 
Moreover, the near-tie representative design prevents the attack from being dominated by a single obvious hub item, making the benchmark suitable for evaluating structure-aware attacks under the multilinear-extension objective.

\subsubsection{Feasible set and summary budgets}

Following the multilinear-extension formulation, each coordinate $x_\nu\in[0,1]$ is interpreted as the probability or soft weight of selecting an item. 
Thus, we use the cardinality-budget relaxation
\[
\mathcal{X}_k
=
\left\{
\mathbf{x}\in[0,1]^n:
\sum_{\nu=1}^{n}x_\nu\le k
\right\},
\]
where $k$ controls the expected summary size. 
For the real-data setting, we consider $k\in\{5,10,15\}$, corresponding to summary ratios of $10\%$, $20\%$, and $30\%$ when $n=50$. 
For the controlled clustered benchmark, we use $k=5$, corresponding to selecting one representative item from each cluster on average. 
After optimization, a deterministic summary is obtained by selecting the top-$k$ coordinates of $\mathbf{x}$.

\subsubsection{Evaluation protocol and metrics}

We evaluate each method from both continuous and discrete perspectives. 
For the continuous evaluation, we report the multilinear-extension utility \(F(\mathbf{x},\Omega)\). 
For the discrete evaluation, we round the continuous solution by selecting the top-\(k\) entries of \(\mathbf{x}\) and then evaluate the resulting summary under the discrete utility \(f(S_k(\mathbf{x}),\Omega)\). 
All attacked and robust solutions are evaluated under the original clean similarity matrix, so that the reported degradation reflects the quality loss caused by the perturbation rather than a change in the evaluation objective itself. 
For the controlled clustered benchmark, this protocol additionally allows us to interpret attack effectiveness in terms of whether the selected summary loses cluster coverage or fails to preserve representative items.

For attack evaluation, we report three metrics: {\bf Avg. Degradation}, {\bf Success Ratio}, and {\bf Attack Intensity}. 
Avg. Degradation measures the average decrease in clean-objective value caused by the attack. 
Success Ratio measures the fraction of target models whose clean-objective values are reduced by the generated perturbation. 
Attack Intensity is the normalized average degradation. For defense evaluation, we report {\bf Loss}, {\bf Robustness}, and {\bf Mitigation}. 
Loss measures the clean-data utility gap between the robust solution and the greedy solution. 
Robustness measures the stability of the robust solution when comparing clean and attacked settings. 
Mitigation measures how much the defense reduces the attack effect relative to the attacked greedy solution. 

For the controlled clustered benchmark, we further evaluate the structural quality of the rounded summary using three indicators: {\bf Coverage}, {\bf Hit}, and {\bf Redundancy}. 
Coverage measures how many ground-truth clusters are represented in the selected summary. 
Hit measures whether the summary successfully selects the designated hub representatives. 
Redundancy measures the extent to which the selected summary contains repeated information from only a few clusters. 
Higher coverage and hit indicate better structural quality, whereas lower redundancy indicates a more diverse and representative summary. 

Finally, to connect summarization-level degradation with downstream reliability, we evaluate downstream nearest-neighbor classification on the controlled clustered benchmark. 
Each cluster is treated as one class. 
For each method, we round the continuous solution by selecting the top-\(k\) items and use the selected summary as a small training set. 
Each clean test sample is then assigned the label of its most similar selected summary item under the clean similarity matrix. 
We report nearest-neighbor classification accuracy ({\bf NN Acc.}) and downstream {\bf Recovery}. 
NN Acc. measures the usefulness of the selected summary for downstream classification, while Recovery measures the fraction of downstream accuracy loss recovered relative to the attacked summary.


\subsection{Attack Performance under $l_p$-Norm Constraints}

We organize the attack evaluation into two complementary cases. 
The first case considers the real-data multilinear-extension summarization model, which is directly aligned with our theoretical formulation and is used to examine whether structure-aware similarity perturbations can produce measurable degradation beyond gradient-based and random baselines. 
The second case uses a controlled clustered multi-target benchmark with explicit representative structure, which allows us to isolate the multi-target attack mechanism and interpret degradation through cluster coverage, representative hits, and redundancy. 
We report representative norm settings in the main text and defer additional norm- and budget-specific results to the Appendix.

\subsubsection{Real-data multilinear-extension summarization}

We first evaluate the attack on the real-data multilinear-extension model. 
This setting is directly aligned with the theoretical objective, but the multilinear extension also smooths the discrete utility over randomized subsets. 
Therefore, the purpose of this case is not to show a large collapse of the real-data summarization model, but to test whether a structure-aware perturbation can produce measurable degradation beyond gradient-based and random baselines.

{\bf Choice of baselines.}
The representative methods summarized in Table~\ref{tab:related_positioning} differ substantially in target setting and structural assumptions. 
In particular, the methods in~\cite{adibi2022} do not explicitly address the continuous multi-target setting considered here. 
We therefore compare the proposed method with two directly applicable baselines: a representative gradient-based baseline for general non-convex min-max optimization~\cite{wang2021}, and a random perturbation baseline sampled from the same norm-constrained perturbation set. 
The random baseline serves as a sanity check for whether utility degradation is caused by optimized structure-aware perturbations rather than arbitrary bounded similarity noise.

\begin{table}[t]
\centering
\caption{Attack comparison on the multilinear-extension summarization model under representative norm constraints. We use \(I=10\), \(n=50\), \(k=5\), \(\lambda=1\), \(K_{\rm attack}=30\), and \(T_{\rm attack}=20\). Random perturbation is averaged over 20 independent trials.}
\label{tab:me_attack_baseline}
\scriptsize
\setlength{\tabcolsep}{2pt}
\resizebox{\columnwidth}{!}{
\begin{tabular}{l c c c}
\toprule
\textbf{Method}
& \textbf{Avg. Deg.} $\uparrow$
& \textbf{Succ. Ratio} $\uparrow$
& \textbf{Intensity} $\uparrow$ \\
\midrule
\multicolumn{4}{c}{\textbf{\(\ell_1\)-norm constraint, \(\epsilon=2.0\)}} \\
\midrule
Proposed attack          &  0.0329 & 0.60 &  0.000869 \\
PGD baseline         &  0.0097 & 0.70 &  0.000255 \\
Random perturbation  & -0.0355 & 0.10 & -0.000938 \\
\midrule
\multicolumn{4}{c}{\textbf{\(\ell_2\)-norm constraint, \(\epsilon=2.0\)}} \\
\midrule
Proposed attack         &  0.0548 & 0.70 &  0.001448 \\
PGD baseline         &  0.0001 & 0.70 &  0.000002 \\
Random perturbation  & -0.0355 & 0.40 & -0.000508 \\
\bottomrule
\end{tabular}
}
\end{table}
As shown in Table~\ref{tab:me_attack_baseline}, the proposed attack achieves the largest average degradation and attack intensity under both representative \(\ell_1\)- and \(\ell_2\)-norm constraints. Although PGD attains comparable or slightly higher success ratios in some cases, its degradation magnitude is much smaller. The random perturbation baseline yields negative degradation in both representative settings, indicating that arbitrary feasible perturbations do not reliably reduce the clean summarization utility. At the same time, the absolute degradation values are small, suggesting that the real-data multilinear-extension objective is relatively stable under bounded similarity-level perturbations.

\begin{figure*}[t!]
\centering
\includegraphics[width=1.7\columnwidth]{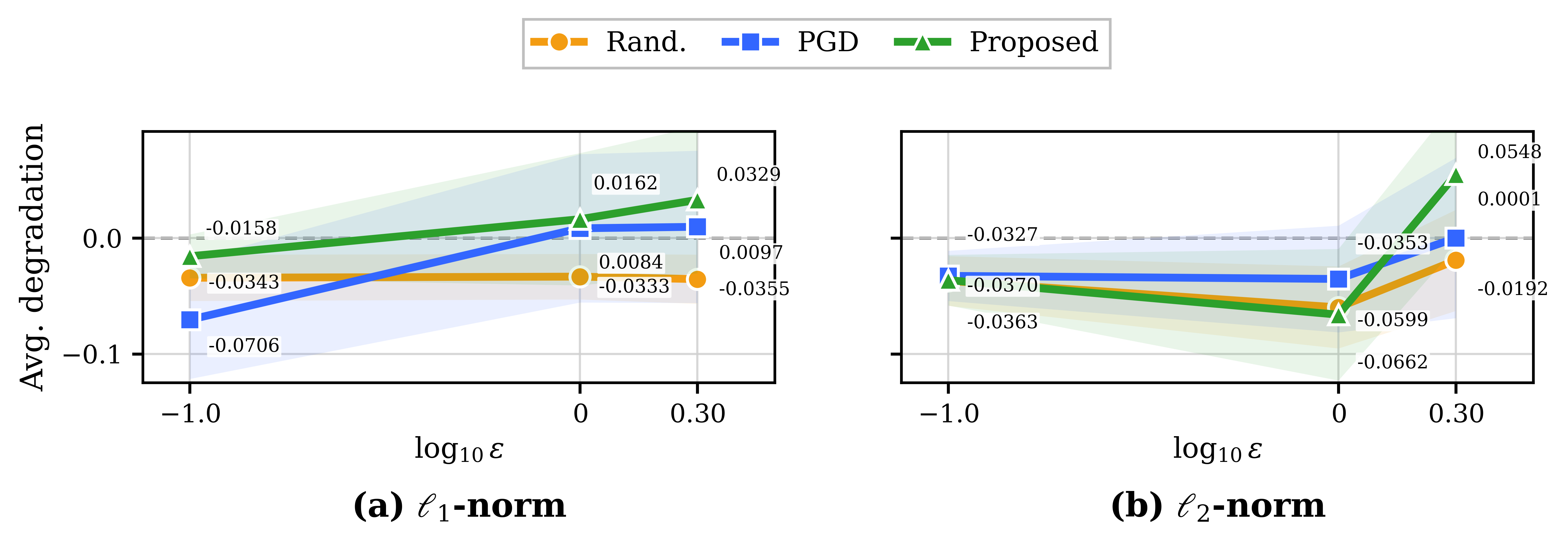}

\caption{Real-data budget sensitivity under $\ell_1$- and $\ell_2$-norm constraints. The $x$-axis reports $\log_{10}\epsilon$, and the $y$-axis reports average degradation. Shaded bands indicate the standard error across the 10 victim models. Node labels show the mean average-degradation values. Random perturbation is averaged over 20 independent trials.}
\label{fig:realdata_l1_l2_avg_degradation_sensitivity}
\end{figure*}

We further examine budget sensitivity in Fig.~\ref{fig:realdata_l1_l2_avg_degradation_sensitivity}. 
The proposed attack algorithm becomes effective in moderate-budget regimes, while random perturbations remain weak or negative in the most informative moderate-budget region. 
At very small budgets, all attacks may be weak, suggesting that the multilinear-extension objective is relatively stable to small similarity perturbations. 
This observation motivates the controlled benchmark below, where the representative structure is known and the attack mechanism can be more directly examined. We also tested $\ell_\infty$-bounded perturbations. In the multilinear-extension setting, the resulting degradation was weak and sometimes negative under the tested budgets, suggesting that the smoothed objective is relatively stable against small coordinate-wise similarity perturbations. We therefore report the detailed \(\ell_\infty\) results in the appendix.

\subsubsection{Controlled clustered multi-target benchmark}

We next evaluate the attacks on the controlled clustered benchmark. 
This case is designed to isolate the multi-target attack mechanism under a known representative structure. 
Each victim model contains explicit clusters and near-tie hub representatives, while the victim models share the same structural pattern with small model-specific noise. 
Therefore, this benchmark allows us to examine not only whether the objective value decreases, but also whether the selected summary loses cluster coverage, misses representative hubs, or becomes redundant.

{\bf Utility degradation under \(\ell_\infty\) perturbations.} For the controlled clustered benchmark, the strongest and most stable effects are observed under the \(\ell_\infty\)-norm constraint. 
We additionally test \(\ell_1\) and \(\ell_2\) attacks on a near-tie clustered variant, where the attack becomes effective after reducing the redundancy weight to \(\lambda=0.1\) and increasing \(T_{\rm attack}\) to 10.

Table~\ref{tab:cluster_attack_baseline} reports the attack performance under \(\ell_\infty\)-bounded perturbations. 
For low-to-moderate budgets, the proposed attack produces larger degradation than the PGD baseline. 
At \(\epsilon=0.3\), the proposed attack improves the average degradation from \(5.1324\) to \(6.5695\), and at \(\epsilon=0.5\), the gap increases from \(2.8914\) to \(6.8044\). 
Both methods achieve a success ratio of \(1.0\), so the advantage of the proposed attack comes from larger utility drops rather than more frequent successes. 
When the budget increases to \(\epsilon=0.7\), PGD becomes stronger, indicating that the advantage of the proposed attack is most pronounced when the perturbation budget is limited and the attack must exploit the shared representative structure.

\begin{table}[t]
\centering
\caption{Attack comparison on the controlled clustered multilinear-extension summarization benchmark. 
We use \(I=10\), \(n=50\), \(k=5\), \(\lambda=0.5\), \(K_{\rm attack}=30\), and \(T_{\rm attack}=5\).}
\label{tab:cluster_attack_baseline}
\scriptsize
\setlength{\tabcolsep}{2.8pt}
\resizebox{\columnwidth}{!}{
\begin{tabular}{c l c c c}
\toprule
\(\boldsymbol{\epsilon}\) 
& \textbf{Method}
& \textbf{Avg. Deg.} $\uparrow$
& \textbf{Succ. Ratio} $\uparrow$
& \textbf{Intensity} $\uparrow$ \\
\midrule
\multicolumn{5}{c}{\textbf{\(\ell_\infty\)-norm constraint}} \\
\midrule
\multirow{2}{*}{0.3}
& Proposed attack  & 6.5695 & 1.00 & 0.1714 \\
& PGD baseline & 5.1324 & 1.00 & 0.1339 \\
\cmidrule(lr){1-5}
\multirow{2}{*}{0.5}
& Proposed attack  & 6.8044 & 1.00 & 0.1776 \\
& PGD baseline & 2.8914 & 1.00 & 0.0755 \\
\cmidrule(lr){1-5}
\multirow{2}{*}{0.7}
& Proposed attack  & 10.5072 & 1.00 & 0.2742 \\
& PGD baseline & 14.0013 & 1.00 & 0.3654 \\
\bottomrule
\end{tabular}
}
\end{table}

{\bf Additional \(\ell_1\) and \(\ell_2\) results.} Table~\ref{tab:cluster_l1_l2_attack_formal} reports the selected cluster-data results under \(\ell_1\) and \(\ell_2\) constraints. 
Compared with the original tight-gap clustered setting used for the \(\ell_\infty\) study, this near-tie clustered variant creates more competitive candidate summaries and therefore better exposes the effect of global-budget perturbations. 
Under the \(\ell_1\)-norm constraint, the proposed attack achieves positive degradation and attack intensity, while random perturbation has nearly zero effect, indicating that arbitrary feasible perturbations are insufficient to explain the observed performance drop. 
Under the \(\ell_2\)-norm constraint, both optimized attacks substantially reduce the clean utility, with the proposed attack achieving the largest average degradation and intensity. 
Although random perturbation also succeeds under this larger \(\ell_2\) budget, its intensity is much smaller than that of the optimized attacks, showing that the structure-aware perturbation is considerably more damaging.

\begin{table}[t]
\centering
\caption{Attack comparison on the controlled clustered multilinear-extension summarization benchmark. We use the near-tie clustered setting with \(I=10\), \(n=50\), \(k=5\), \(\lambda=0.1\), \(K_{\rm attack}=30\), and \(T_{\rm attack}=10\).}
\label{tab:cluster_l1_l2_attack_formal}
\scriptsize
\setlength{\tabcolsep}{4 pt}
\resizebox{\columnwidth}{!}{
\begin{tabular}{c c l c c c}
\toprule
\textbf{Norm} & \(\epsilon\) & \textbf{Method} & \textbf{Avg. Deg.} \(\uparrow\) & \textbf{Succ.} \(\uparrow\) & \textbf{Int.} \(\uparrow\) \\
\midrule
\(\ell_1\) & 2 & Proposed attack & 1.6064 & 0.40 & 0.0390 \\
 &  & PGD baseline & 1.6064 & 0.40 & 0.0390 \\
 &  & Random perturbation & -0.0065 & 0.00 & -0.0002 \\
\cmidrule(lr){1-6}
\(\ell_2\) & 5 & Proposed attack & 11.1272 & 1.00 & 0.2705 \\
 &  & PGD baseline & 9.1592 & 1.00 & 0.2226 \\
 &  & Random perturbation & 1.4105 & 1.00 & 0.0343 \\
\bottomrule
\end{tabular}
}
\end{table}

{\bf Structural effects on rounded summaries.}
Table~\ref{tab:cluster_structural_metrics} further explains the utility degradation through the rounded summary structure. 
Under both representative budgets, the proposed attack causes larger coverage loss, stronger reduction in representative-hit ratio, and higher redundancy than the PGD baseline. 
For example, when \(\epsilon=0.5\), Proposed attack reduces coverage from \(1.000\) to \(0.820\) and hit ratio from \(0.580\) to \(0.320\), while increasing redundancy to \(0.180\). 
These results show that the attack not only lowers continuous objective value, but also damages discrete summary by removing hub representatives and concentrating selections in fewer clusters.

Overall, the two cases provide complementary evidence. 
The real-data multilinear-extension setting shows measurable but budget-sensitive vulnerability under optimized similarity perturbations. 
The controlled clustered benchmark further reveals when this vulnerability becomes more pronounced: when multiple victim models share a near-tie representative structure, a single bounded perturbation can systematically reduce utility and damage the rounded summary. 
Thus, the proposed attack is best interpreted as a structure-aware multi-target attack that is most informative in low-to-moderate budget regimes, rather than as a uniformly strongest attack across all norms and budgets.

\begin{table}[t]
\centering
\caption{Structural degradation of the selected summaries on the controlled clustered benchmark. 
Coverage and representative hit ratio are higher for better summaries, while redundancy is lower.}
\label{tab:cluster_structural_metrics}
\small
\setlength{\tabcolsep}{4.5pt}
\begin{tabular}{c l c c c}
\toprule
\(\boldsymbol{\epsilon}\) 
& \textbf{Metric}
& \textbf{Clean}
& \textbf{Proposed attack}
& \textbf{PGD attack} \\
\midrule
\multirow{6}{*}{0.3}
& Cov. $\uparrow$        & 1.000 & 0.880 & 0.980 \\
& \(\Delta\)Cov. $\uparrow$ & 0.000 & 0.120 & 0.020 \\
& Hit $\uparrow$             & 0.580 & 0.420 & 0.500 \\
& \(\Delta\)Hit $\uparrow$      & 0.000 & 0.160 & 0.080 \\
& Red. $\downarrow$    & 0.000 & 0.120 & 0.020 \\
& \(\Delta\)Red. $\uparrow$     & 0.000 & 0.120 & 0.020 \\
\midrule
\multirow{6}{*}{0.5}
& Cov. $\uparrow$        & 1.000 & 0.820 & 0.860 \\
& \(\Delta\)Cov. $\uparrow$ & 0.000 & 0.180 & 0.140 \\
& Hit $\uparrow$             & 0.580 & 0.320 & 0.600 \\
& \(\Delta\)Hit $\uparrow$      & 0.000 & 0.260 & -0.020 \\
& Red. $\downarrow$    & 0.000 & 0.180 & 0.140 \\
& \(\Delta\)Red. $\uparrow$     & 0.000 & 0.180 & 0.140 \\
\bottomrule
\end{tabular}
\vspace{-6pt}
\end{table}

\subsection{Robustness of the Defense Algorithm}
\label{results:robust}

We evaluate whether the proposed robust algorithm can reduce adversarial effects while preserving clean summarization quality. 
The controlled clustered benchmark is used as the main defense testbed, since its known representative structure makes robustness and mitigation effects directly interpretable. 
We also report MovieLens results as a real-data sanity check for the multilinear-extension summarization model.

\subsubsection{Controlled clustered benchmark}

We compare the proposed robust method with a representative PGD-based robust baseline under the same attack and evaluation protocol. 
The goal is to examine whether robust optimization can reduce clean-attacked variation, improve mitigation over the attacked greedy solution, and preserve clean-data utility.

\begin{table}[t]
\centering
\caption{Defense comparison on the controlled clustered benchmark. 
We use the \(\ell_\infty\)-bounded setting with \(\epsilon_{\rm att}=0.5\), \(\epsilon_{\rm def}=0.02\), \(\gamma=0.1\), \(L=10\), \(K_{\rm robust}=30\), and \(T_{\rm robust}=5\).}
\label{tab:defense_cluster_baseline}
\scriptsize
\setlength{\tabcolsep}{4pt}
\resizebox{\columnwidth}{!}{
\begin{tabular}{l c c c c}
\toprule
\textbf{Method}
& \textbf{Loss} \(\downarrow\)
& \textbf{Robust.} \(\downarrow\)
& \textbf{Mitig.} \(\uparrow\)
& \textbf{Time} \(\downarrow\) \\
\midrule
Proposed robust      & -0.0092 & 0.1029 & 0.0826 & 2.99 \\
PGD robust baseline  & -0.0526 & 0.1464 & 0.0757 & 4.80 \\
\bottomrule
\end{tabular}
}
\vspace{-6pt}
\end{table}

Table~\ref{tab:defense_cluster_baseline} shows that the proposed robust method provides a more favorable defense trade-off on the controlled clustered benchmark. 
It achieves lower clean-attacked variation, slightly higher mitigation, and shorter runtime than the PGD robust baseline. 
Both methods have slightly negative loss values, indicating that robust optimization does not incur observable clean-utility sacrifice in this controlled setting.

\subsubsection{Real-data sanity check}

We also evaluate the defense methods on the MovieLens real-data multilinear-extension model. 
Since this setting does not provide an explicit representative structure, the defense effect is weaker and more sensitive to the perturbation geometry; we therefore treat it as a sanity check rather than the main defense evidence.

\begin{table}[t]
\centering
\caption{Real-data defense comparison on MovieLens under representative norm constraints.}
\label{tab:realdata_defense}
\scriptsize
\setlength{\tabcolsep}{2pt}
\resizebox{\columnwidth}{!}{
\begin{tabular}{c c c l c c c c}
\toprule
\textbf{Norm} & \(\epsilon_{\rm att}\) & \(\epsilon_{\rm def}\)
& \textbf{Method}
& \textbf{Loss} \(\downarrow\)
& \textbf{Robust.} \(\downarrow\)
& \textbf{Mitig.} \(\uparrow\)
& \textbf{Time} \(\downarrow\) \\
\midrule
\multirow{2}{*}{\(\ell_1\)}
& \multirow{2}{*}{2}
& \multirow{2}{*}{2}
& Proposed robust & 0.0005 & 0.0041 & -0.0011 & 2.27 \\
& & & PGD robust baseline & 0.0184 & 0.0012 & -0.0180 & 3.03 \\
\midrule
\multirow{2}{*}{\(\ell_2\)}
& \multirow{2}{*}{2}
& \multirow{2}{*}{2}
& Proposed robust & 0.0005 & 0.0044 & 0.0009 & 1.24 \\
& & & PGD robust baseline & 0.0234 & 0.0012 & -0.0222 & 2.00 \\
\midrule
\multirow{2}{*}{\(\ell_\infty\)}
& \multirow{2}{*}{0.1}
& \multirow{2}{*}{0.1}
& Proposed robust & 0.0205 & 0.0105 & -0.0223 & 1.28 \\
& & & PGD robust baseline & 0.0255 & 0.0007 & -0.0283 & 1.86 \\
\bottomrule
\end{tabular}
}
\vspace{-6pt}
\end{table}

Table~\ref{tab:realdata_defense} shows that the proposed robust method generally incurs smaller clean-data loss than the PGD robust baseline on MovieLens. 
However, the mitigation values are close to zero and depend on the norm constraint, confirming that real-data robust protection is more parameter-sensitive than in the controlled clustered benchmark. 
These results support the use of the controlled benchmark as the main defense evaluation while providing additional evidence that the proposed method can preserve clean utility in the real-data multilinear-extension setting.

\subsubsection{Defense Sensitivity under Different Norm Constraints}

Table~\ref{tab:defense_all_norms} reports an additional defense sensitivity study under different norm constraints on the controlled clustered benchmark and the MovieLens real-data setting. 
This experiment is intended as a cross-norm sanity check rather than the main defense comparison, since the effectiveness of robust optimization depends on the geometry of the perturbation set, the attack budget, and the underlying similarity structure.

On the controlled clustered benchmark, the proposed robust method performs favorably under the \(\ell_1\)-norm constraint: it incurs almost no clean-data loss and avoids the large negative mitigation observed for the PGD robust baseline. 
Under the \(\ell_2\)-norm constraint, however, both methods obtain negative mitigation, indicating that this setting is difficult for the tested defense algorithms and that robust optimization does not effectively improve over the attacked greedy solution. 
Under the \(\ell_\infty\)-norm constraint, both methods achieve positive mitigation; the proposed method has smaller clean-data loss and slightly lower robustness variation, whereas the PGD robust baseline attains larger mitigation. 
These results suggest that defense performance on the clustered benchmark is sensitive to the norm geometry and should be interpreted as a robustness--mitigation trade-off rather than a uniform dominance result.

On the MovieLens real-data setting, the robustness values are small for both methods across all tested norms, suggesting that the real-data objective is relatively stable under the considered bounded perturbations. 
The proposed robust method generally incurs smaller clean-data loss than the PGD robust baseline, but the mitigation values remain close to zero and can be negative. 
This indicates that, in the real-data setting, robust protection is more parameter-sensitive and the benefit of defense is weaker than in the structured clustered benchmark. Overall, Table~\ref{tab:defense_all_norms} shows that the proposed defense can provide favorable trade-offs in representative settings, but its effectiveness is not uniform across all norm constraints. 
These results support our main-text focus on the controlled \(\ell_\infty\)-bounded clustered benchmark, where the attack induces clear structural degradation and the defense effect is more interpretable.
\begin{table}[t]
\centering
\caption{Defense sensitivity under different norm constraints on the controlled clustered benchmark and MovieLens.}
\label{tab:defense_all_norms}
\scriptsize
\setlength{\tabcolsep}{4pt}
\resizebox{\columnwidth}{!}{
\begin{tabular}{c c l c c c}
\toprule
\textbf{Norm} 
& \(\boldsymbol{\epsilon}\) 
& \textbf{Method} 
& \textbf{Loss} \(\downarrow\) 
& \textbf{Robust.} \(\downarrow\) 
& \textbf{Mitig.} \(\uparrow\) \\
\midrule
\multicolumn{6}{c}{\textbf{Controlled clustered benchmark}} \\
\midrule
\(\ell_1\) & 2 
& Proposed robust & -0.0025 & 0.0472 & 0.0000 \\
&  & PGD robust      &  0.1723 & 0.0375 & -0.1910 \\
\cmidrule(lr){1-6}
\(\ell_2\) & 2 
& Proposed robust & 0.2352 & 0.1960 & -0.4377 \\
&  & PGD robust      & 0.1949 & 0.2251 & -0.4286 \\
\cmidrule(lr){1-6}
\(\ell_\infty\) & 0.5 
& Proposed robust &  0.0163 & 0.1445 & 0.0187 \\
&  & PGD robust      & -0.0526 & 0.1549 & 0.0668 \\
\midrule
\multicolumn{6}{c}{\textbf{MovieLens real-data setting}} \\
\midrule
\(\ell_1\) & 2 
& Proposed robust & 0.0005 & 0.0041 & -0.0011 \\
&  & PGD robust      & 0.0184 & 0.0012 & -0.0180 \\
\cmidrule(lr){1-6}
\(\ell_2\) & 2 
& Proposed robust & 0.0005 & 0.0044 & 0.0009 \\
&  & PGD robust      & 0.0234 & 0.0012 & -0.0222 \\
\cmidrule(lr){1-6}
\(\ell_\infty\) & 0.1 
& Proposed robust & 0.0205 & 0.0105 & -0.0223 \\
&  & PGD robust      & 0.0255 & 0.0007 & -0.0283 \\
\bottomrule
\end{tabular}
}
\end{table}

\subsection{Downstream Evaluation on the Controlled Benchmark}
\label{results:downstream}

The previous metrics evaluate summarization utility and structural quality. 
We now examine whether such degradation also affects downstream task reliability on the controlled clustered benchmark. 
Following the downstream protocol described in the experimental setup, we use the rounded top-\(k\) summaries as small training sets and evaluate nearest-neighbor classification on clean test samples. 
An auxiliary real-data downstream evaluation on MovieLens is provided in the Appendix, where the downstream labels are less directly aligned with the summarization objective.

\begin{table}[t]
\centering
\caption{Downstream classification performance on the controlled clustered benchmark.}
\label{tab:downstream_cluster}
\scalebox{1.1}{
\begin{tabular}{l c c c}
\toprule
\textbf{Method} 
& \textbf{Coverage} $\uparrow$
& \textbf{NN Acc.} $\uparrow$
& \textbf{Recovery} $\uparrow$ \\
\midrule
Clean summary        & 1.000 & 1.000 & 1.000 \\
Proposed attack           & 0.820 & 0.820 & 0.000 \\
PGD attack           & 0.860 & 0.860 & 0.222 \\
Proposed robust      & 1.000 & 1.000 & 1.000 \\
PGD robust baseline  & 1.000 & 1.000 & 1.000 \\
\bottomrule
\end{tabular}
}
\vspace{-6pt}
\end{table}

Table~\ref{tab:downstream_cluster} shows that structural degradation can translate into downstream performance loss. 
The proposed attack reduces cluster coverage from \(1.000\) to \(0.820\), and NN Acc. drops accordingly from \(1.000\) to \(0.820\). 
The PGD attack also reduces downstream accuracy, but the drop is smaller. 
After robust optimization, both robust methods recover full cluster coverage and restore NN Acc. to \(1.000\). 
These results indicate that preserving class/cluster coverage is critical for the task-level reliability of the selected summary.

\begin{figure}[t]
\centering
\includegraphics[width=0.82\columnwidth]{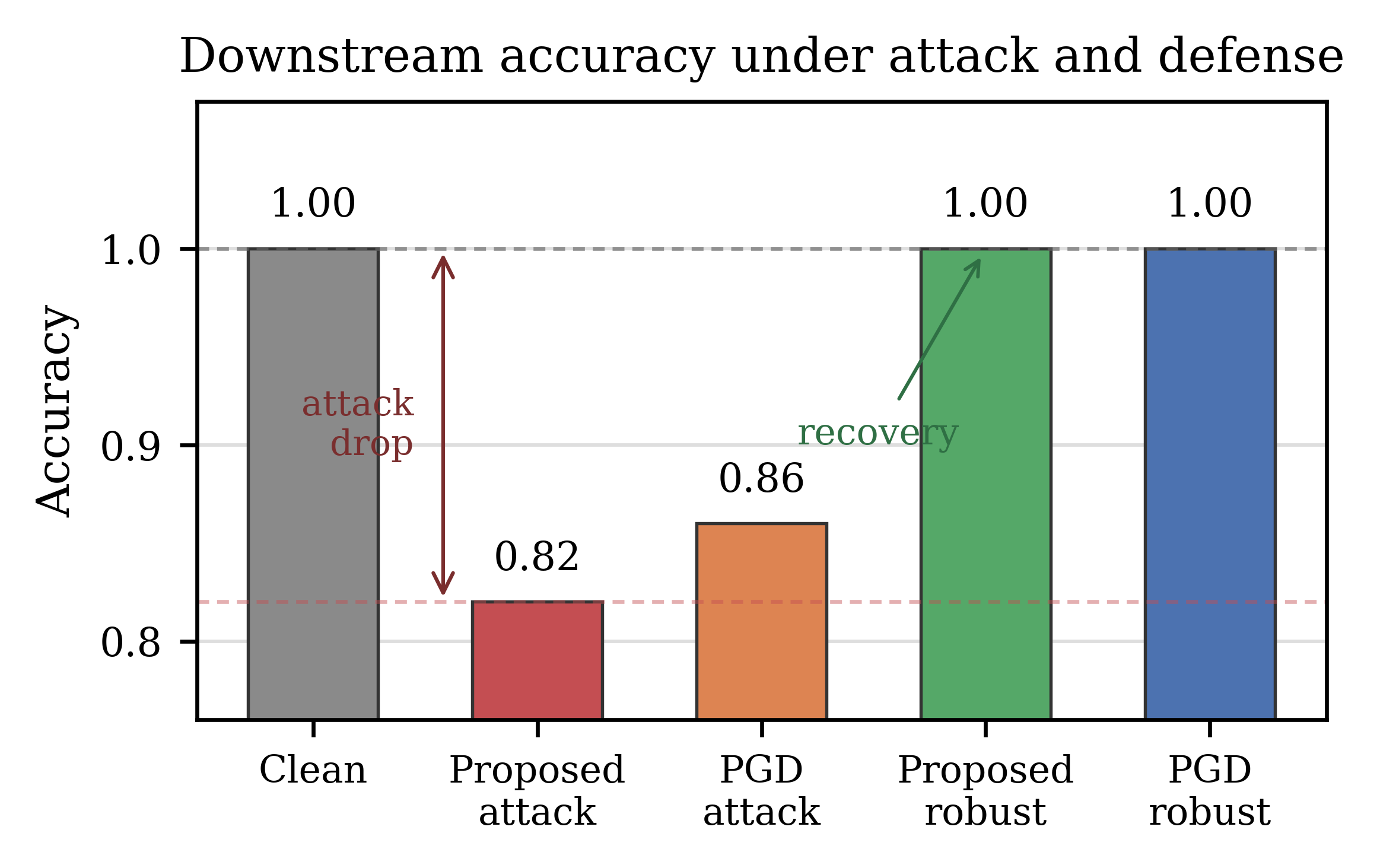}
\caption{Downstream NN accuracy under attack and defense on the controlled clustered benchmark.}
\label{fig:cluster_downstream_accuracy}
\end{figure}

Fig.~\ref{fig:cluster_downstream_accuracy} visually summarizes the downstream effect: attacks reduce NN accuracy, whereas robust summaries recover full task-level accuracy by restoring cluster coverage.

\section{Conclusion}
\label{sec:conclu}

This paper investigated multi-target robustness evaluation for continuous data summarization under similarity-level adversarial perturbations. 
We established a DR-submodular formulation for a class of similarity-based summarization objectives via multilinear extension and \(m\)-weak monotonicity, and used this structure to develop approximation algorithms for multi-target attack generation and robust defense. Empirical results on real-data multilinear-extension summarization and a controlled clustered benchmark show measurable, budget-sensitive vulnerabilities under optimized similarity perturbations. 
The controlled benchmark further reveals that attacks can degrade rounded summary structure and reduce downstream task performance, while robust summaries can recover task-level reliability by restoring class/cluster coverage. 
These findings highlight data summarization as a security-relevant upstream component in trustworthy AI pipelines. 
They also show that robust protection remains sensitive to data structure, perturbation geometry, and parameter choices, leaving practical robust summarization as an important direction for future work.

\appendix

\subsection{\bf Proofs for Sec.~\ref{sec:preli}}\label{supp:proofs-pre}
\subsubsection{Several properties in Sec.~\ref{sec:preli}}
For a set function $f:2^{\Omega}\to\mathbb{R}$, its multilinear extension $F:[0,1]^n\to\mathbb{R}$ is defined as
\[
F(\mathbf{x})
=
\mathbb{E}_{S\sim \mathbf{x}}[f(S)]
=
\sum_{S\subseteq\Omega}
f(S)
\prod_{i\in S}x_i
\prod_{i\notin S}(1-x_i),
\]
where each element $i\in\Omega$ is independently included in $S$ with probability $x_i$.
Since $F$ is a multilinear polynomial, it is continuously differentiable and twice differentiable on $[0,1]^n$. 
Moreover, if $f$ is submodular, then its multilinear extension $F$ is DR-submodular. 
Indeed, for any distinct $i,j$,
\begin{eqnarray*}
\frac{\partial^2 F(\mathbf{x})}{\partial x_i\partial x_j}&=&
\mathbb{E}_{S\sim\mathbf{x}_{-\{i,j\}}}
\left[
f(S\cup\{i,j\})-f(S\cup\{i\})\right.\\
&&-\left.f(S\cup\{j\})+f(S)
\right]\\
&\le& 0.
\end{eqnarray*}
In addition, if $f$ is non-negative, then $F$ is non-negative. 
If $f$ is $m$-weakly monotone, namely
\[
f(A\cup B)\ge m f(A),\qquad \forall A,B\subseteq\Omega,
\]
then its multilinear extension is also $m$-weakly monotone:
\[
F(\mathbf{x}\vee\mathbf{y})\ge mF(\mathbf{x}),
\qquad \forall \mathbf{x},\mathbf{y}\in[0,1]^n.
\]
Suppose $f$ is DR-submodular on $\mathcal{D}$. Then the following statements hold ~\cite{bian2017guaranteed}.
\begin{enumerate}[leftmargin=12pt]
\item \textbf{Gradient monotonicity.} If $f$ is differentiable, then
\[
\nabla f(\mathbf{x})\le \nabla f(\mathbf{y}), \qquad \forall \mathbf{x}\ge \mathbf{y}.
\]

\item \textbf{Hessian negativity.} If $f$ is twice differentiable, then
\[
\frac{\partial^{2}f(\mathbf{x})}{\partial \mathbf{x}_{i}\partial \mathbf{x}_{j}}\le 0,
\qquad \forall \mathbf{x}\in\mathcal{D},\ \forall i,j.
\]

\item \textbf{First-order condition.} If $f$ is continuously differentiable, then
\begin{equation}\label{lem:DR_equal}
    \langle \nabla f(\mathbf{x}),\mathbf{y}-\mathbf{x}\rangle
\ge
f(\mathbf{x}\vee \mathbf{y})+f(\mathbf{x}\wedge \mathbf{y})-2f(\mathbf{x}),
\qquad \forall \mathbf{x},\mathbf{y}\in\mathcal{D}.
\end{equation}
\end{enumerate}

In order to prove the smoothness of the model, we provide the definition as follows firstly.
\begin{definition}[~\cite{thekumparampil2019,xu2023}] A function $f(\mathbf{x},\mathbf{y})$ is said to be $L$-smoothness if there exists a constant $L=\max\{L_{\mathbf{x}},L_{\mathbf{y}}\}$ such that
\begin{align*}
    &\lVert \nabla_{\mathbf{x}}f(\mathbf{x}_{1},\mathbf{y}_{1})-\nabla_{\mathbf{x}}f(\mathbf{x}_{2},\mathbf{y}_{2})\rVert \leq L_{\mathbf{x}}(\lVert\mathbf{x}_{1}-\mathbf{x}_{2}\rVert+ \lVert \mathbf{y}_{1}-\mathbf{y}_{2}\rVert),\\
    & \lVert \nabla_{\mathbf{y}}f(\mathbf{x}_{1},\mathbf{y}_{1})-\nabla_{\mathbf{y}}f(\mathbf{x}_{2},\mathbf{y}_{2})\rVert \leq L_{\mathbf{y}}(\lVert \mathbf{x}_{1}-\mathbf{x}_{2}\rVert+ \lVert \mathbf{y}_{1}-\mathbf{y}_{2}\rVert).
\end{align*}
where $\lVert \cdot \rVert$ denotes the Euclidean norm ($l_{2}$-norm).
\end{definition}

\begin{lemma}
Let $F$ be the multilinear extension of a bounded set function $f:2^\Omega\to\mathbb{R}$. 
Then $F$ is continuously differentiable and $L$-smooth on $[0,1]^n$ with
\[
L
=
\sup_{\mathbf{x}\in[0,1]^n}
\|\nabla^2 F(\mathbf{x})\|_2
<\infty.
\]
\end{lemma}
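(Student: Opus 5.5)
Since $F$ is a multilinear polynomial,
\[
F(\mathbf{x})=\sum_{S\subseteq\Omega} f(S)\prod_{i\in S}x_i\prod_{i\notin S}(1-x_i),
\]
it is $C^\infty$ on $\mathbb{R}^n$, and in particular continuously differentiable on $[0,1]^n$. The plan is to bound the Hessian explicitly on the compact cube and then pass from the Hessian bound to Lipschitz continuity of the gradient using the mean value theorem in integral form.

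\textbf{Step 1: Hessian entries.} Because $F$ is affine in each coordinate separately, $\partial^2F/\partial x_i^2\equiv 0$. For $i\neq j$, the formula recalled above gives
\[
\frac{\partial^2 F(\mathbf{x})}{\partial x_i\partial x_j}=\mathbb{E}_{S\sim\mathbf{x}_{-\{i,j\}}}\bigl[f(S\cup\{i,j\})-f(S\cup\{i\})-f(S\cup\{j\})+f(S)\bigr].
\]
With $B=\max_S|f(S)|<\infty$, each entry therefore satisfies $|\partial_{ij}F|\le 4B$. Hence, for every $\mathbf{x}\in[0,1]^n$,
\[
\|\nabla^2F(\mathbf{x})\|_2\le\|\nabla^2F(\mathbf{x})\|_F\le 4Bn.
\]
Alternatively, one can note that $\nabla^2F$ is continuous and the cube is compact. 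Either way, $L=\sup_{\mathbf{x}\in[0,1]^n}\|\nabla^2F(\mathbf{x})\|_2$ is finite.

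\textbf{Step 2: Lipschitz gradient.} Take $\mathbf{x},\mathbf{y}\in[0,1]^n$. The cube is convex, so the segment between them stays inside it. The integral form of the mean value theorem gives
\[
\nabla F(\mathbf{y})-\nabla F(\mathbf{x})=\int_0^1\nabla^2F\bigl(\mathbf{x}+t(\mathbf{y}-\mathbf{x})\bigr)(\mathbf{y}-\mathbf{x})\,dt.
\]
Bounding the integrand by the operator norm yields $\|\nabla F(\mathbf{y})-\nabla F(\mathbf{x})\|\le L\|\mathbf{y}-\mathbf{x}\|$. This is the single-block form of the smoothness definition above.

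\textbf{Step 3: the two-block definition.} To match the definition with $\mathbf{x}$ and a second variable block, apply the same argument to the joint variable $(\mathbf{x},\mathbf{y})$. The partial gradient $\nabla_{\mathbf{x}}$ is a sub-block of the full gradient. Using $\|(\mathbf{a},\mathbf{b})\|\le\|\mathbf{a}\|+\|\mathbf{b}\|$ then gives the stated inequalities with the same constant.

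There is no real obstacle in this argument. The only points needing care are that convexity of $[0,1]^n$ justifies the segment argument, and that boundedness of $f$ on the finite power set yields a finite $L$. Both are immediate.
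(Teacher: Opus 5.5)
Your Steps 1--2 are correct and follow the route the paper has in mind. The paper gives no separate proof, only the remark that $F$ is a multilinear polynomial and hence $C^2$, leaving implicit both the finiteness of $L$ and the passage from a Hessian bound to a Lipschitz gradient; your entrywise bound $|\partial_{ij}F|\le 4B$ (hence $L\le 4Bn$) and the integral mean-value argument on the convex cube make that reasoning explicit.

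You should drop Step 3, though. The lemma concerns a function of $\mathbf{x}$ alone, so the two-block definition holds trivially with $L_{\mathbf{y}}=0$. For a genuine second block, such as the similarity perturbation $\mathbf{v}$, your argument would need a bounded joint Hessian. The constant would then be the joint supremum, not the same $L$: for $f(x,y)=xy$ the $\mathbf{x}$-Hessian vanishes, yet $\nabla_{\mathbf{x}}f=y$ is not constant. Moreover, the paper's facility-location objective is not even differentiable in $\mathbf{v}$ where the $\max$ has ties.
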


In particular, the first-order condition implies that if $\mathbf{y}\ge \mathbf{x}$, then
\[
\langle \nabla f(\mathbf{x}),\mathbf{y}-\mathbf{x}\rangle \ge f(\mathbf{y})-f(\mathbf{x}),
\]
which shows that DR-submodular functions are up-concave, i.e., concave along any non-negative or non-positive direction~\cite{lian2024zeroth}. We evaluate constrained DR-submodular maximization algorithms using the standard notion of approximation.

\subsubsection{ Proof of Lem.~\ref{lem:multi-reso}}
\label{supp:proofs-pre}

\begin{proof}
We first prove the properties of the discrete utility
\[
f(S,\Omega)=r(S,\Omega)-\frac{\lambda}{n}q(S,\Omega),
\]
where
\[
r(S,\Omega)=\sum_{\mu\in\Omega}\max_{\nu\in S}s_{\mu,\nu},
\quad
q(S,\Omega)=\sum_{\mu\in S}\sum_{\nu\in S,\nu\neq\mu}s_{\mu,\nu}.
\]

First, by the definition of $\rho_\Omega$, for any nonempty $S\subseteq\Omega$,
\[
\frac{\lambda}{n}q(S,\Omega)
\le
\rho_\Omega r(S,\Omega).
\]
Since $0\le \rho_\Omega<1$, we have
\[
f(S,\Omega)
\ge
(1-\rho_\Omega)r(S,\Omega)
\ge 0.
\]
Also, $f(\emptyset,\Omega)=0$ by convention. Hence $f$ is non-negative.

Second, the facility-location term $r(S,\Omega)$ is submodular. 
Indeed, for any $A\subseteq B\subseteq\Omega$ and $e\notin B$,
\[
r(A\cup\{e\},\Omega)-r(A,\Omega)
=
\sum_{\mu\in\Omega}
\left[
s_{\mu,e}-\max_{\nu\in A}s_{\mu,\nu}
\right]_+,
\]
which is no smaller than
\[
\sum_{\mu\in\Omega}
\left[
s_{\mu,e}-\max_{\nu\in B}s_{\mu,\nu}
\right]_+
=
r(B\cup\{e\},\Omega)-r(B,\Omega),
\]
because $A\subseteq B$. 
Thus $r$ is submodular.

On the other hand, the redundancy term $q(S,\Omega)$ is supermodular. 
For any $A\subseteq B\subseteq\Omega$ and $e\notin B$,
\begin{eqnarray*}
&&q(A\cup\{e\},\Omega)-q(A,\Omega)\\
&=&
\sum_{\nu\in A}(s_{e,\nu}+s_{\nu,e})\\
&\le&
\sum_{\nu\in B}(s_{e,\nu}+s_{\nu,e})
=
q(B\cup\{e\},\Omega)-q(B,\Omega),
\end{eqnarray*}
where the inequality follows from $s_{\mu,\nu}\ge0$. 
Therefore, $-\frac{\lambda}{n}q$ is submodular, and hence $f$ is submodular.

Third, we prove weak monotonicity. 
For any $A,B\subseteq\Omega$, let $C=A\cup B$. 
Since $r$ is monotone, $r(C,\Omega)\ge r(A,\Omega)$. 
Using the definition of $\rho_\Omega$, we obtain
\[
f(C,\Omega)
\ge
(1-\rho_\Omega)r(C,\Omega)
\ge
(1-\rho_\Omega)r(A,\Omega).
\]
Moreover, $f(A,\Omega)\le r(A,\Omega)$. 
Therefore,
\[
f(A\cup B,\Omega)
\ge
(1-\rho_\Omega)f(A,\Omega).
\]
Thus, $f$ is $m_\Omega$-weakly monotone with $m_\Omega=1-\rho_\Omega$.

Finally, since $F(\mathbf{x},\Omega)$ is the multilinear extension of $f(S,\Omega)$, the standard properties of multilinear extensions imply that non-negativity, submodularity, and $m_\Omega$-weak monotonicity of $f$ respectively lead to non-negativity, DR-submodularity, and $m_\Omega$-weak monotonicity of $F$. 
This completes the proof.
\end{proof}

\subsection{\bf Proofs for Sec.~\ref{subsec:appro}}\label{supp:proofs}

\subsubsection{Proof of Lemmas}

Proof of Lem. \ref{lem:attack_key}
\begin{proof}
According to the iteration rule of Alg.~\ref{alg:attack}, i.e., 
$$
\mathbf{v}_{k+1} = \arg \min_{\mathbf{v}\in \mathcal{P}}\lVert \mathbf{v}-(\mathbf{v}_{k}- \eta \nabla_{\mathbf{v}}\phi(\mathbf{v}_{k},\mathbf{w}_{k},\{\mathbf{x}^{i}_{T}(\mathbf{v}_{k},\mathbf{w}_{k})\}))\rVert^{2},
$$
we obtain the following inequality for any $\mathbf{v}\in \mathcal{P}$:
\begin{align*}
    &\lVert \mathbf{v}_{k+1}-\mathbf{v}\rVert_{2}^{2}\\
    \leq &\lVert \mathbf{v}_{k}-\eta\nabla_{\mathbf{v}}\phi(\mathbf{v}_{k},\mathbf{w}_{k},\{\mathbf{x}_{T}^{i}(\mathbf{v}_{k},\mathbf{w}_{k})\})-\mathbf{v}\rVert^{2}\\
    = &\lVert \mathbf{v}_{k}-\mathbf{v} \rVert^{2}-2\eta \underbrace{ \langle \nabla_{\mathbf{v}}\phi(\mathbf{v}_{k},\mathbf{w}_{k},\{\mathbf{x}_{T}^{i}(\mathbf{v}_{k},\mathbf{w}_{k})\}),\mathbf{v}_{k}-\mathbf{v}\rangle}_{:={\bf A}} \\
    &+\eta^{2}\lVert \nabla_{\mathbf{v}}\phi(\mathbf{v}_{k},\mathbf{w}_{k},\{\mathbf{x}_{T}^{i}(\mathbf{v}_{k},\mathbf{w}_{k})\})\rVert^{2},
\end{align*}
which can be expressed as 
\begin{align*}
    {\bf A}\geq & \frac{\lVert \mathbf{v}_{k+1}-\mathbf{v}\rVert_{2}^{2}- \lVert \mathbf{v}_{k}-\mathbf{v} \rVert^{2}}{2\eta}\\
    &-\frac{-\eta\lVert \nabla_{\mathbf{v}}\phi(\mathbf{v}_{k},\mathbf{w}_{k},\{\mathbf{x}_{T}^{i}(\mathbf{v}_{k},\mathbf{w}_{k})\})\rVert^{2}}{2}.
\end{align*}
Combining with the convexity of the function $\phi(\mathbf{v},\mathbf{w},\{\mathbf{x}^{i}\})$ w.r.t. $\mathbf{v}$, we have
\begin{align*}
    &\phi(\mathbf{v},\mathbf{w}_{k},\{\mathbf{x}_{T}^{i}(\mathbf{v}_{k},\mathbf{w}_{k})\})- \phi(\mathbf{v}_{k},\mathbf{w}_{k},\{\mathbf{x}_{T}^{i}(\mathbf{v}_{k},\mathbf{w}_{k})\})\\
    \geq& \underbrace{\left \langle \nabla_{\mathbf{v}}\phi(\mathbf{v}_{k},\mathbf{w}_{k},\{\mathbf{x}_{T}^{i}(\mathbf{v}_{k},\mathbf{w}_{k})\}),\mathbf{v}-\mathbf{v}_{k}\right \rangle}_{{\bf A}}\\
    \geq& \frac{\lVert \mathbf{v}_{k+1}-\mathbf{v}\rVert^{2}- \lVert \mathbf{v}_{k}-\mathbf{v} \rVert^{2}}{2\eta}\\
    &-\frac{\eta}{2}\lVert \nabla_{\mathbf{v}}\phi(\mathbf{v}_{k},\mathbf{w}_{k},\{\mathbf{x}_{T}^{i}(\mathbf{v}_{k},\mathbf{w}_{k})\})\rVert^{2}\\
    \geq& \frac{\lVert \mathbf{v}_{k+1}-\mathbf{v}\rVert^{2}- \lVert \mathbf{v}_{k}-\mathbf{v} \rVert^{2}}{2\eta}-\frac{\eta M^{2}}{2}.
\end{align*}
where the last inequality follows from the boundedness of $\nabla_{\mathbf{v}}\phi$ as descried in Eq.~\eqref{eq:attack_bound_phi}. 
\end{proof}
\subsubsection{ Proof of Theorems}

Proof of Thm. \ref{thm:approximation}
\begin{proof}
Firstly, we show that the output $\mathbf{x}_{T}$ is feasible. Due to $\mathbf{d}_{t}\leq \bar{\mathbf{d}}_{t}$ and the set $\mathcal{X}$ is down-closed convex, we have $\mathbf{d}_{t}\in \mathcal{X}$. Combined that $\mathbf{x}_{T}$ is the combination of all iterations $\mathbf{d}_{t}, t=1,\ldots T-1$. According to the smoothness of function $F(\mathbf{v},\mathbf{x})$ and Eq.~\eqref{eq:pre_equ}, we have the following inequality for all $\mathbf{x}\in \mathcal{X}$   
\begin{align*}
    &F(\mathbf{v},\mathbf{x}_{t+1})- F(\mathbf{v},\mathbf{x}_{t})\\
    \geq& \langle \nabla F(\mathbf{v}, \mathbf{x}_{t}), \mathbf{x}_{t+1}-\mathbf{x}_{t}\rangle -\frac{L}{2}\lVert \mathbf{x}_{t+1}-\mathbf{x}_{t}\rVert^{2}\\
    \overset{\eqref{eq:pre_equ}}{\geq}& \langle g(\mathbf{x}_{t}), \frac{1}{T}\mathbf{d}_{t}\rangle -\frac{L}{2T^{2}}\lVert \mathbf{d}_{t}\rVert^{2}\\
    \overset{(a)}{\geq}& \frac{1}{T} \langle g(\mathbf{x}_{t}),\mathbf{x}\rangle -\frac{L}{2T^{2}}D^{2}\\
     \overset{(b)}{\geq}& \frac{1}{T} \langle g(\mathbf{x}_{t}), \mathbf{x}\vee \mathbf{x}_{t}-\mathbf{x}_{t} \rangle -\frac{L}{2T^{2}}D^{2}\\
     \overset{(c)}{\geq}& \frac{1}{T} \langle \nabla F(\mathbf{v}, \mathbf{x}_{t}), \mathbf{x}\vee \mathbf{x}_{t}-\mathbf{x}_{t}\rangle -\frac{L}{2T^{2}}D^{2}\\
    \overset{(d)}{\geq}& \frac{1}{T}\left[F(\mathbf{v}, \mathbf{x}\vee \mathbf{x}_{t})-F(\mathbf{v}, \mathbf{x}_{t}) \right]-\frac{L}{2T^{2}}D^{2}\\
    \overset{(f)}{\geq}& \frac{1}{T}\left[m F(\mathbf{v}, \mathbf{x})-F(\mathbf{v}, \mathbf{x}_{t}) \right]-\frac{L}{2T^{2}}D^{2}
\end{align*}
where $(a)$ is according to the Step 4 in Alg.~\ref{alg:Mgreedy}, $(b)$ follows from the fact that $\mathbf{x}+\mathbf{x}_{t}\geq \mathbf{x}\vee \mathbf{x}_{t}$ for all $\mathbf{x},\mathbf{x}_{t} \in \mathcal{X}$, and $(c)$ is a consequence of the definition $g(\mathbf{x}_{t})= \nabla F(\mathbf{v}, \mathbf{x}_{t})\vee \mathbf{0}$, $(d)$ and $(f)$ are obtained from DR-submodularity and monotonicity, respectively. Thus, we have
\begin{align*}
    &F(\mathbf{v}, \mathbf{x}_{t+1})- m F(\mathbf{v}, \mathbf{x})\\
    \geq &(1-\frac{1}{T})\left(F(\mathbf{v}, \mathbf{x}_{t})- m F(\mathbf{v},\mathbf{x})\right)-\frac{L}{2T^{2}}D^{2}
\end{align*}
Summing up the above inequalities over $t=0,\ldots, T-1$, we obtain
\begin{align*}
    &F(\mathbf{v}, \mathbf{x}_{T})- m F(\mathbf{v}, \mathbf{x})\\
    \geq &(1-\frac{1}{T})^{T}\left(F(\mathbf{v}, \mathbf{x}_{0})- m F(\mathbf{v},\mathbf{x}^{*})\right)-\frac{L}{2T}D^{2}.
\end{align*}
Furthermore, by choosing $\mathbf{x}=\mathbf{x}^{*}=\arg \max_{\mathbf{x}\in \mathcal{X}}F(\mathbf{v},\mathbf{x})$, and combining with the non-negativity of $F$, i.e., $F(\mathbf{v}, \mathbf{x}_{0})\geq 0$, we have
\begin{align*}
    F(\mathbf{v}, \mathbf{x}_{T}) &\geq m\left(1-(1-\frac{1}{T})^{T}\right)F(\mathbf{v},\mathbf{x})-\frac{L}{2T}D^{2}\\
    &\geq m(1-1/e) F(\mathbf{v},\mathbf{x}^{*})-\frac{L}{2T}D^{2},
\end{align*}
where the last inequality follows from $(1-1/T)^{T}\geq 1/e$. Therefore, by setting $T=\mathcal{O}(LD^{2}/ \epsilon)$, we obtain the desired result.
\end{proof}

Proof of Thm.~\ref{thm:attack}
\begin{proof}
Firstly, we sum up all the inequalities in Eq.~\eqref{eq:attack_lem} over $k$, yielding the following
\begin{align} \notag
&\sum^{K}_{k=1}\left[\phi(\mathbf{v},\mathbf{w}_{k+1},\{\mathbf{x}_{k+1}^{i}\})- \phi(\mathbf{v}_{k},\mathbf{w}_{k+1},\{\mathbf{x}_{k+1}^{i}\}\right]\\ \notag
\geq & \frac{1}{2\eta}(\lVert \mathbf{v}_{K}-\mathbf{v}\rVert^{2}- \lVert \mathbf{v}_{0}-\mathbf{v} \rVert^{2})-\frac{K\eta M^{2}}{2}\\ \label{eq:attck_thm}
\geq & -\frac{D_{\mathcal{P}}}{2\eta} -\frac{K\eta M^{2}}{2},
\end{align}
where the last inequality comes from the bound of $\mathcal{P}$, i.e., $\lVert \mathbf{v}_{0}-\mathbf{v} \rVert^{2}\leq D_{\mathcal{P}}$ for $\mathbf{v}\in \mathcal{P}$. 
Combining the constraint set of $\mathbf{w}$ with  
$$
\phi(\mathbf{v},\mathbf{w},\{\mathbf{x}^{i}\})=\sum^{I}_{i=1}\mathbf{w}_{i}F(\mathbf{v},\mathbf{x}^{i}),
$$
we get the $m(1-1/e)$-approximate optimality of $F(\mathbf{v},\mathbf{x}^{i})$ as demonstrated in Thm.~\ref{thm:approximation}. It implies that $\{\mathbf{x}^{i}\}$ satisfies $\forall \mathbf{w}$
\begin{equation}\label{eq:appro_equa}
    \phi(\mathbf{v},\mathbf{w},\{\mathbf{x}_{k+1}^{i}\})\geq m(1-1/e) \max_{\mathbf{x}^{i}\in \mathcal{X}_{i}}\phi(\mathbf{v},\mathbf{w},\{\mathbf{x}^{i}\})-\mathcal{O}(\frac{LD^{2}}{T}).
\end{equation}
Following Step 3 in Alg.~\ref{alg:attack} and the inequality in Eq.~\eqref{eq:appro_equa}, we have 
\begin{align*}
&\phi(\mathbf{v}_{k},\mathbf{w}_{k+1},\{\mathbf{x}_{k+1}^{i}\})
    =\max_{\mathbf{w}\in \mathcal{W}}\phi(\mathbf{v}_{k},\mathbf{w},\{\mathbf{x}_{k+1}^{i}\})\\
    \geq& m(1-1/e) \max_{\mathbf{x}^{i}\in \mathcal{X}_{i}, \mathbf{w}\in \mathcal{W}}\phi(\mathbf{v}_{k},\mathbf{w},\{\mathbf{x}^{i}\})-\mathcal{O}(\frac{LD^{2}}{T}).
\end{align*}
It follows from Eq.~\eqref{eq:attck_thm} that
\begin{align*}
    &\sum^{K-1}_{k=0}\phi(\mathbf{v},\mathbf{w}_{k+1},\{\mathbf{x}_{k+1}^{i}\} \\
    &- \sum^{K-1}_{k=0}\left[m(1-1/e)\max_{\mathbf{x}^{i}\in \mathcal{X}_{i},\mathbf{w}\in\mathcal{W}}\phi(\mathbf{v}_{k},\mathbf{w},\{\mathbf{x}^{i}\})\right]\\
    \geq &-\frac{D_{\mathcal{P}}}{2\eta}-\frac{K\eta M^{2}}{2}-K\times \mathcal{O}(\frac{LD^{2}}{T}),
\end{align*}
which holds for all $\mathbf{v}\in \mathcal{P}$. In the following, we choose $\eta =\frac{1}{\sqrt{K}}$ and divide both sides by $K$, 
\begin{align*}
    &\frac{1}{K}\sum^{K-1}_{k=0}\phi(\mathbf{v},\mathbf{w}_{k+1},\{\mathbf{x}_{k+1}^{i}\})\\
    &- m(1-1/e) \frac{1}{K}\sum^{K-1}_{k=0}\left[\max_{\mathbf{x}^{i}\in \mathcal{X}_{i},\mathbf{w}\in\mathcal{W}}\phi(\mathbf{v}_{k},\mathbf{w},\{\mathbf{x}^{i}\})\right]\\
    \geq & -\frac{D_{\mathcal{P}}}{\sqrt{K}}+\frac{M^{2}}{\sqrt{K}}-\mathcal{O}(\frac{LD^{2}}{T}).
\end{align*} 
Then, combining with the convexity of $\phi(\cdot,\mathbf{w},\{\mathbf{x}^{i}\})$, we have 
\begin{align*}
    & \frac{1}{K}\sum^{K-1}_{k=0}\phi(\mathbf{v}_{k},\mathbf{w},\{\mathbf{x}^{i}\})\geq \phi(\frac{1}{K}\sum^{K-1}_{k=0}\mathbf{v}_{k},\mathbf{w},\{\mathbf{x}^{i}\}).
\end{align*}
So we get 
\begin{align*}
    &\frac{1}{K}\sum^{K-1}_{k=0}\phi(\mathbf{v},\mathbf{w}_{k+1},\{\mathbf{x}_{k+1}^{i}\})\\
    &- m(1-1/e) \max_{\mathbf{x}^{i}\in \mathcal{X}_{i},\mathbf{w}\in\mathcal{W}} \phi(\frac{1}{K}\sum^{K-1}_{k=0} \mathbf{v}_{k},\mathbf{w},\{\mathbf{x}^{i}\})\\
   \geq &-\frac{D_{\mathcal{P}}}{\sqrt{K}}+\frac{M^{2}}{\sqrt{K}}-\mathcal{O}(\frac{LD^{2}}{T}),
\end{align*}
which holds for all $\mathbf{v}\in \mathcal{P}$. 
Observing 
$$
\max_{\mathbf{w}\in \mathcal{W}, \mathbf{x}^{i}\in \mathcal{X}_{i}}\phi(\mathbf{v},\mathbf{w},\{\mathbf{x}^{i}\})\geq \phi(\mathbf{v},\mathbf{w}_{k+1},\{\mathbf{x}_{k+1}^{i}\}),
$$
and choosing $\mathbf{v}=\arg\min_{\mathbf{v}}\phi(\mathbf{v},\mathbf{w},\{\mathbf{x}^{i}\})$, we obtain 
\begin{align*}
&\min_{\mathbf{v}\in\mathcal{P}}\max_{\mathbf{w}\in \mathcal{W}, \mathbf{x}^{i}\in \mathcal{X}_{i}}\phi(\mathbf{v},\mathbf{w},\{\mathbf{x}^{i}\})\\
&-m(1-1/e) \max_{\mathbf{w}\in \mathcal{W},\mathbf{x}^{i}\in \mathcal{X}_{i}}\phi(\bar{\mathbf{v}},\mathbf{w},\{\mathbf{x}^{i}\})\\
\geq &-\frac{D_{\mathcal{P}}}{\sqrt{K}}+\frac{M^{2}}{\sqrt{K}}-\mathcal{O}(\frac{LD^{2}}{T}),
\end{align*}
where $\bar{\mathbf{v}} = \frac{1}{K}\sum^{K-1}_{k=0}\mathbf{v}_{k}$. Furthermore, with 
$$
K=\mathcal{O}(D_{\mathcal{P}}/\epsilon^{2}), \qquad  T=\mathcal{O}(LD^{2}/{\epsilon}),
$$ 
the output of Alg.~\ref{alg:attack}, $\bar{\mathbf{v}}$ satisfies
\begin{equation*}
    m(1-1/e) \max_{w\in \mathcal{W}, \mathbf{x}^{i}\in \mathcal{X}_{i}}\phi(\bar{\mathbf{v}},\mathbf{w},\{\mathbf{x}^{i}\})\leq  {\rm OPT}_{minmax} + \epsilon,
\end{equation*}
where 
$$
{\rm OPT}_{\rm minmax} = \min_{\mathbf{v}\in\mathcal{P}}\max_{\mathbf{w}\in \mathcal{W}, \mathbf{x}^{i}\in \mathcal{X}_{i}}\phi(\mathbf{v},\mathbf{w},\{\mathbf{x}^{i}\}).
$$
This means the output $\bar{\mathbf{v}}$ is an $(\alpha,\epsilon)$-approximation minmax solution with $\alpha = m(1-1/e)$ for adversarial sample generation across multiple submodular models.
\end{proof}

\subsection{\bf Proofs for Sec.\ref{sec:defense}}
\subsubsection{ Auxiliary Properties of $G$ and $\Phi$}
Under Assumption~\ref{ass:robust}, the function $G(\cdot, \mathbf{w},\{\mathbf{v}^{j}\})$ is $m$-weakly monotone and DR-submodular. The gradients of $G(\mathbf{x},\mathbf{w},\{\mathbf{v}^{j}\})$ w.r.t. $\mathbf{w}$ and $\mathbf{x}$ can be computed as follows:
\begin{eqnarray*}
    &\nabla_{w} G(\mathbf{x},\mathbf{w},\{\mathbf{v}^{j}\})=\sum^{J}_{j=1}e_{j}F(\mathbf{x},\mathbf{v}^{j})+\gamma(\mathbf{w}-\frac{\mathbf{1}}{J}),\\
    & \nabla_{\mathbf{x}} G(\mathbf{x},\mathbf{w},\{\mathbf{v}^{j}\})=\sum^{J}_{j=1}\mathbf{w}_{j}\nabla_{\mathbf{x}} F(\mathbf{x},\mathbf{v}^{j}).
\end{eqnarray*}
This implies that $G(\mathbf{x},\mathbf{w},\{\mathbf{v}^{j}\})$ is $L$-smooth and hence the following inequality holds:
\begin{align}
\notag
    & \lVert \nabla_{x} G(\mathbf{x},\mathbf{w},\{\mathbf{v}_{1}^{j}\})- \nabla_{\mathbf{x}} G(\mathbf{x},\mathbf{w},\{\mathbf{v}_{2}^{j}\}) \rVert\\ 
    \notag
    =& \sum^{J}_{j=1}w_{j}\lVert \nabla_{x} F(\mathbf{x},\mathbf{v}_{1}^{j})-\nabla_{\mathbf{x}} F(\mathbf{x},\mathbf{v}_{2}^{j})\rVert\\ \notag
    \leq& \sum^{J}_{j=1}\mathbf{w}_{j}\lVert \mathbf{v}^{j}_{1}- \mathbf{v}^{j}_{2}\rVert\\  
    \label{eq:smooth_G}
     \leq & \max_{j\in [J]}\lVert \mathbf{v}^{j}_{1}- \mathbf{v}^{j}_{2}\rVert,
\end{align}
where the last inequality follows from the fact that the sum of $\mathbf{w}_{j}$ over $j$ is equal to $1$, as required by the constraint set $\mathcal{W}$.

 To begin with, we define
\begin{align}\label{eq:defense_phi}
    \Phi(\mathbf{x})\overset{\mathrm{def}}{=} \min_{\mathbf{w}\in \mathcal{W},\mathbf{v}^{j}\in \mathcal{P}_{j}}G(\mathbf{x}, \mathbf{w},\{\mathbf{v}^{j}\})
\end{align}
and let $(\mathbf{w}^{*}(\mathbf{\mathbf{x}}),\{{\mathbf{v}^{j}}^{*}(\mathbf{x})\}) =\arg\min_{\mathbf{w}\in \mathcal{W},\mathbf{v}^{j}\in \mathcal{P}_{j}}\Phi(\mathbf{x})$. We can easily obtain the following properties regarding $\Phi(\mathbf{x})$. Since the proof is similar to that of Lemma 4.3 in~\cite{Lin2020a}, we omit it here.

\begin{lemma}\label{lem:robust_phi}
Under Assumption~\ref{ass:robust}, we have:
\begin{itemize}[leftmargin=10pt]
    \item the gradient of $\Phi$ is 
    $$
    \nabla \Phi (\mathbf{x}) = \nabla_{\mathbf{x}} G(\mathbf{x},\mathbf{w}^{*}(\mathbf{x}),\{{\mathbf{v}^{j}}^{*}(\mathbf{x})\}),
    $$
    and the function $\Phi(\mathbf{x})$ is $L_{\Phi}$-smooth, where $L_{\Phi}=(L+\frac{L^{2}}{\mu})$.
    \item Define $\mathbf{w}^{*}(\mathbf{v})=\arg\min_{\mathbf{w}\in \mathcal{W}}G(\mathbf{x}, \mathbf{w}, \{\mathbf{v}^{j}\})$ with fixed $\mathbf{x}$, the optimal solution $\mathbf{w}^{*}(\mathbf{v})$ is $\frac{L}{\mu}$-Lipschitz continuous: 
    $$
    \lVert \mathbf{w}^{*}(\mathbf{v}_{1})- \mathbf{w}^{*}(\mathbf{v}_{2}) \rVert\leq \frac{L}{\mu}\max_{j\in [J]}\lVert \mathbf{v}^{j}_{1}-\mathbf{v}^{j}_{2}\rVert,
    $$
    where $\mathbf{v}$ denotes the set of variables $\{\mathbf{v}^{j}\}$.
\end{itemize}

\end{lemma}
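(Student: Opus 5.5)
Both parts follow the standard sensitivity argument for min--max problems with a strongly convex inner problem, in the spirit of Lemma~4.3 of~\cite{Lin2020a}. The plan is to first identify the inner minimizer, then apply a Danskin-type envelope argument, and finally use strong convexity to get Lipschitz stability of the minimizers.

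\emph{Uniqueness of the inner minimizer.} For fixed $\mathbf{x}$, the inner objective $G(\mathbf{x},\cdot,\cdot)$ is $\mu$-strongly convex in $\{\mathbf{v}^j\}$ by Assumption~\ref{ass:robust}. It is also $\gamma$-strongly convex in $\mathbf{w}$, since $G$ is linear in $\mathbf{w}$ apart from the quadratic $\frac{\gamma}{2}\|\mathbf{w}-\mathbf{1}/J\|^2$. Taking the modulus $\mu$ (with $\mu\le\gamma$, absorbing the coupling term $\sum_j w_jF(\mathbf{x},\mathbf{v}^j)$ into the assumed joint strong convexity), the inner problem has a unique minimizer $(\mathbf{w}^*(\mathbf{x}),\{{\mathbf{v}^j}^*(\mathbf{x})\})$ over the compact convex set $\mathcal{W}\times\prod_j\mathcal{P}_j$.

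\emph{Gradient formula for $\Phi$.} Uniqueness together with continuity of $\nabla_{\mathbf{x}}G$ lets me apply Danskin's theorem, which gives $\nabla\Phi(\mathbf{x})=\nabla_{\mathbf{x}}G(\mathbf{x},\mathbf{w}^*(\mathbf{x}),\{{\mathbf{v}^j}^*(\mathbf{x})\})$.

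\emph{Lipschitz continuity of the minimizer in $\mathbf{x}$.} Write $\mathbf{z}=(\mathbf{w},\{\mathbf{v}^j\})$ and let $\mathbf{z}_1,\mathbf{z}_2$ be the minimizers for $\mathbf{x}_1,\mathbf{x}_2$. The first-order optimality conditions give $\langle\nabla_{\mathbf{z}}G(\mathbf{x}_1,\mathbf{z}_1),\mathbf{z}_2-\mathbf{z}_1\rangle\ge0$ and the symmetric inequality with the roles of $1$ and $2$ exchanged. Adding them and using strong monotonicity of $\nabla_{\mathbf{z}}G(\mathbf{x}_1,\cdot)$ yields
\[
\mu\|\mathbf{z}_1-\mathbf{z}_2\|^2\le\langle\nabla_{\mathbf{z}}G(\mathbf{x}_2,\mathbf{z}_2)-\nabla_{\mathbf{z}}G(\mathbf{x}_1,\mathbf{z}_2),\mathbf{z}_1-\mathbf{z}_2\rangle\le L\|\mathbf{x}_1-\mathbf{x}_2\|\,\|\mathbf{z}_1-\mathbf{z}_2\|,
\]
where the last step uses joint $L$-smoothness. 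Hence $\|\mathbf{z}_1-\mathbf{z}_2\|\le\frac{L}{\mu}\|\mathbf{x}_1-\mathbf{x}_2\|$.

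\emph{Smoothness of $\Phi$.} Splitting with the triangle inequality,
\[
\|\nabla\Phi(\mathbf{x}_1)-\nabla\Phi(\mathbf{x}_2)\|\le L\|\mathbf{x}_1-\mathbf{x}_2\|+L\|\mathbf{z}_1-\mathbf{z}_2\|\le\Bigl(L+\frac{L^2}{\mu}\Bigr)\|\mathbf{x}_1-\mathbf{x}_2\|,
\]
which gives $L_\Phi=L+L^2/\mu$.

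\emph{Lipschitz continuity of $\mathbf{w}^*(\mathbf{v})$.} Here the same variational-inequality argument is repeated with $\mathbf{x}$ fixed, now in the variable $\mathbf{w}$ with parameter $\mathbf{v}$. The gradient $\nabla_{\mathbf{w}}G$ has $j$-th entry $F(\mathbf{x},\mathbf{v}^j)+\gamma(w_j-1/J)$, which gives
\[
\mu\|\mathbf{w}_1-\mathbf{w}_2\|^2\le\langle\nabla_{\mathbf{w}}G(\mathbf{v}_2,\mathbf{w}_2)-\nabla_{\mathbf{w}}G(\mathbf{v}_1,\mathbf{w}_2),\mathbf{w}_1-\mathbf{w}_2\rangle.
\]
The difference of gradients has entries $F(\mathbf{x},\mathbf{v}^j_2)-F(\mathbf{x},\mathbf{v}^j_1)$, each bounded by $L\|\mathbf{v}^j_1-\mathbf{v}^j_2\|$. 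To obtain the stated $\max_j$ form rather than a factor $\sqrt{J}$, I would bound the inner product through its pairing with $\mathbf{w}_1-\mathbf{w}_2$, reusing the $\max_j$ estimate of~\eqref{eq:smooth_G}. If that fails to remove the $\sqrt{J}$, I would take the constant $L$ to be defined so that it absorbs the factor.

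The main obstacle is making the modulus bookkeeping honest. The inner problem is not automatically jointly strongly convex, because the coupling term $\sum_j w_jF(\mathbf{x},\mathbf{v}^j)$ is bilinear-like in $(\mathbf{w},\mathbf{v})$, so I would lean on the paper's Assumption~\ref{ass:robust} as stated. A second, smaller issue is converting Euclidean norms over the stacked $\mathbf{v}$ into the $\max_j$ form.
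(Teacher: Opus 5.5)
Your route (Danskin for the gradient, then the two first-order optimality inequalities plus strong monotonicity for $\frac{L}{\mu}$-stability of the minimizer, then the triangle inequality for $L_\Phi$) is exactly what the paper intends when it defers to Lemma~4.3 of~\cite{Lin2020a}. However, it rests on an input that Assumption~\ref{ass:robust} does not supply. $G(\mathbf{x},\cdot,\cdot)$ must be strongly convex \emph{jointly} in $\mathbf{z}=(\mathbf{w},\{\mathbf{v}^j\})$. You need this both for uniqueness of the minimizer, without which Danskin gives no gradient, and for the step $\mu\lVert\mathbf{z}_1-\mathbf{z}_2\rVert^2\le\cdots$. The assumption only gives $\mu$-strong convexity in $\{\mathbf{v}^j\}$ for \emph{fixed} $\mathbf{w}$, and $\gamma$-strong convexity in $\mathbf{w}$ for fixed $\{\mathbf{v}^j\}$. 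Because of the cross term $w_j\nabla_{\mathbf{v}^j}F$, these do not combine, so ``absorbing the coupling into the assumed joint strong convexity'' invokes something that is not assumed.

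This is not bookkeeping: the first bullet can actually fail. Take $J=2$ and $\mathbf{v}=(\mathbf{p},u)$ with $u$ scalar. Let $F(\mathbf{x},\mathbf{v})=2+\langle\mathbf{p},\mathbf{x}\rangle+u$ on $\mathcal{X}=[0,1]^2$, with $\mathcal{P}_j=\{\mathbf{e}_j\}\times[-1,1]$, $\lambda_v=1$, $\gamma=\tfrac12$. Every item of Assumption~\ref{ass:robust} holds, with $m=1$, $\mu=1$, $L=1$. Minimizing out $u^j=-w_j$ leaves, on the simplex with $w_1=t$, the function $t(2+x_1)+(1-t)(2+x_2)-\tfrac12(t-\tfrac12)^2+\mathrm{const}$. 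This is concave in $t$, so the minimizer jumps between vertices and $\Phi(\mathbf{x})=\min\{x_1,x_2\}+\mathrm{const}$, which is not differentiable on $x_1=x_2$.

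So you need an explicit extra hypothesis, namely joint strong convexity of $G(\mathbf{x},\cdot,\cdot)$. This holds, e.g., when $\gamma\mu>\sup_j\lVert\nabla_{\mathbf{v}^j}F\rVert^2$, by a Schur-complement argument on each $(w_j,\mathbf{v}^j)$ block. The $\mu$ in $L_\Phi$ must then be the joint modulus. The paper's appeal to Lin et al.\ tacitly makes the same assumption.

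The second bullet needs no joint convexity. For fixed $(\mathbf{x},\{\mathbf{v}^j\})$, the $\mathbf{w}$-problem is linear plus $\frac{\gamma}{2}\lVert\mathbf{w}-\mathbf{1}/J\rVert^2$. Hence $\mathbf{w}^*(\mathbf{v})=\mathrm{proj}_{\mathcal{W}}(\mathbf{1}/J-\mathbf{F}(\mathbf{v})/\gamma)$ with $\mathbf{F}(\mathbf{v})=(F(\mathbf{x},\mathbf{v}^j))_{j=1}^J$, and nonexpansiveness gives $\lVert\mathbf{w}^*(\mathbf{v}_1)-\mathbf{w}^*(\mathbf{v}_2)\rVert\le\gamma^{-1}\lVert\mathbf{F}(\mathbf{v}_1)-\mathbf{F}(\mathbf{v}_2)\rVert$. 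The natural modulus here is $\gamma$; your $\mu\le\gamma$ is an extra normalization.

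Two steps in your version do not hold as written:
\begin{itemize}
\item The entrywise bound $\lvert F(\mathbf{x},\mathbf{v}^j_1)-F(\mathbf{x},\mathbf{v}^j_2)\rvert\le L\lVert\mathbf{v}^j_1-\mathbf{v}^j_2\rVert$ requires a bound on $\lVert\nabla_{\mathbf{v}}F\rVert$, i.e.\ a Lipschitz constant of $F$ itself, not its smoothness constant.
\item The pairing trick cannot remove the $\sqrt{J}$. Take $J$ even, both minimizers in the relative interior of $\mathcal{W}$, and $\mathbf{F}(\mathbf{v}_1)-\mathbf{F}(\mathbf{v}_2)=\delta(1,-1,\dots,1,-1)$, which is realizable with all $\lVert\mathbf{v}^j_1-\mathbf{v}^j_2\rVert$ equal when $F$ is affine in $\mathbf{v}$. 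Then $\lVert\mathbf{w}^*(\mathbf{v}_1)-\mathbf{w}^*(\mathbf{v}_2)\rVert=\sqrt{J}\,\delta/\gamma$, while every $\lvert F(\mathbf{x},\mathbf{v}^j_1)-F(\mathbf{x},\mathbf{v}^j_2)\rvert=\delta$.
\end{itemize}
So the $\max_j$ form holds only with a constant carrying $\sqrt{J}$, or with the stacked norm on the right. Your fallback of enlarging the constant is therefore the required fix, not an optional one.
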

    
Under Assumption~\ref{ass:robust} and with the step size $\eta = 1/L$ in Alg.~\ref{alg:robust}, we have the following inequality according to the strongly convex with respect to $\mathbf{v}^{j}$, which states that the iteration $\mathbf{v}^{j}_{T}(\mathbf{x}_{k})$ satisfies
\begin{equation}\label{strongly convex}
\lVert \mathbf{v}^{j}_{T}(\mathbf{x}_{k}) - {\mathbf{v}^{j}}^{*}(\mathbf{x}_{k})\rVert \leq (1-\frac{\mu}{L})^{T/2}D_{\mathcal{P}_{j}},
\end{equation}
where $\mathbf{v}^{j^{*}}(\mathbf{x}_{k})=\arg\min_{\mathbf{v}\in \mathcal{P}_{j}}F(\mathbf{x}_{k}, \mathbf{v}^{j}(\mathbf{x}_{k}))$.
    
Similarly, since $G(\mathbf{x},\mathbf{w},\{\mathbf{v}^{j}\})$ is $\gamma$-strongly convex and $\gamma$-smooth regarding $\mathbf{w}$, the optimal vector $\mathbf{w}^{*}(\mathbf{x})$ can be determined exactly at each iterate $k$. The following lemma characterizes the relationship between the function values at two consecutive iteration points. 

\begin{lemma}\label{lem:robust_difference}
Let 
$$ 
\Delta_{k}=\nabla \Phi(\mathbf{x}_{k})-\nabla G_{\mathbf{x}}(\mathbf{x}_{k},  \mathbf{w}^{*}(\{\mathbf{v}^{j}_{T}\}_{k}),\{\mathbf{v}_{T}^{j}(\mathbf{x}_{k})\}),
$$ 
where $\Phi$ is defined as in Eq.~\eqref{eq:defense_phi}. Then, under Assumption~\ref{ass:robust}, 
the sequence $\{\mathbf{x}_{k}\}_{k\geq1}$ generated by Alg.~\ref{alg:robust} satisfies the following inequality:
\begin{align*}
&\Phi(\mathbf{x}_{k+1})- \Phi(\mathbf{x}_{k})\\
\geq &\frac{1}{K}\left(m\Phi(\mathbf{x}^{*})-\Phi(\mathbf{x}_{k}) - 2D_{\mathcal{X}}\lVert\Delta_{k}\rVert -\frac{L_{\Phi}}{2K}D_{\mathcal{X}}^{2}\right),
\end{align*}
where $\mathbf{x}^{*}$ is an optimal solution, $L_{\phi}$ is the smoothness parameter of $\phi(\mathbf{x})$, and $D_{\mathcal{X}}$ is the diameter of the constraint set $\mathcal{X}$.
\end{lemma}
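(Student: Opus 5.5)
The approach mirrors the one-step analysis in the proof of Theorem~\ref{thm:approximation}, with $\Phi$ in place of $F(\mathbf{v},\cdot)$ and an error term that accounts for using the inexact gradient $\hat{\mathbf{g}}_k:=\nabla_{\mathbf{x}}G(\mathbf{x}_k,\mathbf{w}^{*}(\{\mathbf{v}^{j}_{T}\}_{k}),\{\mathbf{v}^{j}_{T}(\mathbf{x}_{k})\})$ instead of $\nabla\Phi(\mathbf{x}_k)$.

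\textbf{Step 1 (smoothness).} By Lemma~\ref{lem:robust_phi}, $\Phi$ is $L_\Phi$-smooth. With $\mathbf{x}_{k+1}-\mathbf{x}_k=\frac1K\mathbf{d}_k$ and $\|\mathbf{d}_k\|\le D_{\mathcal X}$ (since $\mathbf{d}_k\le\bar{\mathbf{d}}_k\in\mathcal{X}$ and $\mathcal X$ is down-closed), I would write
\[
\Phi(\mathbf{x}_{k+1})-\Phi(\mathbf{x}_k)\ge \tfrac1K\langle\nabla\Phi(\mathbf{x}_k),\mathbf{d}_k\rangle-\tfrac{L_\Phi}{2K^2}D_{\mathcal X}^2 .
\]

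\textbf{Step 2 (replace the true gradient).} Split $\langle\nabla\Phi(\mathbf{x}_k),\mathbf{d}_k\rangle=\langle\hat{\mathbf{g}}_k,\mathbf{d}_k\rangle+\langle\Delta_k,\mathbf{d}_k\rangle\ge\langle\hat{\mathbf{g}}_k,\mathbf{d}_k\rangle-D_{\mathcal X}\|\Delta_k\|$. Then use~\eqref{eq:pre_defense} to get $\langle\hat{\mathbf{g}}_k,\mathbf{d}_k\rangle=\langle[\hat{\mathbf{g}}_k]_+,\bar{\mathbf{d}}_k\rangle$. By the LMO choice of $\bar{\mathbf{d}}_k$, this is at least $\langle[\hat{\mathbf{g}}_k]_+,\mathbf{x}^*\rangle$.

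Since $[\hat{\mathbf{g}}_k]_+\ge\mathbf 0$ and $\mathbf{x}^*\ge\mathbf{x}^*\vee\mathbf{x}_k-\mathbf{x}_k\ge \mathbf 0$, this in turn is at least $\langle[\hat{\mathbf{g}}_k]_+,\mathbf{x}^*\vee\mathbf{x}_k-\mathbf{x}_k\rangle\ge\langle\hat{\mathbf{g}}_k,\mathbf{x}^*\vee\mathbf{x}_k-\mathbf{x}_k\rangle$. Now swap back to the true gradient at a second cost of $D_{\mathcal X}\|\Delta_k\|$, assuming $\|\mathbf{x}^*\vee\mathbf{x}_k-\mathbf{x}_k\|\le D_{\mathcal X}$. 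This gives $\langle\nabla\Phi(\mathbf{x}_k),\mathbf{x}^*\vee\mathbf{x}_k-\mathbf{x}_k\rangle-D_{\mathcal X}\|\Delta_k\|$, and together with the first swap this accounts for the factor $2D_{\mathcal X}\|\Delta_k\|$.

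\textbf{Step 3 (concavity and weak monotonicity for $\Phi$).} This is the main obstacle: I need
\[
\langle\nabla\Phi(\mathbf{x}_k),\mathbf{y}-\mathbf{x}_k\rangle\ge\Phi(\mathbf{y})-\Phi(\mathbf{x}_k)\quad\text{for }\mathbf{y}=\mathbf{x}^*\vee\mathbf{x}_k\ge\mathbf{x}_k,
\]
and $\Phi(\mathbf{x}^*\vee\mathbf{x}_k)\ge m\Phi(\mathbf{x}^*)$. A minimum of DR-submodular functions is not DR-submodular in general, so I would argue directly.

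For the first inequality, use Danskin's formula from Lemma~\ref{lem:robust_phi}: $\nabla\Phi(\mathbf{x}_k)=\nabla_{\mathbf{x}}G(\mathbf{x}_k,\mathbf{w}^*(\mathbf{x}_k),\{\mathbf{v}^{j*}(\mathbf{x}_k)\})$. Since $G(\cdot,\mathbf{w}^*,\{\mathbf{v}^{j*}\})$ is DR-submodular, up-concavity gives $\langle\nabla\Phi(\mathbf{x}_k),\mathbf{y}-\mathbf{x}_k\rangle\ge G(\mathbf{y},\mathbf{w}^*(\mathbf{x}_k),\cdot)-\Phi(\mathbf{x}_k)\ge\Phi(\mathbf{y})-\Phi(\mathbf{x}_k)$, because $\Phi$ is a minimum.

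For the second, let $(\mathbf{w}',\mathbf{v}')$ attain $\Phi(\mathbf{y})$. Weak monotonicity of each $F(\cdot,\mathbf{v}^{j})$, together with nonnegativity of the regularizers and $m\le1$, gives $G(\mathbf{y},\mathbf{w}',\mathbf{v}')\ge mG(\mathbf{x}^*,\mathbf{w}',\mathbf{v}')\ge m\Phi(\mathbf{x}^*)$.

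Combining Steps 1--3 yields the claimed one-step inequality.
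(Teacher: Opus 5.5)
Your proposal is correct and follows essentially the paper's argument: smoothness of $\Phi$, the clipping identity, the LMO comparison with $\mathbf{x}^*$, the $\mathbf{x}^*\vee\mathbf{x}_k-\mathbf{x}_k$ trick, and Danskin's formula combined with up-concavity and weak monotonicity. The differences are minor and the assumption you flag is not needed: you do the second gradient swap on unclipped gradients against $\mathbf{x}^*\vee\mathbf{x}_k-\mathbf{x}_k$ rather than on clipped gradients against $\mathbf{x}^*$ via $\|\bar\Delta_k\|\le\|\Delta_k\|$, you apply weak monotonicity through $\Phi(\mathbf{x}^*\vee\mathbf{x}_k)$ rather than at the fixed inner minimizer at $\mathbf{x}_k$, and the bound $\|\mathbf{x}^*\vee\mathbf{x}_k-\mathbf{x}_k\|\le D_{\mathcal X}$ holds automatically because $\mathbf{0}\le\mathbf{x}^*\vee\mathbf{x}_k-\mathbf{x}_k\le\mathbf{x}^*$ and $\mathbf{0},\mathbf{x}^*\in\mathcal{X}$.
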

\begin{proof}
We begin the proof by leveraging the $L_{\Phi}$-smoothness of $\Phi(\mathbf{x})$ and   
$$
\nabla \Phi(\mathbf{x}_{k}) = \nabla G_{\mathbf{x}}(\mathbf{x}_{k}, \mathbf{w}^{*}(\{\mathbf{v}^{j}_{T}\}_{k}),\{{\mathbf{v}^{j}}^{*}(\mathbf{x}_{k})\})
$$
as stated in Lem.~\ref{lem:robust_phi}, it holds
\begin{align}
    \notag
    & \Phi(\mathbf{x}_{k+1})-\Phi(\mathbf{x}_{k})\\ 
    \notag
    \geq & \left \langle \nabla \Phi(\mathbf{x}_{k}), \mathbf{x}_{k+1}-\mathbf{x}_{k}\right \rangle-\frac{L_{\Phi}}{2}\lVert \mathbf{x}_{k+1}-\mathbf{x}_{k}\rVert^{2}_{2}\\ 
    \notag
    \geq & \left \langle \nabla \Phi(\mathbf{x}_{k}), \mathbf{x}_{k+1}-\mathbf{x}_{k}\right \rangle-\frac{L_{\Phi}}{2}\lVert\mathbf{x}_{k+1}-\mathbf{x}_{k}\rVert^{2}_{2} \\
    \label{eq:defense_diff_step8}
    = & \frac{1}{K}\left \langle\nabla \Phi(\mathbf{x}_{k}), \mathbf{d}_{k}\right \rangle-\frac{L_{\Phi}}{2K^{2}}\lVert \mathbf{d}_{k}\rVert^{2}_{2} \\
    \notag
    = & \frac{1}{K}\left \langle \nabla G_{\mathbf{x}}(\mathbf{x}_{k},  \mathbf{w}^{*}(\{\mathbf{v}^{j}_{T}\}_{k}),\{\mathbf{v}_{T}^{j}(\mathbf{x}_{k})\}), \mathbf{d}_{k}\right \rangle \\
    \notag
    & + \frac{1}{K}\left \langle \Delta_{k}, \mathbf{d}_{k}\right \rangle -\frac{L_{\Phi}}{2K^{2}}\lVert \mathbf{d}_{k}\rVert^{2}_{2}\\ 
    \notag
    = & \frac{1}{K}\left \langle [\nabla G_{\mathbf{x}}(\mathbf{x}_{k})]_{+}, \mathbf{d}_{k}\right \rangle + \frac{1}{K}\left \langle \Delta_{k}, \mathbf{d}_{k}\right \rangle -\frac{L_{\Phi}}{2K^{2}}\lVert \mathbf{d}_{k}\rVert^{2}_{2} \\ 
    \label{eq:defense_lem_diff_1}
    \geq & \frac{1}{K}\left \langle [\nabla G_{\mathbf{x}}(\mathbf{x}_{k})]_{+},  \mathbf{x}^{*}\right \rangle + \frac{1}{K}\left \langle \Delta_{k}, \mathbf{d}_{k}\right \rangle -\frac{L_{\Phi}}{2K^{2}}\lVert \mathbf{d}_{k}\rVert^{2}_{2} \\
    \notag
    = & \frac{1}{K}\left \langle [\nabla \Phi(\mathbf{x}_{k})]_{+}, \mathbf{x}^{*}\right \rangle + \frac{1}{K}\left(\langle \Delta_{k}, \mathbf{d}_{k}\rangle -\langle \bar{ \Delta}_{k}, \mathbf{x}^{*}\rangle\right)\\  \notag
    &-\frac{L_{\Phi}}{2K^{2}}\lVert\mathbf{d}_{k}\rVert^{2}_{2},
\end{align}
where Eq.~\eqref{eq:defense_diff_step8} holds following Step 7 of Alg.~\ref{alg:robust}, Eq.~\eqref{eq:defense_lem_diff_1} follows  Step 6 of Alg.~\ref{alg:robust}, the second equality holds since $\Delta_{k}=\nabla \Phi(\mathbf{x}_{k})-\nabla G_{\mathbf{x}}(\mathbf{x}_{k}, \mathbf{w}^{*}(\{\mathbf{v}^{j}_{T}\}_{k}),\{\mathbf{v}_{T}^{j}(\mathbf{x}_{k})\})$, and the third equality is due to Eq.~\eqref{eq:pre_defense}. Moreover, the second inequality comes from the linear maximization in Step 6, i.e.,
$$
\langle [\nabla G_{\mathbf{x}}(\mathbf{x}_{k})]_{+}, \mathbf{d}_{k} \rangle\geq \langle [\nabla G_{\mathbf{x}}(\mathbf{x}_{k})]_{+}, \mathbf{x}^{*}\rangle,
$$ 
where 
$$
\mathbf{x}^{*}\overset{\mathrm{def}}{=}\arg\max_{\mathbf{x}\in\mathcal{X}}\Phi(\mathbf{x})=\arg\max_{\mathbf{x}\in \mathcal{X}}\min_{\mathbf{w}\in \mathcal{W},\mathbf{v}^{j}\in \mathcal{P}_{j}}G(\mathbf{x},\mathbf{w},\{\mathbf{v}^{j}\}).
$$
The last inequality is from $\mathbf{x}^{*} \geq \mathbf{0}$, and the  $\bar{\Delta}_{k}$ in the last equality is defined as
$$
\bar{\Delta}_{k} = [\nabla \Phi(\mathbf{x}_{k})]_{+} - [\nabla G_{\mathbf{x}}(\mathbf{x}_{k}, \mathbf{w}^{*}(\{\mathbf{v}^{j}_{T}\}_{k}),\{\mathbf{v}_{T}^{j}(\mathbf{x}_{k})\})]_{+}.
$$
Then, focusing on the first term on the right-hand side of Inequality~\eqref{eq:defense_lem_diff_1}, we obtain from the DR-submodular of $F(\mathbf{x},{\mathbf{v}^{j}}^{*}(\mathbf{x}))$
\begin{align} 
\notag
    &\langle [\nabla \Phi(\mathbf{x}_{k})]_{+}, \mathbf{x}^{*}\rangle \\ \notag
    \geq &\langle [\nabla \Phi(\mathbf{x}_{k})]_{+}, \mathbf{x}^{*} \vee \mathbf{x}_{k}- \mathbf{x}_{k}\rangle \\ 
    \notag
    \geq &\langle \nabla G_{\mathbf{x}}(\mathbf{x}_{k}, \mathbf{w}^{*}(\mathbf{x}_{k}),\{{\mathbf{v}^{j}}^{*}(\mathbf{x}_{k})\}), \mathbf{x}^{*} \vee \mathbf{x}_{k}-\mathbf{x}_{k}\rangle\\ 
    \notag
    \geq & G(\mathbf{x}^{*}\vee \mathbf{x}_{k}, \mathbf{w}^{*}(\mathbf{x}_{k}),\{{\mathbf{v}^{j}}^{*}(\mathbf{x}_{k})\})-G(\mathbf{x}_{k},\mathbf{w}^{*}(\mathbf{x}_{k}), \{{\mathbf{v}^{j}}^{*}(\mathbf{x}_{k})\})\\ 
    \notag
    \geq & m G(\mathbf{x}^{*}, \mathbf{w}^{*}(\mathbf{x}_{k}),\{{\mathbf{v}^{j}}^{*}(\mathbf{x}_{k})\})-G(\mathbf{x}_{k},\mathbf{w}^{*}(\mathbf{x}_{k}), \{{\mathbf{v}^{j}}^{*}(\mathbf{x}_{k})\})\\   
    \label{eq:defense_lem_diff_2}
    \geq &m \Phi(\mathbf{x}^{*})-\Phi(\mathbf{v}_{k}).
\end{align}
Here, the first inequality follows from  $\mathbf{x}^{*} \vee \mathbf{x}_{k}\leq  \mathbf{x}^{*} + \mathbf{x}_{k}$ for $\mathbf{x}^{*}, \mathbf{x}_{k}\in \mathcal{X}$, the second one comes from the definition of $\Phi(\mathbf{x})$ in Lem.~\ref{lem:robust_phi} and $\mathbf{x}^{*} \vee \mathbf{x}_{k}-\mathbf{x}_{k} \geq \mathbf{0}$, the third inequality is obtained using the DR-submodularity of $G(\cdot, \mathbf{w},\{\mathbf{v}^{j}\})$ and Lem.~\ref{lem:DR_equal} (iii) with $\mathbf{x}^{*} \vee \mathbf{x}_{k} \geq \mathbf{x}_{k}$, and the fourth follows from the $m$-weak monotonicity of $G(\cdot, \mathbf{w},\{\mathbf{v}^{j}\})$ as along with the fact 
$$
G(\mathbf{x}^{*}, \mathbf{w}^{*}(\mathbf{x}_{k}),\{{\mathbf{v}^{j}}^{*}(\mathbf{x}_{k})\})\geq \Phi(\mathbf{x}^{*}),  
$$ 
where $\Phi(\mathbf{x}^{*}) = \max_{\mathbf{x}\in \mathcal{X}}\min_{\mathbf{w}\in \mathcal{W},\mathbf{v}^{j}\in \mathcal{P}_{j}} G(\mathbf{x},\mathbf{w},\{\mathbf{v}^{j}\})$ in the last inequality. Combining Eqs.~\eqref{eq:defense_lem_diff_1} and \eqref{eq:defense_lem_diff_2}, we obtain 
\begin{align*}
    \notag
    &\Phi(\mathbf{x}_{k+1})- \Phi(\mathbf{x}_{k})\\ 
    \geq & \frac{1}{K}(m \Phi(\mathbf{x}^{*})-\Phi(\mathbf{x}_{k})) + \frac{1}{K}\left(\langle \Delta_{k}, \mathbf{d}_{k}\rangle -\langle \bar{ \Delta}_{k}, \mathbf{x}^{*}\rangle\right) \\
    &-\frac{L_{\Phi}}{2K^{2}}\lVert \mathbf{d}_{k}\rVert^{2}_{2}\\ 
    \geq & \frac{1}{K}(m\Phi(\mathbf{x}^{*})-\Phi(\mathbf{x}_{k})) - \frac{2D_{\mathcal{X}}}{K}\lVert\Delta_{k}\rVert -\frac{L_{\Phi}}{2K^{2}}D_{\mathcal{X}}^{2},
\end{align*}
where the second inequality holds by Cauchy-Schwartz inequality, i.e., 
$$
\lVert a \rVert \lVert b \rVert \geq \langle a, b\rangle\geq - \lVert a \rVert \lVert b \rVert,
$$ 
$\mathbf{d}_{k},\mathbf{x}^{*}\in \mathcal{X}$ and $\lVert \bar{\Delta}_{k}\rVert \leq \lVert \Delta_{k}\rVert$. 
   
In fact, for two sets  $A=\{i: [\nabla \Phi(\mathbf{x}_{k})]_{i}\geq 0\}$ and $B=\{i: [\nabla G_{\mathbf{x}}(\mathbf{x}_{k}, \mathbf{w}^{*}(\{\mathbf{v}^{j}_{T}\}_{k}),\{\mathbf{v}_{T}^{j}(\mathbf{x}_{k})\})]_{i}\geq 0\}$, it is obvious that 
$$
\lvert [\bar{\Delta}_{k}]_{i}\rvert
\begin{cases}
     =\lvert [\Delta_{k}]_{i}\rvert, & i\in A\cap B\\
    \leq \lvert [\Delta_{k}]_{i}\rvert, &i\in A\cup B-A\cap B\\
    = 0\leq [\Delta_{k}]_{i}, &\text{Otherwise}\\
\end{cases}
$$
Thus, we have $\lVert \bar{\Delta}_{k}\rVert \leq \lVert \Delta_{k}\rVert$ and complete the proof.
 \end{proof}    
\subsubsection{ Proof of Theorem~\ref{thm:robust}}

Based on Lem.~\ref{lem:robust_difference}, we present the proof of Thm.~\ref{thm:robust} as follows:
\begin{proof}
First, we bound $\lVert \Delta_{k} \rVert$ on the right hand side as follows
\begin{align*}
     \lVert \Delta_{k} \rVert=& \lVert\nabla \Phi(\mathbf{x}_{k}) -\nabla G_{\mathbf{x}}(\mathbf{x}_{k},  \mathbf{w}^{*}(\{\mathbf{v}^{j}_{T}\}_{k}),\{\mathbf{v}_{T}^{j}(\mathbf{x}_{k})\})\rVert\\
    \leq & \lVert \nabla_{\mathbf{x}} G(\mathbf{x}_{k}, \mathbf{w}^{*}(\mathbf{x}_{k}),\{\mathbf{v}^{j^{*}}(\mathbf{x}_{k})\})\\
    &-\nabla G_{\mathbf{x}}(\mathbf{x}_{k}, \mathbf{w}^{*}(\mathbf{x}_{k}),\{\mathbf{v}_{T}^{j}(\mathbf{x}_{k})\})\rVert \\
    & +\lVert\nabla G_{\mathbf{x}}(\mathbf{x}_{k}, \mathbf{w}^{*}(\mathbf{x}_{k}),\{\mathbf{v}_{T}^{j}(\mathbf{x}_{k})\})\\
    &-\nabla G_{\mathbf{x}}(\mathbf{x}_{k}, \mathbf{w}^{*}(\{\mathbf{v}^{j}_{T}\}_{k}),\{\mathbf{v}_{T}^{j}(\mathbf{x}_{k})\})\rVert\\
    \leq &L \max_{j\in [J]}\lVert {\mathbf{v}^{j}}^{*}(\mathbf{x}_{k}) - \mathbf{v}_{T}^{j}(\mathbf{x}_{k}))\rVert \\
    & + \gamma \lVert \mathbf{w}^{*}(x_{k}) - \mathbf{w}^{*}(\{\mathbf{v}^{j}_{T}\}_{k}) \rVert \\
    \leq &(L+\gamma \frac{L}{\mu}) \max_{j\in [J]}\lVert {\mathbf{v}^{j}}^{*}(\mathbf{x}_{k}) - \mathbf{v}_{T}^{j}(\mathbf{x}_{k}))\rVert  \\
    \leq & (L+\gamma \frac{L}{\mu})(1-\frac{\mu}{L})^{T/2}D_{\mathcal{P}},
\end{align*}
where the second inequality is derived based on combining the inequality in Eq.~\eqref{eq:smooth_G} and $\gamma$-smoothness of $G(\mathbf{x},\mathbf{w},\{\mathbf{v}^{j}\})$ w.r.t. $\mathbf{w}$, the third inequality holds by combining $\frac{L}{\mu}$-Lipschitz continuous of $\mathbf{w}^{*}$ in Lem.~\ref{lem:robust_phi} with 
$$
\mathbf{w}^{*}(\mathbf{x}_{k})=\arg\max_{\mathbf{w}\in \mathcal{W}}G(\mathbf{x}_{k},\mathbf{w},\{\mathbf{v}^{j^{*}}(\mathbf{x}_{k})\}),
$$
and the last inequality follows from \eqref{strongly convex} and $D_{\mathcal{P}} = \max_{j}\{D_{\mathcal{P}_{j}}\}$.
        
Therefore, by setting $Q=(L+\gamma \frac{L}{\mu})$ and rearranging the inequality in Lem.~\ref{lem:robust_difference} through subtracting $m \Phi(\mathbf{x}^{*})$ from both sides, we obtain 
\begin{align}\label{eq:defense_thm}
\Phi(\mathbf{x}_{k+1})-m\Phi(\mathbf{x}^{*})\geq& (1-\frac{1}{K})(\Phi(\mathbf{x}_{k})-m\Phi(\mathbf{x}^{*})) \\ \notag
&-\frac{2Q D_{\mathcal{P}}D_{\mathcal{X}}}{K}(1-\frac{\mu}{L})^{T/2}-\frac{L_{\Phi}^{2}D^{2}_{\mathcal{X}}}{2K^{2}}.
\end{align}
Now, we telescope Eq.~\eqref{eq:defense_thm} over all the iterations for $k=0,\ldots,K-1$, which yields 
\begin{align*}
    \Phi(\mathbf{x}_{K})-m\Phi(\mathbf{x}^{*})
    \geq& (1-\frac{1}{K})^{K}(\Phi(\mathbf{x}_{0})-m\Phi(\mathbf{x}^{*})) \\
    &-\sum^{K-1}_{k=0}\frac{2Q D_{\mathcal{P}}D_{\mathcal{X}}}{K}\rho^{T/2} -\frac{L^{2}D^{2}_{\mathcal{X}}}{2K}\\
    \geq &(1-\frac{1}{K})^{K}(\Phi(\mathbf{x}_{0})-m\Phi(\mathbf{x}^{*}))\\
    &-2QD_{\mathcal{X}} D_{\mathcal{P}}\rho ^{T/2}-\frac{L_{\Phi}^{2}D^{2}_{\mathcal{X}}}{2K},
    \end{align*}
where $\rho = 1-\frac{\mu}{L}$. Furthermore, when 
$$
K\geq \frac{L_{\Phi}^{2}D^{2}_{\mathcal{X}}}{2\epsilon} \qquad \text{and} \qquad T\geq 2\log_{\rho}\frac{\epsilon}{2QD_{\mathcal{X}} D_{\mathcal{P}}},
$$
it holds that 
\begin{align*}
    \Phi(\mathbf{x}_{K})\geq &m\left(1-(1-\frac{1}{K})^{K}\right)\Phi(\mathbf{x}^{*}) - \epsilon\\
    \geq & m\left(1-1/e\right)\Phi(\mathbf{x}^{*})-\epsilon,
\end{align*}
where the second inequality is because of  $\Phi(\mathbf{x}_{0})\geq 0$ by definition and $(1-\frac{1}{K})^{K}\leq 1/e$ by simple calculation.
\end{proof}

\subsection{\bf Additional Experimental Details and Results}
\subsubsection{Detailed Definitions of Evaluation Metrics}
\label{app:metrics}
In this section, we provide the detailed definitions of the performance metrics used in the experiments. Unless otherwise specified, all attacked solutions are evaluated under the original clean objective, following the shared evaluation protocol adopted in the main text. For notational simplicity, we omit the explicit dependence of attacked solutions on $(p,\epsilon_p,D)$ when no confusion arises.

\paragraph{Attack Metrics}
Let $\mathbf{x}_{T,i}$ denote the clean-reference solution produced by Alg.~\ref{alg:Mgreedy} for model $i$ under the original similarity structure $\Omega_i$. For an attack type $p\in\{1,2,\infty\}$, attack budget $\epsilon_p$, and feasible-set upper bound $D$, let $\mathbf{x}^{\rm att}_{T,i}$ denote the attacked solution obtained by applying Alg.~\ref{alg:Mgreedy} to the perturbed data. We evaluate the attack effect under the clean objective $F(\cdot,\Omega_i)$.

{\bf Success Ratio of Attack.}
We first define the degradation of model $i$ under attack as
\begin{equation*}
\Delta^{\rm att}_{i}(p,\epsilon_{p},D)
=
F(\mathbf{x}_{T,i},\Omega_{i})
-
F(\mathbf{x}^{\rm att}_{T,i},\Omega_{i}).
\end{equation*}

A target model is regarded as successfully attacked if
\[
\Delta^{\rm att}_{i}(p,\epsilon_{p},D)\ge 0,
\]
that is, the attacked solution yields no larger clean-objective value than the clean-reference solution. Accordingly, the success ratio is defined as
\begin{equation*}
\textbf{Ratio}^{\rm att}(p,\epsilon_{p},D)
=
\frac{1}{I}\sum_{i=1}^{I}
\mathbf{1}\!\left\{
\Delta^{\rm att}_{i}(p,\epsilon_{p},D)\ge 0
\right\}.
\end{equation*}
A larger value of $\textbf{Ratio}^{\rm att}(p,\epsilon_p,D)$ indicates that the generated perturbation successfully degrades a larger fraction of target models.

{\bf Average Degradation and Attack Intensity.}
To measure the average attack effect over all target models, we define the average degradation as
\begin{equation*}
\bar{\Delta}^{\rm att}(p,\epsilon_{p},D)
=
\frac{1}{I}\sum_{i=1}^{I}
\left(
F(\mathbf{x}_{T,i},\Omega_{i})
-
F(\mathbf{x}^{\rm att}_{T,i},\Omega_{i})
\right).
\end{equation*}

Based on this quantity, we define the normalized attack intensity as
\begin{equation*}
\textbf{Intensity}^{\rm att}(p,\epsilon_{p},D)
=
\frac{\bar{\Delta}^{\rm att}(p,\epsilon_{p},D)}
{\frac{1}{I}\sum_{i=1}^{I}F(\mathbf{x}_{T,i},\Omega_{i})}.
\end{equation*}

A larger value of $\bar{\Delta}^{\rm att}(p,\epsilon_{p},D)$ or $\textbf{Intensity}^{\rm att}(p,\epsilon_{p},D)$ indicates a stronger attack and hence greater vulnerability of the clean-reference summarization solver.

\paragraph{Defense Metrics}

We use three metrics to evaluate the robust algorithm: {\bf Loss}, {\bf Robustness}, and {\bf Mitigation}. Let $\mathbf{x}^{\rm rob}_{T,i}$ denote the solution produced by Alg.~\ref{alg:robust} under the clean setting, and let $\mathbf{x}^{\rm rob,att}_{T,i}$ denote the robust solution obtained under attack. We continue to evaluate all quantities below under the clean objective $F(\cdot,\Omega_i)$.

{\bf Loss.}
This metric measures the clean-data utility loss caused by using the robust algorithm instead of the plain greedy algorithm. It is defined as
\begin{equation*}
{\bf Loss}
=
\frac{1}{I}\sum_{i=1}^{I}
\frac{
F(\mathbf{x}_{T,i},\Omega_i)
-
F(\mathbf{x}^{\rm rob}_{T,i},\Omega_i)
}{
F(\mathbf{x}_{T,i},\Omega_i)
}.
\end{equation*}
A smaller value of {\bf Loss} indicates that the robust algorithm preserves the clean-data utility more effectively.

{\bf Robustness.}
This metric measures the sensitivity of the robust solution to attacks by comparing the robust algorithm under clean and attacked settings. It is defined as
\begin{equation*}
{\bf Robustness}
=
\frac{1}{I}\sum_{i=1}^{I}
\frac{
\left|
F(\mathbf{x}^{\rm rob}_{T,i},\Omega_i)
-
F(\mathbf{x}^{\rm rob,att}_{T,i},\Omega_i)
\right|
}{
F(\mathbf{x}^{\rm rob}_{T,i},\Omega_i)
}.
\end{equation*}
A smaller value of {\bf Robustness} indicates that the robust algorithm is more stable under attacks.

{\bf Mitigation.}
This metric measures how much the robust algorithm reduces the degradation caused by attacks, compared with the attacked greedy solution. It is defined as
\begin{equation*}
{\bf Mitigation}
=
\frac{1}{I}\sum_{i=1}^{I}
\frac{
F(\mathbf{x}^{\rm rob,att}_{T,i},\Omega_i)
-
F(\mathbf{x}^{\rm att}_{T,i},\Omega_i)
}{
F(\mathbf{x}_{T,i},\Omega_i)
}.
\end{equation*}
A larger value of {\bf Mitigation} indicates that the robust algorithm more effectively offsets the damage caused by attacks. In particular, a positive value means that, on average, the robust algorithm achieves a better clean-objective value under attack than the attacked greedy solution.

\paragraph{Structural Metrics}

For the controlled clustered benchmark, we further evaluate the structural quality of the rounded summary. 
Let \(\mathcal{C}=\{C_1,\ldots,C_m\}\) be the set of ground-truth clusters, and let \(H\) denote the set of designated hub representatives, with one hub representative for each cluster. 
Given a rounded top-\(k\) summary \(S_k\), we report three structural metrics: {\bf Coverage}, {\bf Hit}, and {\bf Redundancy}.

{\bf Coverage.}
Coverage measures the fraction of ground-truth clusters represented by the selected summary. 
It is defined as
\[
\mathrm{Cov}(S_k)
=
\frac{
|\{c:S_k\cap C_c\neq\emptyset\}|
}{m}.
\]
A larger value of \(\mathrm{Cov}(S_k)\) indicates that the selected summary covers more clusters and is therefore more representative at the cluster level.

{\bf Hit.}
Hit measures whether the selected summary contains the designated hub representatives. 
It is defined as
\[
\mathrm{Hit}(S_k)
=
\frac{|S_k\cap H|}{m}.
\]
A larger value of \(\mathrm{Hit}(S_k)\) indicates that the summary selects more true hub representatives.

{\bf Redundancy.}
Redundancy measures repeated selection within already covered clusters. 
It is defined as
\[
\mathrm{Red}(S_k)
=
1-
\frac{
|\{c:S_k\cap C_c\neq\emptyset\}|
}{k}.
\]
A smaller value of \(\mathrm{Red}(S_k)\) indicates that the selected summary contains less repeated information from the same clusters. 
Therefore, higher Coverage and Hit values and lower Redundancy values correspond to better structural summary quality.

\paragraph{Downstream Metrics}

For the controlled clustered benchmark, we further evaluate whether the rounded summary remains useful for downstream classification. 
Each cluster is treated as one class, and the selected top-\(k\) summary \(S_k\) is used as a small training set. 
All downstream evaluation is performed on clean test samples under the clean similarity matrix, so that the reported performance reflects the usefulness of the selected summary rather than a change in the test distribution. 
We report two downstream metrics: {\bf NN Acc.} and {\bf Recovery}.

{\bf NN Acc.}
NN Acc. denotes nearest-neighbor classification accuracy. 
For each clean test sample, we assign the label of the most similar selected summary item under the clean similarity matrix. 
The resulting classification accuracy is reported as NN Acc. 
A larger value of NN Acc. indicates that the selected summary is more useful for downstream classification.

{\bf Recovery.}
Recovery measures the fraction of downstream accuracy loss recovered relative to the attacked summary. 
It is defined as
\[
\mathrm{Recovery}
=
\frac{
\mathrm{Acc}_{\mathrm{method}}-\mathrm{Acc}_{\mathrm{attack}}
}{
\mathrm{Acc}_{\mathrm{clean}}-\mathrm{Acc}_{\mathrm{attack}}
}.
\]
Here, \(\mathrm{Acc}_{\mathrm{clean}}\) is the accuracy obtained by the clean summary, and \(\mathrm{Acc}_{\mathrm{attack}}\) is the accuracy obtained by the attacked summary. 
Thus, \(\mathrm{Recovery}=0\) corresponds to the attacked summary, while \(\mathrm{Recovery}=1\) corresponds to full recovery to the clean-summary accuracy. 
A larger value of Recovery indicates stronger recovery of downstream task performance.

\subsubsection{Additional Empirical Results on multilinear extension summarization}

\paragraph{Real-data sensitivity on the multilinear-extension summarization model}

\begin{table}[t]
\centering
\caption{Real-data sensitivity to the attack budget \(\epsilon\) on the multilinear-extension summarization model. 
We use \(I=10\), \(n=50\), \(k=5\), \(\lambda=1\), \(K_{\rm attack}=30\), and \(T_{\rm attack}=20\). 
Random perturbation is averaged over 20 independent trials.}
\label{tab:realdata_epsilon_sensitivity}
\scriptsize
\setlength{\tabcolsep}{2.4pt}
\resizebox{\columnwidth}{!}{
\scalebox{0.9}{
\begin{tabular}{c c l c c c}
\toprule
\textbf{Norm} 
& \(\boldsymbol{\epsilon}\)
& \textbf{Method}
& \textbf{Avg. Deg.} $\uparrow$
& \textbf{Succ.} $\uparrow$
& \textbf{Int.} $\uparrow$ \\
\midrule
\multirow{9}{*}{\(\ell_1\)}
& \multirow{3}{*}{0.1}
& Proposed attack    & -0.0158 & 0.20 & -0.000417 \\
& & PGD   & -0.0706 & 0.20 & -0.001864 \\
& & Rand. & -0.0343 & 0.00 & -0.000907 \\
\cmidrule(lr){2-6}
& \multirow{3}{*}{1}
& Proposed attack    &  0.0162 & 0.70 &  0.000428 \\
& & PGD   &  0.0084 & 0.70 &  0.000221 \\
& & Rand. & -0.0333 & 0.00 & -0.000881 \\
\cmidrule(lr){2-6}
& \multirow{3}{*}{2}
& Proposed attack    &  0.0329 & 0.60 &  0.000869 \\
& & PGD   &  0.0097 & 0.70 &  0.000255 \\
& & Rand. & -0.0355 & 0.10 & -0.000938 \\
\midrule
\multirow{12}{*}{\(\ell_2\)}
& \multirow{3}{*}{0.1}
& Proposed attack    & -0.0363 & 0.10 & -0.000960 \\
& & PGD   & -0.0327 & 0.00 & -0.000863 \\
& & Rand. & -0.0370 & 0.20 & -0.000978 \\
\cmidrule(lr){2-6}
& \multirow{3}{*}{1}
& Proposed attack    & -0.0662 & 0.30 & -0.001748 \\
& & PGD   & -0.0353 & 0.30 & -0.000934 \\
& & Rand. & -0.0599 & 0.40 & -0.001582 \\
\cmidrule(lr){2-6}
& \multirow{3}{*}{2}
& Proposed attack    &  0.0548 & 0.70 &  0.001448 \\
& & PGD   &  0.0001 & 0.70 &  0.000002 \\
& & Rand. & -0.0192 & 0.40 & -0.000508 \\
\cmidrule(lr){2-6}
& \multirow{3}{*}{5}
& Proposed attack    &  0.0548 & 0.70 &  0.001448 \\
& & PGD   &  0.0001 & 0.70 &  0.000002 \\
& & Rand. &  0.1407 & 0.70 &  0.003716 \\
\midrule
\multirow{12}{*}{\(\ell_\infty\)}
& \multirow{3}{*}{0.02}
& Proposed attack    & -0.0747 & 0.00 & -0.001973 \\
& & PGD   & -0.0307 & 0.10 & -0.000812 \\
& & Rand. & -0.0505 & 0.40 & -0.001333 \\
\cmidrule(lr){2-6}
& \multirow{3}{*}{0.05}
& Proposed attack    & -0.0728 & 0.20 & -0.001923 \\
& & PGD   & -0.0799 & 0.00 & -0.002110 \\
& & Rand. & -0.0292 & 0.40 & -0.000770 \\
\cmidrule(lr){2-6}
& \multirow{3}{*}{0.08}
& Proposed attack    & -0.0618 & 0.40 & -0.001632 \\
& & PGD   & -0.0529 & 0.20 & -0.001396 \\
& & Rand. & -0.0045 & 0.60 & -0.000119 \\
\cmidrule(lr){2-6}
& \multirow{3}{*}{0.1}
& Proposed attack    & -0.0972 & 0.40 & -0.002567 \\
& & PGD   & -0.0284 & 0.20 & -0.000750 \\
& & Rand. &  0.0093 & 0.60 &  0.000246 \\
\bottomrule
\end{tabular}
}
}
\vspace{-6pt}
\end{table}

Table~\ref{tab:realdata_epsilon_sensitivity} reports the sensitivity of real-data attacks to the perturbation budget \(\epsilon\) under three norm constraints. 
Under the \(\ell_1\)-norm constraint, the proposed attack becomes effective when \(\epsilon\) increases from \(0.1\) to \(1\), and achieves the largest average degradation and attack intensity at \(\epsilon=2\). 
Although PGD obtains comparable success ratios at \(\epsilon=1\) and \(\epsilon=2\), its degradation magnitude and intensity are much smaller. 
Random perturbation remains negative across all tested \(\ell_1\) budgets, indicating that arbitrary feasible perturbations do not reliably reduce the clean summarization utility.

Under the \(\ell_2\)-norm constraint, all methods are weak at small budgets, while the proposed attack becomes effective at \(\epsilon=2\). 
At the larger budget \(\epsilon=5\), random perturbation also produces positive degradation, suggesting that sufficiently large unstructured changes can disrupt the similarity model. 
However, this effect is not stable in smaller-budget regimes and does not provide a controlled multi-target attack mechanism.

The \(\ell_\infty\)-norm results show a different pattern. 
Across the tested budgets, the proposed attack and PGD attacks do not yield consistently positive average degradation, and their attack intensities remain small or negative. 
Random perturbation becomes positive only at \(\epsilon=0.1\), but the magnitude is also small. 
This suggests that coordinate-wise bounded perturbations are less effective for the MovieLens similarity structure than the \(\ell_1\)- and \(\ell_2\)-budgeted attacks. 
Therefore, we treat the \(\ell_\infty\) case as an additional sensitivity analysis rather than the main real-data attack setting.

Overall, the results indicate that the proposed structure-aware attack is most meaningful in low-to-moderate \(\ell_1\)- and \(\ell_2\)-budget regimes, where optimized perturbations outperform arbitrary feasible perturbations in both degradation magnitude and attack intensity.

\paragraph{Per-model real-data attack results}
Table~\ref{tab:me_attack_per_model} reports the per-model attack effects on the real-data MovieLens instance under representative \(\ell_1\)- and \(\ell_2\)-norm constraints. 
The results show that the degradation is not uniformly distributed across all target models, which is expected in the multi-target setting because each victim model has a different similarity structure. 
Nevertheless, the proposed attack produces clear positive degradation on several individual models and achieves larger average degradation than the PGD baseline under both representative constraints. 
In contrast, random perturbation often yields near-zero or negative degradation, indicating that simply perturbing the similarity matrix does not reliably reduce the clean-objective value. 
These per-model results therefore support the aggregate results in Table~\ref{tab:me_attack_baseline}: the observed degradation is primarily due to the structured attack optimization rather than arbitrary perturbations.

A negative degradation means that the attacked solution obtains a slightly larger clean-objective value than the clean reference on that individual model. This can occur because the perturbation is shared across all target models and is optimized for the aggregate multi-target objective rather than for each model independently.
\begin{table*}[t]
\centering
\caption{Per-model real-data attack results under representative norm constraints. \(F_{\rm clean}\) denotes the clean-reference objective; each method column reports attacked objective / degradation.}
\label{tab:me_attack_per_model}
\scalebox{1.2}{
\begin{tabular}{c c c c c c}
\hline
Norm & Model & \(F_{\rm clean}\) & Proposed attack & PGD baseline & Random perturbation \\
\hline
\multirow{10}{*}{$\ell_1$} & 1 & 41.8625 & 41.5888 / 0.2737 & 41.8251 / 0.0373 & 41.8688 / -0.0064 \\
 & 2 & 40.8258 & 41.2047 / -0.3788 & 41.1852 / -0.3594 & 40.8614 / -0.0356 \\
 & 3 & 38.7825 & 38.7469 / 0.0357 & 38.7701 / 0.0124 & 38.7823 / 0.0002 \\
 & 4 & 37.1577 & 37.0388 / 0.1190 & 37.4117 / -0.2540 & 37.2135 / -0.0557 \\
 & 5 & 35.0018 & 34.9913 / 0.0104 & 34.9269 / 0.0749 & 35.0040 / -0.0023 \\
 & 6 & 36.9518 & 36.6008 / 0.3509 & 36.6484 / 0.3033 & 37.1697 / -0.2179 \\
 & 7 & 36.2921 & 36.3042 / -0.0120 & 36.1945 / 0.0976 & 36.3252 / -0.0330 \\
 & 8 & 39.9055 & 40.0042 / -0.0987 & 39.6352 / 0.2702 & 39.9055 / 0.0000 \\
 & 9 & 36.9493 & 36.9793 / -0.0301 & 37.0650 / -0.1158 & 36.9580 / -0.0087 \\
 & 10 & 34.7983 & 34.7394 / 0.0589 & 34.7684 / 0.0298 & 34.7938 / 0.0045 \\
\hline
\multirow{10}{*}{$\ell_2$} & 1 & 41.8625 & 41.5888 / 0.2737 & 41.8470 / 0.0154 & 42.1490 / -0.2865 \\
 & 2 & 40.8258 & 41.2047 / -0.3788 & 41.1852 / -0.3594 & 40.9139 / -0.0880 \\
 & 3 & 38.7825 & 38.7469 / 0.0357 & 38.7701 / 0.0124 & 38.8155 / -0.0330 \\
 & 4 & 37.1577 & 37.0388 / 0.1189 & 37.4117 / -0.2540 & 37.2220 / -0.0643 \\
 & 5 & 35.0018 & 34.9913 / 0.0104 & 34.9269 / 0.0749 & 34.9039 / 0.0978 \\
 & 6 & 36.9708 & 36.6008 / 0.3700 & 36.6484 / 0.3224 & 36.9617 / 0.0091 \\
 & 7 & 36.2921 & 36.3042 / -0.0121 & 36.1945 / 0.0976 & 36.3021 / -0.0100 \\
 & 8 & 39.9055 & 39.8089 / 0.0966 & 39.6352 / 0.2702 & 39.6811 / 0.2244 \\
 & 9 & 36.9493 & 36.9743 / -0.0250 & 37.1580 / -0.2087 & 37.0678 / -0.1185 \\
 & 10 & 34.7983 & 34.7394 / 0.0589 & 34.7684 / 0.0298 & 34.7216 / 0.0767 \\
\hline
\end{tabular}
}
\end{table*}

\paragraph{Auxiliary Real-Data Downstream Evaluation on MovieLens}

We additionally conduct a lightweight downstream evaluation on the MovieLens real-data setting. 
This experiment is intended as an auxiliary sanity check rather than the main downstream evidence, since the downstream genre-classification task is not directly optimized by the summarization objective. 
For each method, we use the rounded top-\(k\) summary as the training set for a 1-nearest-neighbor genre classifier, and use the remaining movies in the same clean fold as held-out test items. 
The classifier assigns each test item the genre label of the most similar selected summary item under the clean similarity matrix. 
We report classification accuracy, the accuracy drop relative to the clean summary, and class coverage of the selected summary.

\begin{table}[t]
\centering
\caption{Auxiliary real-data downstream evaluation on MovieLens. 
Each top-\(k\) summary is used as the training set for a 1-NN genre classifier, and the remaining movies in the same clean fold are used as held-out test items.}
\label{tab:realdata_downstream_movie_appendix}
\scriptsize
\setlength{\tabcolsep}{3pt}
\resizebox{\columnwidth}{!}{
\begin{tabular}{c l c c c}
\toprule
\textbf{Norm} 
& \textbf{Method} 
& \textbf{Acc.} \(\uparrow\) 
& \(\boldsymbol{\Delta}\)\textbf{Acc.} \(\downarrow\) 
& \textbf{Class cov.} \(\uparrow\) \\
\midrule
\multirow{6}{*}{\(\ell_1\)}
& Clean summary        & 0.413 &  0.000 & 0.373 \\
& Proposed attack           & 0.418 & -0.004 & 0.388 \\
& PGD baseline         & 0.402 &  0.011 & 0.388 \\
& Random perturbation  & 0.411 &  0.002 & 0.368 \\
& Proposed robust      & 0.382 &  0.031 & 0.373 \\
& PGD robust baseline  & 0.362 &  0.051 & 0.415 \\
\midrule
\multirow{6}{*}{\(\ell_2\)}
& Clean summary        & 0.413 &  0.000 & 0.373 \\
& Proposed attack           & 0.431 & -0.018 & 0.387 \\
& PGD baseline         & 0.387 &  0.027 & 0.352 \\
& Random perturbation  & 0.412 &  0.001 & 0.374 \\
& Proposed robust      & 0.387 &  0.027 & 0.331 \\
& PGD robust baseline  & 0.398 &  0.016 & 0.331 \\
\bottomrule
\end{tabular}
}
\vspace{-6pt}
\end{table}

Table~\ref{tab:realdata_downstream_movie_appendix} shows that the MovieLens downstream results do not exhibit a consistent attack--defense recovery pattern. 
Under the \(\ell_1\)- and \(\ell_2\)-norm settings, some attacked summaries achieve accuracy comparable to or slightly higher than the clean summary, while robust summaries do not consistently improve downstream accuracy. 
This behavior suggests that the lightweight genre-classification task is strongly affected by class coverage and label distribution in the selected top-\(k\) movies, whereas the robust summarization objective is designed to preserve similarity-based summarization utility under perturbations rather than directly optimize genre classification. 
Therefore, we use this real-data downstream experiment only as an auxiliary sanity check. 
The controlled clustered benchmark in the main text provides a clearer downstream evaluation because its class structure is aligned with the representative-cluster structure used in the summarization model.

\subsection{Additional Empirical Results on the Direct Continuous-Score Summarization Objective}\label{supp:Experimental Evaluation} In this section, we also report empirical results on the direct continuous-score summarization objective used in earlier continuous summarization formulations. 
This objective uses the term 
$$
\sum_{\mu}\max_{\nu}x_\nu s_{\mu,\nu},
$$
which provides an intuitive soft-importance interpretation but is not the primary model covered by the DR-submodular theoretical guarantee. 
We include these results only as supplementary empirical evidence that the proposed attack-defense framework can be applied beyond the main theoretical model.

We conduct experiments on CIFAR-10~\cite{alex2009}, MNIST~\cite{lecun2021mnist}, and Fashion-MNIST~\cite{xiao2017fashionmnist}. Unless otherwise specified, the main numerical results are reported on CIFAR-10, while MNIST and Fashion-MNIST are used for additional visual validation. In each experiment, we construct $I=10$ victim summarization models, and each model contains $n=50$ images sampled from the raw dataset. Table~\ref{tab:exp_metrics} summarizes the main mathematical notations used in the experiments. Specifically, we consider three attack types, corresponding to $l_1$-, $l_2$-, and $l_\infty$-norm constraints. For $p\in\{1,2\}$, the attack budget is chosen from $\epsilon_p\in[0.01,10]$, and for $p=\infty$, we use $\epsilon_\infty\in[0.1,1]$. The feasible set of summarization solutions is $\mathcal{X}\in[0,D]^n$ with $D\in\{0.1,1,10\}$. Additionally, i) For the proposed attack algorithm, we initialize $\mathbf{v}_0=\mathbf{0}$ and $\mathbf{w}_0=\frac{1}{I}\mathbf{1}$, and set $K_{\rm attack}=100$ and $T_{\rm attack}=30$. ii) For the robust defense algorithm, we initialize $\mathbf{v}_0=\mathbf{0}$ and $\mathbf{w}_0=\frac{1}{3}\mathbf{1}$, since three attack types are considered, and use $K_{\rm robust}=100$, $T_{\rm robust}=30$, and $\gamma=0.1$.

\begin{table}[t]
\centering
\caption{Key experimental parameters and evaluation metrics.}
\label{tab:exp_metrics}
\scalebox{0.9}{
\begin{tabular}{c|l}
\hline
\textbf{Symbol / Metric} & \textbf{Meaning} \\
\hline\hline
$\epsilon_p$ & Attack budget under the $\ell_p$-norm constraint \\
$K_{\rm attack}$ & Outer iterations of Alg.~\ref{alg:attack} \\
$T_{\rm attack}$ & Inner iterations of $\mathcal{M}_{\rm greedy}$ in Alg.~\ref{alg:attack} \\
$K_{\rm robust}$ & Outer iterations of Alg.~\ref{alg:robust} \\
$T_{\rm robust}$ & Inner iterations of Alg.~\ref{alg:robust} \\
Avg. Deg. & Average degradation under attack \\
Succ. Ratio & Fraction of victim models successfully degraded \\
Intensity & Normalized attack intensity \\
Loss & Relative clean-utility change under robust optimization \\
Robustness & Relative clean-attacked variation of the robust solution \\
Mitigation & Improvement over attacked greedy under the clean objective \\
\hline
\end{tabular}
}
\end{table}

\paragraph{Attack Performance under $l_p$-Norm Constraints}

We first compare the proposed attack generator with a representative gradient-based baseline, and then study the effect of the attack budget and feasible-set size on attack performance.

{\bf Comparison with a Gradient-Based Baseline.} The representative methods summarized in Table~\ref{tab:related_positioning} differ substantially in target setting and structural assumptions. In particular, the methods in~\cite{adibi2022} do not explicitly address the continuous multi-target setting considered here. We therefore compare our method with a representative gradient-based baseline for general non-convex min-max optimization~\cite{wang2021}, which is directly applicable to our attack setting.

As shown in Table~\ref{tab:attack_baseline_all}, the proposed method consistently achieves larger average degradation and attack intensity across all three attack constraints. Under the $l_1$-norm setting, it also attains a higher success ratio than the gradient-based baseline. Under the $l_2$- and $l_\infty$-norm settings, both methods achieve the same success ratio, while the proposed method still yields slightly stronger degradation and intensity. These results show that the proposed attack generator is empirically competitive with, and in these representative settings slightly stronger than, the gradient-based baseline.

\begin{table}[t]
\centering
\caption{Comparison with the gradient-based baseline under different attack constraints ($D=1, \epsilon=1,1,0.5$ for $l_1,l_2,l_{\infty}$).}
\label{tab:attack_baseline_all}
\scalebox{1.1}{
\begin{tabular}{llccc}
\hline
\textbf{Metric} & \textbf{Method} & $\boldsymbol{l_1}$ & $\boldsymbol{l_2}$ & $\boldsymbol{l_{\infty}}$ \\
\hline
\multirow{2}{*}{Success Ratio (\%) $\uparrow$}
& Proposed & 100 & 100 & 90 \\
& Baseline & 90 & 100 & 90 \\
\hline
\multirow{2}{*}{Avg. Degradation $\uparrow$}
& Proposed & 0.9583 & 0.9907 & 0.8292 \\
& Baseline & 0.7933 & 0.9644 & 0.6948 \\
\hline
\multirow{2}{*}{Attack Intensity (\%) $\uparrow$}
& Proposed & 6.78 & 6.93 & 5.94 \\
& Baseline & 5.61 & 6.75 & 4.98 \\
\hline
\end{tabular}
}
\end{table}

{\bf Effect of the Attack Budget $\epsilon_p$ and Feasible-Set Size $D$}
We next evaluate how attack performance changes with the attack budget and the feasible-set size. Figure~\ref{fig:attack_intensity} shows the average attack intensity under different $l_p$-norm constraints.

Overall, the attack intensity increases as $\epsilon_p$ becomes larger. Moreover, the effect of $D$ depends on the attack type. Under $l_1$- and $l_2$-norm constraints, the influence of $D$ is relatively limited. For example, when $\epsilon_p=10$, the average attack intensity lies in a narrow range under both $l_1$- and $l_2$-norm constraints. In contrast, under the $l_\infty$-norm constraint, the attack intensity is much more sensitive to $D$. For instance, when $\epsilon_\infty=1.0$, the intensity grows substantially from the case $D=0.1$ to the case $D=10$. This suggests that the feasible-set scale plays a more important role under coordinate-wise bounded perturbations.

{\bf Attack Success Ratio under Different Settings.}
Besides the average attack intensity, we also report the success ratio of attacks in Table~\ref{tab:attack_success}. The results show that the proposed attack achieves a high success probability across a wide range of parameter settings. In particular, the success ratio approaches or reaches $100\%$ once the attack budget exceeds a moderate threshold. For example, under the $l_2$-norm setting, all attacks succeed when $\epsilon_2\geq 0.05$.

\begin{table}[t!]
\centering
\normalsize
\caption{Attack success ratio of Alg.~\ref{alg:attack} under $l_{p}$-norm attacks ($p=1,2,\infty$) across different $D$ ($D=0.1,1,10$) and $\epsilon$ values. Parameters: $\eta$ values from Tab.~\ref{tab:eta}.} 
\scalebox{0.9}{
\begin{tabular}{p{0.08\columnwidth}|p{0.08\columnwidth}p{0.08\columnwidth}p{0.08\columnwidth}p{0.08\columnwidth}p{0.08\columnwidth}p{0.08\columnwidth}p{0.08\columnwidth}}
\hline
\multirow{2}{*}{$D$} & \multicolumn{7}{c}{$\epsilon_p(p=1,2)$}\\
\cline{2-8}
& 0.01 & 0.05 & 0.1 & 0.5 & 1.0 & 5.0 & 10.0 \\ 
\hline
\multicolumn{8}{c}{$\textbf{Ratio}^{\text{att}}(p=1,\epsilon_{1},D)$ under $l_1$-norm } \\ \hline
0.1 & 0.80 & 0.70 & 0.80 & 1.00 & 1.00 & 1.00 & 1.00 \\ 
1.0 & 0.80 & 0.80 & 0.90 & 1.00 & 1.00 & 1.00 & 1.00 \\ 
10.0 & 0.80 & 0.90 & 0.80 & 1.00 & 1.00 & 1.00 & 1.00 \\ \hline
\multicolumn{8}{c}{$\textbf{Ratio}^{\text{att}}(p=2,\epsilon_{2},D)$ under $l_2$-norm } \\ \hline
0.1 & 0.60 & 1.00 & 1.00 & 1.00 & 1.00 & 1.00 & 1.00 \\ 
1.0 & 0.90 & 1.00 & 1.00 & 1.00 & 1.00 & 1.00 & 1.00 \\ 
10.0 & 0.90 & 1.00 & 1.00 & 1.00 & 1.00 & 1.00 & 1.00 \\ 
\hline
\end{tabular}
}
\scalebox{0.8}{   
\begin{tabular}{p{0.06\columnwidth}|p{0.056\columnwidth}p{0.056\columnwidth}p{0.056\columnwidth}p{0.056\columnwidth}p{0.056\columnwidth}p{0.056\columnwidth}p{0.056\columnwidth}p{0.056\columnwidth}p{0.056\columnwidth}p{0.056\columnwidth}}
\hline
\multirow{2}{*}{$D$} & \multicolumn{10}{c}{$\epsilon_p(p=\infty)$}\\
\cline{2-11}
& 0.1 & 0.2 & 0.3 & 0.4 & 0.5 & 0.6 & 0.7 & 0.8 & 0.9 & 1.0 \\ 
\hline
\multicolumn{10}{c}{$\textbf{Ratio}^{\text{att}}(p=\infty,\epsilon_{\infty},D)$ under $l_{\infty}$-norm } \\ \hline
0.1 & 0.90 & 0.80 & 0.80 & 0.80 & 0.80 & 0.90 & 0.90 & 0.90 & 0.90 & 0.90 \\
1.0 & 0.50 & 1.00 & 1.00 & 1.00 & 1.00 & 1.00 & 1.00 & 1.00 & 1.00 & 1.00 \\
10.0 & 1.00 & 1.00 & 1.00 & 1.00 & 1.00 & 1.00 & 1.00 & 1.00 & 1.00 & 1.00 \\\hline
\end{tabular}}
\label{tab:attack_success}
\end{table}

\begin{figure*}[t!]
\centering
\subfloat[$l_{1}$-norm]{
\includegraphics[width=0.31\linewidth]{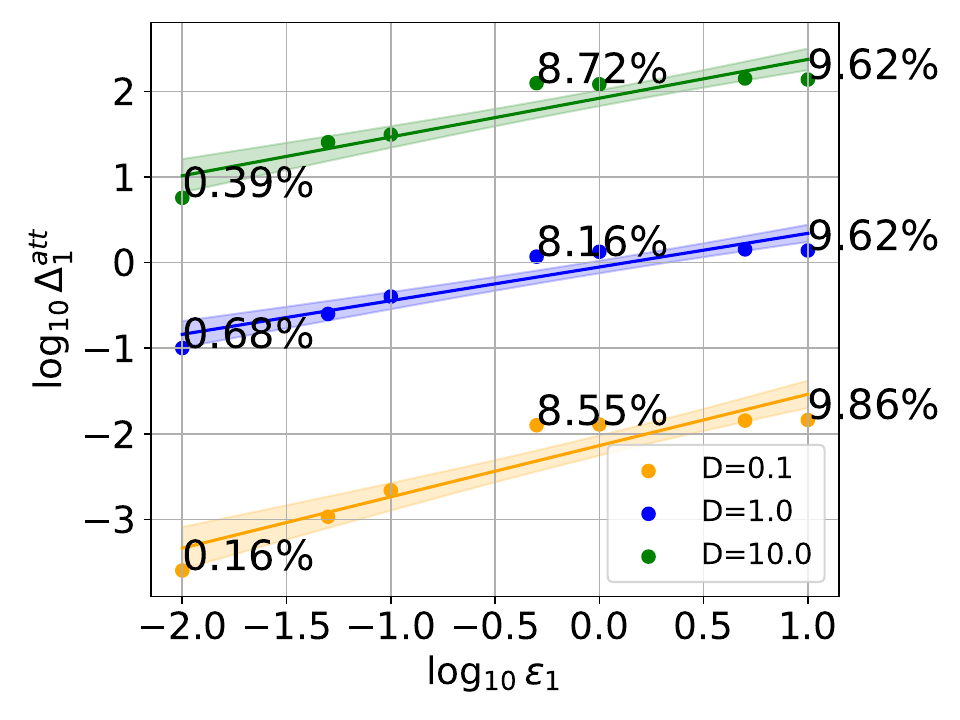}
}
\subfloat[$l_{2}$-norm]{
\includegraphics[width=0.31\linewidth]{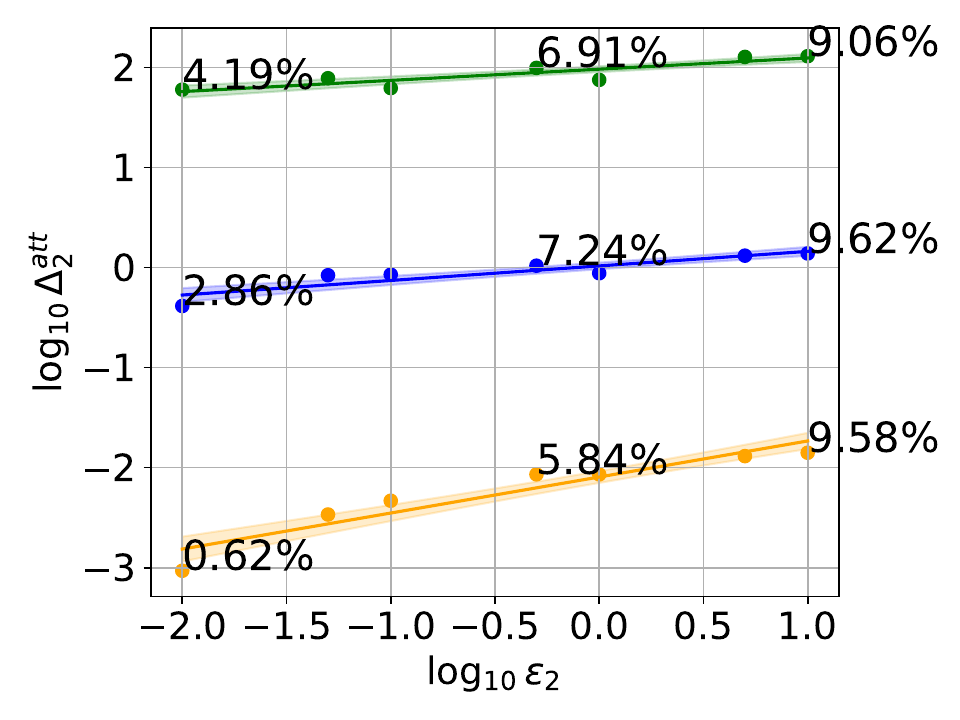}
}
\subfloat[$l_{\infty}$-norm]{
\includegraphics[width=0.31\linewidth]{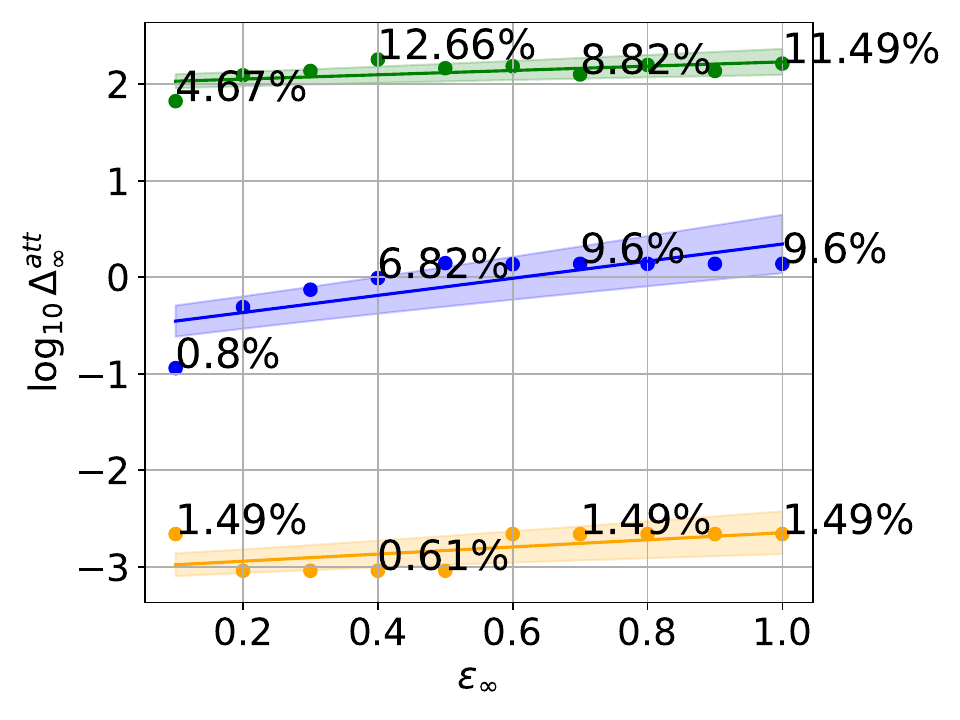}
}%
\caption{Average attack intensity under $l_{p}$-norm constraints of Alg.~\ref{alg:attack}. Parameters
$K_{\rm attack}=100, T_{\rm attack}=30$ in Alg.~\ref{alg:Mgreedy} and $\eta$ refers to Tab.~\ref{tab:eta}. ``Dots'' represents average attack intensity ($\textbf{Intensity}^{\rm att}(p, \epsilon_{p}, D)$) of Alg.~\ref{alg:attack} under $l_p$-norm ($p = 1, 2, \infty$) constraints. ``Lines'' illustrate trends of attack intensity increasing with $\epsilon_{p}$ and ``$Y$-axis'' represents average function differences ($\bar{\Delta}^{\rm att}(p,\epsilon_{p},D)$)).
\label{fig:attack_intensity}}
\end{figure*}

\paragraph{Robustness of the Defense Algorithm under Mixed Attacks}\label{results:robust}

We now evaluate the proposed defense algorithm under mixed attack types. In particular, we focus on three questions: 1) how the regularization parameter $\gamma$ affects the trade-off between clean-data utility and robustness; 2) how the proposed robust algorithm compares with a representative gradient-based robust baseline; and 3) whether the proposed robust algorithm can effectively mitigate the impact of attacks.

{\bf Utility--Robustness Trade-off under Different $\gamma$.}
The proposed defense algorithm optimizes Eq.~\ref{eq:robust_problem}, where the regularization parameter $\gamma$ controls the balance between clean-data utility and robustness. As shown in Table~\ref{tab:robust_metrics}, larger values of $\gamma$ generally lead to stronger robustness. For example, under the $l_{\infty}$-norm attack, the robustness value decreases from $0.84$ when $\gamma=0.1$ to $0.22$ when $\gamma=10$, and under the $l_2$-norm attack it decreases from $0.43$ to $0.27$. Moreover, for a fixed $\gamma$, the robustness gap across different attack types becomes smaller when $\gamma$ is larger, suggesting that the defense becomes more balanced against mixed attacks.

\begin{table}[t!]
\centering
\normalsize
\caption{Performance metrics of Alg.~\ref{alg:robust} under different $\gamma$ values $(K_{\rm robust}=100, T_{\rm robust}=30, D=1, \epsilon=1,1,0.5$ for $l_1,l_2,l_{\infty}$).}
\scalebox{1}{
\begin{tabular}{c|c|ccc}
\hline
\multicolumn{2}{c|}{Metrics} & \multicolumn{3}{c}{Different $\gamma$ values} \\ \cline{3-5}
\multicolumn{2}{c|}{} & $\gamma=0.1$ & $\gamma=1$ & $\gamma=10$\\ \hline
\multicolumn{2}{c|}{Loss (\%)  $\downarrow$} & 2.96 & \textbf{2.94} & 4.01 \\ \hline
\multirow{3}{*}{Robustness (\%) $\downarrow$} & $l_{\infty}$ & 0.84 & 0.81 & \textbf{0.22} \\
                            & $l_{1}$      & 0.88 & 0.87 & \textbf{0.34} \\
                            & $l_{2}$      & 0.43 & 0.45 & \textbf{0.27} \\ \hline
\multirow{3}{*}{Mitigation  (\%) $\uparrow$} & $l_{\infty}$ & 5.82 & \textbf{5.86} & 5.52 \\
                            & $l_{1}$      & 4.34 & \textbf{4.36} & 3.89 \\
                            & $l_{2}$      & 3.86 & \textbf{3.88} & 3.22 \\
\hline
\end{tabular}
}
\label{tab:robust_metrics}
\end{table}
This behavior is consistent with the role of the regularization term: a larger $\gamma$ encourages the weight vector $\mathbf{w}$ to remain more balanced across attack types, thereby reducing the dominance of any single attack type and improving stability under perturbed inputs. However, Table~\ref{tab:robust_metrics} also shows that excessively large $\gamma$ may increase the clean-data loss. Therefore, $\gamma$ induces a clear utility--robustness trade-off, and $\gamma=1$ provides a good balance in our experiments.

{\bf Comparison with a Gradient-Based Robust Baseline.} The representative methods summarized in Table~\ref{tab:related_positioning} differ substantially in model assumptions and attack settings. In particular, existing robust DR-submodular methods such as~\cite{lee2022,lian2024zeroth} mainly focus on monotone settings and do not explicitly address mixed attacks. We therefore compare our method with a representative gradient-based robust baseline that is directly compatible with the mixed-attack setting considered here.

As shown in Table~\ref{tab:defense_baseline_all}, the proposed robust algorithm consistently achieves near-zero or much smaller clean-data loss and substantially smaller robustness values than the gradient-based baseline, while maintaining positive mitigation across all three attack constraints. In contrast, the gradient-based baseline suffers from much larger clean-data loss and negative mitigation in all three representative settings, indicating that it fails to consistently offset the attack effect. These results support the advantage of structure-aware robust optimization for continuous data summarization.

\begin{table}[t]
\centering
\caption{Comparison with the gradient-based robust baseline under different attack constraints ($D=1, \gamma=1, \epsilon=1,1,0.5$ for $l_1,l_2,l_{\infty}$).}
\label{tab:defense_baseline_all}
\scalebox{1.1}{
\begin{tabular}{llccc}
\hline
\textbf{Metric} & \textbf{Method} & $\boldsymbol{l_1}$ & $\boldsymbol{l_2}$ & $\boldsymbol{l_{\infty}}$ \\
\hline
\multirow{2}{*}{Loss (\%) $\downarrow$}
& Proposed & -0.225 & 0.811 & 1.588 \\
& Baseline & 30.66 & 31.53 & 29.58 \\
\hline
\multirow{2}{*}{Robustness (\%) $\downarrow$}
& Proposed & 0.664 & 0.401 & 0.649 \\
& Baseline & 32.33 & 29.19 & 37.01 \\
\hline
\multirow{2}{*}{Mitigation (\%) $\uparrow$}
& Proposed & 6.299 & 6.641 & 6.747 \\
& Baseline & -47.53 & -45.40 & -50.95 \\
\hline
\end{tabular}
}
\end{table}

{\bf Defense Effectiveness and Mitigation.} We next examine whether the proposed defense can effectively reduce the impact of attacks. From Table~\ref{tab:robust_metrics}, all mitigation values are positive for all tested choices of $\gamma$ and all three attack types. This indicates that the proposed robust algorithm consistently alleviates the degradation caused by attacks.

Moreover, the mitigation results should be interpreted together with the loss values. Although the defense algorithm introduces a small utility loss under clean data, the reduction in attack impact is consistently larger, showing that the robust formulation is beneficial overall. For example, when $\gamma=1$, the defense achieves positive mitigation under all three norm-constrained attacks while maintaining relatively small clean-data loss. These results demonstrate that the proposed defense algorithm can effectively improve robustness against mixed attacks in continuous data summarization.

\paragraph{Visualization Results on Multiple Datasets}
\label{results:visual} We further provide qualitative visualization results for the attack algorithm (Alg.~\ref{alg:attack}), the plain greedy summarization algorithm (Alg.~\ref{alg:Mgreedy}), and the proposed robust defense algorithm (Alg.~\ref{alg:robust}) on CIFAR-10, MNIST, and Fashion-MNIST. To generate the displayed summaries, we first normalize the continuous solution $\mathbf{x}$ into $[0,1]$, then convert it into a discrete selection vector using a threshold, and finally select images according to the resulting discrete solution.

Figures~\ref{fig:mnist}--\ref{fig:fashion} show that the attacked outputs differ substantially from the plain greedy outputs, while the robust outputs recover a noticeably larger portion of the original selection patterns. For instance, in Fig.~\ref{fig:mnist}, only two selected images remain unchanged after attack, whereas the robust output recovers eleven images that coincide with the plain greedy output. Similar phenomena are observed on CIFAR-10 and Fashion-MNIST, consistent with the numerical results.

\begin{figure}[t!]
    \centering
    \scalebox{0.5}{
    \begin{tabular}{cc}
        \textbf{Alg.~\ref{alg:attack} outputs} &
        \makecell[l]{
            \begin{minipage}[b]{1.31\columnwidth}
                \includegraphics[width=\textwidth]{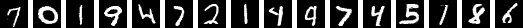}
            \end{minipage}
        } \\
        \textbf{Alg.~\ref{alg:Mgreedy} outputs} &
        \makecell[l]{
            \begin{minipage}[b]{1.4\columnwidth}
                \includegraphics[width=\textwidth]{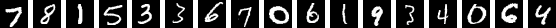}
            \end{minipage}
        } \\ 
        \textbf{Alg.~\ref{alg:robust} outputs} &
        \makecell[l]{
            \begin{minipage}[b]{1.57\columnwidth}
                \includegraphics[width=\textwidth]{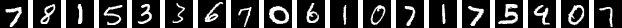}
            \end{minipage}
        }
    \end{tabular}
    }
    \caption{Visualization results of Alg.~\ref{alg:attack} (attack algorithm), Alg.~\ref{alg:Mgreedy} (plain submodular algorithm), and Alg.~\ref{alg:robust} (robust algorithm under attack) outputs under $l_1$-norm attack for dataset MNIST ($D=1$, $\epsilon=1$, threshold$=0.6$).}
    \label{fig:mnist}
\end{figure}

\begin{figure}[t!]
    \centering
    \scalebox{0.5}{
    \begin{tabular}{cc}
        \textbf{Alg. 2 outputs} &
        \makecell[l]{
            \begin{minipage}[b]{1.5\columnwidth}
                \includegraphics[width=\textwidth]{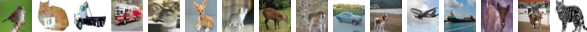}
            \end{minipage}
        } \\
        \textbf{Alg. 1 outputs} &
        \makecell[l]{
            \begin{minipage}[b]{1.5\columnwidth}
                \includegraphics[width=\textwidth]{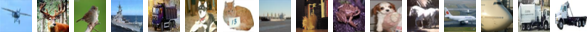}
            \end{minipage}
        } \\ 
        \textbf{Alg. 3 outputs} &
        \makecell[l]{
            \begin{minipage}[b]{1.5\columnwidth}
                \includegraphics[width=\textwidth]{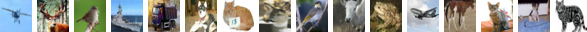}
            \end{minipage}
        }
    \end{tabular}
    }
    \caption{Visualization results of attacked (Alg.~\ref{alg:attack}), plain (Alg.~\ref{alg:Mgreedy} without attack) and robust (Alg.~\ref{alg:robust} with attack) outputs under $l_1$-norm attack for dataset CIFAR-10 ($D=1$, $\epsilon=1$ and threshold=0.493).}
    \label{fig:cifar10}
\end{figure}

\begin{figure}[t!]
    \centering
    \scalebox{0.5}{
    \begin{tabular}{cc}
        \textbf{Alg. 2 outputs} &
        \makecell[l]{
            \begin{minipage}[b]{1.32\columnwidth}
                \includegraphics[width=\textwidth]{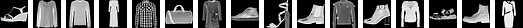}
            \end{minipage}
        } \\
        \textbf{Alg. 1 outputs} &
        \makecell[l]{
            \begin{minipage}[b]{1.4\columnwidth}
                \includegraphics[width=\textwidth]{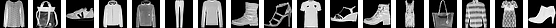}
            \end{minipage}
        } \\ 
        \textbf{Alg. 3 outputs} &
        \makecell[l]{
            \begin{minipage}[b]{1.57\columnwidth}
                \includegraphics[width=\textwidth]{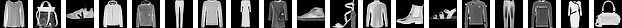}
            \end{minipage}
        }
    \end{tabular}
    }
    \caption{Visualization results of attacked (Alg.~\ref{alg:attack}), plain (Alg.~\ref{alg:Mgreedy} without attack) and robust (Alg.~\ref{alg:robust} with attack) outputs under $l_1$-norm attacks for dataset Fashion MNIST ($D=1$, $\epsilon=1$ and threshold=0.48).}
\label{fig:fashion}
\end{figure}
\paragraph{Additional Results for the Attack Algorithm (Alg.~\ref{alg:attack})}\label{app:conver} This section provides additional results on the convergence behavior of Alg.~\ref{alg:attack} and the per-model attack effects under different attack budgets.

{\bf Convergence of the Attack Algorithm.} Fig.~\ref{fig:attack_convergence} illustrates the effect of the step size $\eta$ on the convergence of Alg.~\ref{alg:attack} under different attack types. When $D=1$, choosing $\eta$ in the range $[0.05,0.15]$ yields stable convergence for all three norm-constrained attacks. In contrast, excessively small step sizes (e.g., $\eta=0.005$ for the $l_{1}$-norm attack, $\eta=0.0001$ for the $l_{2}$-norm attack, and $\eta=0.001$ for the $l_{\infty}$-norm attack) lead to non-convergence or convergence to worse solutions. This indicates that Alg.~\ref{alg:attack} is relatively insensitive to moderate changes in $\eta$, but overly small step sizes can significantly degrade convergence and solution quality.

Tab.~\ref{tab:eta} lists the $\eta$ values used for the $l_{1}$- and $l_{2}$-norm attacks under different $(D,\epsilon)$ settings. These values are chosen so that the norm of the generated perturbation $\mathbf{v}$ approaches the prescribed budget $\epsilon$ as closely as possible. For the $l_{\infty}$-norm attack, we use the theoretically motivated choice $\eta=1/\sqrt{K_{\rm attack}}$ with $K_{\rm attack}=100$, following Theorem~\ref{thm:attack}. In this case, whether each coordinate of $\mathbf{v}$ reaches $\epsilon$ is not itself a meaningful indicator of attack strength.

\begin{table}[t!]
\centering
\normalsize
\caption{Values of $\eta$ in Alg.~\ref{alg:attack} under $l_1/l_2$-norm attacks with varying $(D,\epsilon)$ configurations.}
\scalebox{0.77}{
\begin{tabular}{c|ccccccc}
\hline
\multirow{2}{*}{$D$} & \multicolumn{7}{c}{$\eta$ for different $\epsilon$ values} \\ 
\cline{2-8}
& 0.01 & 0.05 & 0.10 & 0.50 & 1.00 & 5.00 & 10.00 \\ \hline
0.1  & 0.0500 & 0.3000 & 0.8000 & 1.0000 & 1.0000 & 10.0000 & 20.0000 \\
1.0  & 0.0050 & 0.0400 & 0.0500 & 0.0500 & 0.1000 & 0.5000  & 2.0000 \\
10.0 & 0.0001 & 0.0004 & 0.0008 & 0.0030 & 0.0070 & 0.0500  & 0.1000 \\
\hline
\end{tabular}
}
\label{tab:eta}
\end{table}

\begin{figure*}[t!]
\centering
\subfloat[$l_{1}$-norm]{
\includegraphics[width=0.3\linewidth]{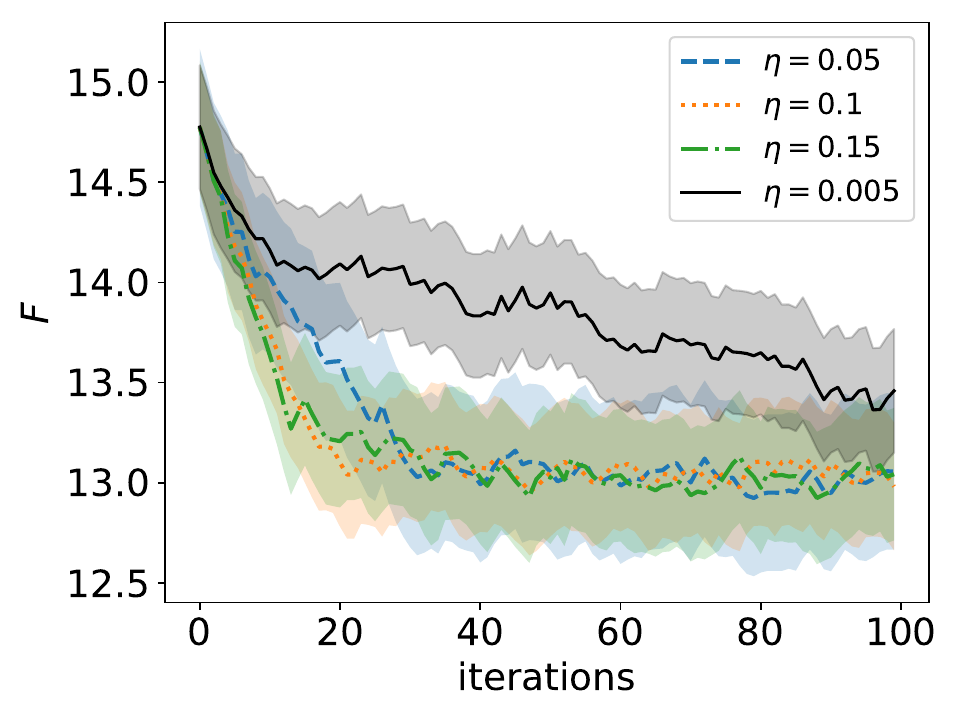}
}
\subfloat[$l_{2}$-norm]{
\includegraphics[width=0.3\linewidth]{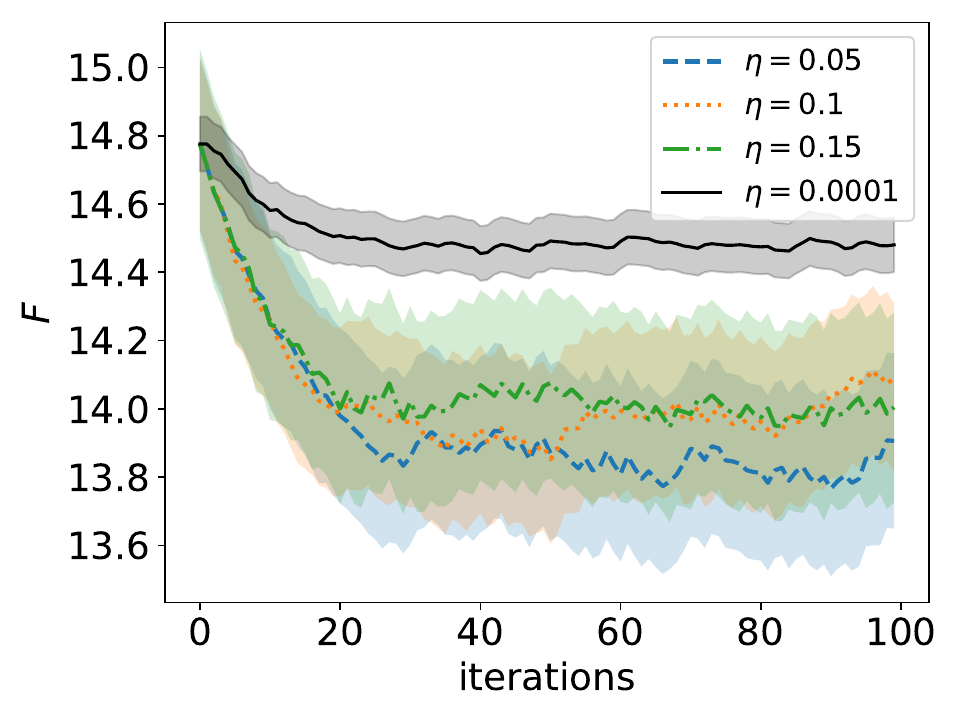}
}
\subfloat[$l_{\infty}$-norm]{
\includegraphics[width=0.3\linewidth]{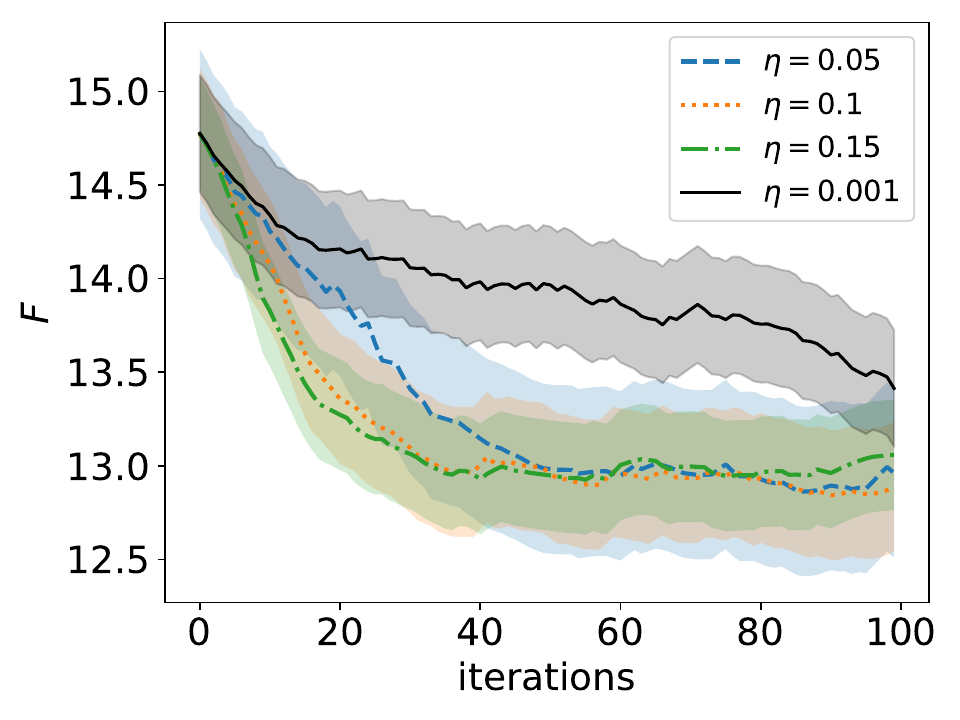}
}%
\caption{Convergence behavior of Alg.~\ref{alg:attack} under $l_{p}$-norm attacks $(p=1,2,\infty)$ with varying $\eta$ values. Parameters: $K_{\rm attack}=100$, $T_{\rm attack}=30$, and $D=1$.}
\label{fig:attack_convergence}
\end{figure*}

{\bf Per-model Attack Effects under Different Attack Budgets.} For a fixed $D$, with $K_{\rm attack}=100$ and $T_{\rm attack}=30$ in Alg.~\ref{alg:attack}, the objective value generally decreases as the attack budget increases. In particular, all models are successfully attacked when $\epsilon_{1}\geq 0.5$ in Tabs.~\ref{tab:att_per_results_l1d0.1}--\ref{tab:att_per_results_l1d10}, and when $\epsilon_{2}\geq 0.05$ in Tabs.~\ref{tab:att_per_results_l2d0.1}--\ref{tab:att_per_results_l2d10}, for all $D\in\{0.1,1,10\}$. For the $l_{\infty}$-norm attack, the per-model results are more sensitive to the value of $D$. For example, Model 7 is never successfully attacked when $D=0.1$, as shown in Tab.~\ref{tab:att_per_results_linftyd0.1}. In contrast, when $D=10$, all models can be successfully attacked, as shown in Tab.~\ref{tab:att_per_results_linftyd10}. These results are consistent with the averaged trends reported in the main text and further illustrate how the feasible-set scale affects the strength of the $l_{\infty}$-norm attack.

\begin{table*}[t!]
\centering
\caption{Objective value $F$ of Alg.~\ref{alg:Mgreedy} under $l_{1}$-norm attacks with varying $\epsilon_1 \in [0, 10]$. Parameters: $D=0.1$, and $\eta$ values from Tab.~\ref{tab:eta}.}
\scalebox{1}{
\begin{tabular}{l|ccccccccccc}
\toprule
\textbf{\boldmath$\epsilon_1$} & ${\rm Model}_1$ & ${\rm Model}_2$ & ${\rm Model}_3$ & ${\rm Model}_4$ & ${\rm Model}_5$ & ${\rm Model}_6$ & ${\rm Model}_7$ & ${\rm Model}_8$ & ${\rm Model}_9$ & ${\rm Model}_{10}$ \\
\midrule
0.00 & 0.1511 & 0.1461 & 0.1469 & 0.1429 & 0.1436 & 0.1491 & 0.1434 & 0.1479 & 0.1484 & 0.1484 \\
0.01   & 0.1493 & 0.1452 & 0.1454 & {\bf 0.1451} & 0.1430 & 0.1482 & {\bf 0.1469} & 0.1478 & 0.1470 & 0.1473 \\
0.05         & 0.1500 & 0.1454 & 0.1454 & {\bf 0.1458} & {\bf 0.1437} & 0.1435 & {\bf 0.1465} & 0.1464 & 0.1474 & 0.1429 \\
0.10          & 0.1498 & 0.1450 & 0.1420 & {\bf 0.1452} & 0.1386 & 0.1479 & {\bf 0.1460} & 0.1472 & 0.1422 & 0.1420 \\
0.50          & 0.1373 & 0.1334 & 0.1314 & 0.1327 & 0.1320 & 0.1328 & 0.1343 & 0.1368 & 0.1355 & 0.1359 \\
1.00          & 0.1365 & 0.1319 & 0.1352 & 0.1359 & 0.1296 & 0.1326 & 0.1324 & 0.1351 & 0.1368 & 0.1334 \\
5.00          & 0.1342 & 0.1308 & 0.1317 & 0.1316 & 0.1294 & 0.1315 & 0.1326 & 0.1350 & 0.1333 & 0.1350 \\
10.00         & 0.1336 & 0.1303 & 0.1310 & 0.1314 & 0.1308 & 0.1346 & 0.1304 & 0.1344 & 0.1328 & 0.1335 \\
\bottomrule
\end{tabular}
}
\label{tab:att_per_results_l1d0.1}
\end{table*}

\begin{table*}[t!]
  \centering
  \caption{Objective values $F$ of Alg.~\ref{alg:Mgreedy} under $l_{1}$-norm attacks with varying $\epsilon_1 \in [0, 10]$. Parameters: $D=1$, $\eta$ values from Tab.~\ref{tab:eta}.}
  \scalebox{1}{
  \begin{tabular}{l|ccccccccccc}
  \toprule
  \textbf{\boldmath$\epsilon_1$} & ${\rm Model}_1$ & ${\rm Model}_2$ & ${\rm Model}_3$ & ${\rm Model}_4$ & ${\rm Model}_5$ & ${\rm Model}_6$ & ${\rm Model}_7$ & ${\rm Model}_8$ & ${\rm Model}_9$ & ${\rm Model}_{10}$ \\
  \midrule
  0.00 & 14.7757 & 14.2996 & 14.3820 & 13.9502 & 14.0592 & 14.5925 & 13.9490 & 14.4686 & 14.5209 & 14.5194 \\
  0.01         & 14.7201 & 13.8402 & 14.3332 & {\bf 14.3451} & 13.6146 & 14.5270 & {\bf 14.3923} & 13.9039 & 14.4224 & 14.4154 \\
  0.05         & 14.4612 & 14.2015 & 14.1490 & {\bf 14.2326} & 13.9193 & 14.1123 & {\bf 14.2087} & 14.1635 & 13.7858 & 13.7707 \\
  0.10          & 14.2288 & 13.7346 & 14.2672 & 13.7487 & 13.5319 & 14.0480 & {\bf 13.9597} & 13.8832 & 13.8181 & 14.2858 \\
  0.50          & 13.6322 & 13.0018 & 13.1876 & 13.1219 & 12.8331 & 13.0850 & 13.2278 & 13.2038 & 13.3297 & 13.1609 \\
  1.00          & 13.1692 & 12.8830 & 12.8003 & 13.0511 & 12.7220 & 13.1636 & 13.0931 & 13.2244 & 12.8591 & 13.1685 \\
  5.00          & 13.1553 & 12.8361 & 13.0886 & 12.8006 & 12.7337 & 12.9865 & 12.8367 & 13.0297 & 12.7035 & 13.0768 \\
  10.00         & 12.9570 & 12.9438 & 12.8959 & 13.1496 & 12.8766 & 12.9447 & 12.7274 & 13.2199 & 12.8900 & 13.0697 \\
  \bottomrule
  \end{tabular}
  }
  \label{tab:att_per_results_l1d1}
\end{table*}

\begin{table*}[t!]
  \centering
  \caption{Objective values $F$ of Alg.~\ref{alg:Mgreedy} under $l_{1}$-norm attacks with varying $\epsilon_1 \in [0, 10]$. Parameters: $D=10$, $\eta$ values from Tab.~\ref{tab:eta}.}
  \scalebox{1}{
  \begin{tabular}{l|ccccccccccc}
  \toprule
  \textbf{\boldmath$\epsilon_1$} & ${\rm Model}_1$ & ${\rm Model}_2$ & ${\rm Model}_3$ & ${\rm Model}_4$ & ${\rm Model}_5$ & ${\rm Model}_6$ & ${\rm Model}_7$ & ${\rm Model}_8$ & ${\rm Model}_9$ & ${\rm Model}_{10}$ \\
  \midrule
  0.00 & 1474.2324 & 1426.8492 & 1435.0023 & 1391.8554 & 1402.8959 & 1410.4395 & 1391.6993 & 1443.6483 & 1448.8861 & 1448.7191 \\
  0.01         & 1462.9098 & 1416.6063 & 1422.4803 & 1376.0069 & 1398.6040 & {\bf 1442.1429} & {\bf 1415.4935} & 1425.0280 & 1424.2331 & 1433.6294 \\
  0.05         & 1449.6008 & 1407.9932 & 1423.4187 & 1383.4185 & 1345.3509 & {\bf 1443.2330} & 1375.9175 & 1383.7270 & 1373.5448 & 1433.1648 \\
  0.10          & 1416.3975 & 1364.5970 & 1371.7884 & 1375.6148 & 1353.9248 & {\bf 1439.1516} & {\bf 1418.7915} & 1418.2876 & 1373.6033 & 1428.7677 \\
  0.50          & 1344.7070 & 1302.4760 & 1289.9080 & 1302.3589 & 1269.6011 & 1302.7427 & 1305.0936 & 1330.5593 & 1257.7889 & 1321.0485 \\
  1.00          & 1320.7759 & 1308.1058 & 1287.8753 & 1299.5853 & 1284.2200 & 1299.0918 & 1289.3330 & 1340.8883 & 1308.6752 & 1324.1519 \\
  5.00          & 1286.7390 & 1272.4438 & 1285.6346 & 1283.5320 & 1285.0912 & 1273.1899 & 1296.5136 & 1306.5048 & 1271.8013 & 1296.2556 \\
  10.00         & 1306.1874 & 1281.9626 & 1285.0219 & 1288.2721 & 1274.9180 & 1282.1630 & 1281.8991 & 1300.2769 & 1285.9448 & 1311.3829 \\
  \bottomrule
  \end{tabular}
  }
  \label{tab:att_per_results_l1d10}
\end{table*}

\begin{table*}[t!]
  \centering
  \caption{Objective values $F$ of Alg.~\ref{alg:Mgreedy} under $l_{2}$-norm attacks with varying $\epsilon_2 \in [0, 10]$. Parameters: $D=0.1$, $\eta$ values from Tab.~\ref{tab:eta}.}
  \scalebox{1}{
  \begin{tabular}{l|ccccccccccc}
  \toprule
  \textbf{\boldmath$\epsilon_2$} & ${\rm Model}_1$ & ${\rm Model}_2$ & ${\rm Model}_3$ & ${\rm Model}_4$ & ${\rm Model}_5$ & ${\rm Model}_6$ & ${\rm Model}_7$ & ${\rm Model}_8$ & ${\rm Model}_9$ & ${\rm Model}_{10}$ \\
  \midrule
  0.00 & 0.1511 & 0.1461 & 0.1469 & 0.1429 & 0.1436 & 0.1491 & 0.1434 & 0.1479 & 0.1484 & 0.1484 \\
  0.01         & 0.1456 & 0.1453 & 0.1421 & {\bf 0.1465 }& {\bf 0.1436} & 0.1484 & {\bf 0.1435} & {\bf 0.1479} & 0.1479 & 0.1477 \\
  0.05         & 0.1479 & 0.1444 & 0.1402 & 0.1413 & 0.1385 & 0.1413 & 0.1402 & 0.1464 & 0.1464 & 0.1471 \\
  0.10          & 0.1482 & 0.1437 & 0.1430 & 0.1403 & 0.1370 & 0.1425 & 0.1397 & 0.1433 & 0.1387 & 0.1445 \\
  0.50          & 0.1455 & 0.1363 & 0.1368 & 0.1362 & 0.1340 & 0.1406 & 0.1355 & 0.1369 & 0.1385 & 0.1417 \\
  1.00          & 0.1472 & 0.1359 & 0.1383 & 0.1343 & 0.1358 & 0.1392 & 0.1360 & 0.1364 & 0.1369 & 0.1422 \\
  5.00          & 0.1353 & 0.1321 & 0.1345 & 0.1334 & 0.1310 & 0.1334 & 0.1344 & 0.1366 & 0.1322 & 0.1345 \\
  10.00         & 0.1341 & 0.1313 & 0.1320 & 0.1326 & 0.1329 & 0.1343 & 0.1304 & 0.1350 & 0.1306 & 0.1337 \\
  \bottomrule
  \end{tabular}
  }
  \label{tab:att_per_results_l2d0.1}
\end{table*}

\begin{table*}[t!]
\centering
\caption{Objective values $F$ of Alg.~\ref{alg:Mgreedy} under $l_{2}$-norm attacks with varying $\epsilon_2 \in [0, 10]$. Parameters: $D=1$, $\eta$ values from Tab.~\ref{tab:eta}.}
\scalebox{1}{
\begin{tabular}{l|ccccccccccc}
\toprule
\textbf{\boldmath$\epsilon_2$} & ${\rm Model}_1$ & ${\rm Model}_2$ & ${\rm Model}_3$ & ${\rm Model}_4$ & ${\rm Model}_5$ & ${\rm Model}_6$ & ${\rm Model}_7$ & ${\rm Model}_8$ & ${\rm Model}_9$ & ${\rm Model}_{10}$ \\
\midrule
0.00 & 14.7757 & 14.2996 & 14.3820 & 13.9502 & 14.0592 & 14.5925 & 13.9490 & 14.4686 & 14.5209 & 14.5194 \\
0.01         & 14.3609 & 14.0326 & 13.5897 & {\bf 14.0134} & 13.4768 & 13.5853 & 13.7545 & 14.1583 & 14.3729 & 14.0457 \\
0.05         & 13.3674 & 13.8002 & 13.1430 & 13.2985 & 13.0270 & 13.4715 & 13.6194 & 13.8901 & 13.6847 & 13.8387 \\
0.1          & 13.8550 & 13.3263 & 13.4509 & 13.2885 & 13.4143 & 13.3043 & 13.3568 & 14.1224 & 13.4534 & 13.4339 \\
0.5          & 13.2274 & 13.3292 & 13.3167 & 13.2822 & 13.0968 & 13.3608 & 13.3767 & 13.4204 & 13.3758 & 13.2969 \\
1.0          & 13.6029 & 13.3840 & 13.8898 & 13.6794 & 13.2521 & 13.5950 & 13.2556 & 13.5484 & 13.1153 & 13.4189 \\
5.0          & 13.1166 & 12.9472 & 13.0968 & 12.9297 & 12.8962 & 13.1894 & 12.9019 & 13.1366 & 12.9823 & 13.1764 \\
10.0         & 12.9900 & 13.1097 & 12.9613 & 12.8142 & 12.7174 & 13.0276 & 12.8289 & 13.1637 & 13.0128 & 13.0594 \\
\bottomrule
\end{tabular}
}
\label{tab:att_per_results_l2d1}
\end{table*}

\begin{table*}[t!]
\centering
\caption{Objective values $F$ of Alg.~\ref{alg:Mgreedy} under $l_{2}$-norm attacks with varying $\epsilon_2 \in [0, 10]$. Parameters: $D=10$, $\eta$ values from Tab.~\ref{tab:eta}.}
\scalebox{1}{
\begin{tabular}{l|cccccccccc}
\toprule
\textbf{\boldmath$\epsilon_2$} & ${\rm Model}_1$ & ${\rm Model}_2$ & ${\rm Model}_3$ & ${\rm Model}_4$ & ${\rm Model}_5$ & ${\rm Model}_6$ & ${\rm Model}_7$ & ${\rm Model}_8$ & ${\rm Model}_9$ & ${\rm Model}_{10}$ \\
\midrule
0.00 & 1474.2324 & 1426.8492 & 1435.0023 & 1391.8554 & 1402.8959 & 1410.4395 & 1391.6993 & 1443.6483 & 1448.8861 & 1448.7191 \\
0.01         & 1393.3562 & 1360.0638 & 1374.8273 & 1364.2667 & 1322.6239 & {\bf 1414.8718} & 1324.4119 & 1370.1080 & 1370.7363 & 1378.8443 \\
0.05         & 1383.6321 & 1383.4909 & 1330.0605 & 1307.5915 & 1314.5249 & 1349.5608 & 1320.0215 & 1362.2623 & 1339.1132 & 1404.0497 \\
0.10          & 1410.6906 & 1314.6269 & 1332.8700 & 1377.2207 & 1345.0731 & 1363.3326 & 1340.2059 & 1419.6103 & 1345.4399 & 1403.3197 \\
0.50          & 1350.5515 & 1332.4207 & 1305.3593 & 1304.0124 & 1300.3575 & 1336.1150 & 1331.5484 & 1346.6761 & 1348.2639 & 1329.5303 \\
1.00          & 1363.8197 & 1366.5286 & 1338.7542 & 1331.9835 & 1367.6243 & 1336.4774 & 1305.5736 & 1395.5431 & 1336.7558 & 1380.1395 \\
5.00          & 1302.3481 & 1292.5827 & 1293.3264 & 1294.2322 & 1302.1482 & 1298.1031 & 1268.5876 & 1322.8287 & 1303.6870 & 1323.0778 \\
10.00         & 1328.7872 & 1275.8841 & 1304.6443 & 1283.5058 & 1298.7070 & 1306.2990 & 1276.3863 & 1310.0792 & 1272.1141 & 1322.5080 \\
\bottomrule
\end{tabular}
}
\label{tab:att_per_results_l2d10}
\end{table*}

\begin{table*}[t!]
\centering
\caption{Objective values $F$ of Alg.~\ref{alg:Mgreedy} under $l_{\infty}$-norm attacks with varying $\epsilon_\infty \in [0, 1]$. Parameters: $D=0.1$, $\eta=0.1$.}
\scalebox{1}{
\begin{tabular}{l|cccccccccc}
\toprule
\textbf{\boldmath$\epsilon_\infty$} & ${\rm Model}_1$ & ${\rm Model}_2$ & ${\rm Model}_3$ & ${\rm Model}_4$ & ${\rm Model}_5$ & ${\rm Model}_6$ & ${\rm Model}_7$ & ${\rm Model}_8$ & ${\rm Model}_9$ & ${\rm Model}_{10}$ \\
\midrule
0.00 & 0.1511 & 0.1461 & 0.1469 & 0.1429 & 0.1436 & 0.1491 & 0.1434 & 0.1479 & 0.1484 & 0.1484 \\
0.10          & 0.1479 & 0.1449 & 0.1403 & 0.1411 & 0.1380 & 0.1466 & {\bf 0.1458} & 0.1465 & 0.1476 & 0.1472 \\
0.20          & 0.1482 & 0.1455 & 0.1450 & {\bf 0.1462} & 0.1387 & 0.1482 & {\bf 0.1447} & 0.1474 & 0.1473 & 0.1475 \\
0.30          & 0.1482 & 0.1455 & 0.1450 & {\bf 0.1462} & 0.1387 & 0.1482 & {\bf 0.1447} & 0.1474 & 0.1473 & 0.1475 \\
0.40          & 0.1482 & 0.1455 & 0.1450 & {\bf 0.1462} & 0.1387 & 0.1482 & {\bf 0.1447} & 0.1474 & 0.1473 & 0.1475 \\
0.50          & 0.1482 & 0.1455 & 0.1450 & {\bf 0.1462} & 0.1387 & 0.1482 & {\bf 0.1447} & 0.1474 & 0.1473 & 0.1475 \\
0.60          & 0.1479 & 0.1449 & 0.1403 & 0.1411 & 0.1380 & 0.1466 & {\bf 0.1458} & 0.1465 & 0.1476 & 0.1472 \\
0.70          & 0.1479 & 0.1449 & 0.1403 & 0.1411 & 0.1380 & 0.1466 & {\bf 0.1458} & 0.1465 & 0.1476 & 0.1472 \\
0.80          & 0.1479 & 0.1449 & 0.1403 & 0.1411 & 0.1380 & 0.1466 & {\bf 0.1458} & 0.1465 & 0.1476 & 0.1472 \\
0.90          & 0.1479 & 0.1449 & 0.1403 & 0.1411 & 0.1380 & 0.1466 & {\bf 0.1458} & 0.1465 & 0.1476 & 0.1472 \\
1.00          & 0.1479 & 0.1449 & 0.1403 & 0.1411 & 0.1380 & 0.1466 & {\bf 0.1458} & 0.1465 & 0.1476 & 0.1472 \\
\bottomrule
\end{tabular}
}
\label{tab:att_per_results_linftyd0.1}
\end{table*}

\begin{table*}[t!]
\centering
\caption{Objective values $F$ of Alg.~\ref{alg:Mgreedy} under $l_{\infty}$-norm attacks with varying $\epsilon_\infty \in [0, 1]$. Parameters: $D=1$, $\eta=0.1$.}
\scalebox{1}{
\begin{tabular}{l|cccccccccc}
\toprule
\textbf{\boldmath$\epsilon_\infty$} & ${\rm Model}_1$ & ${\rm Model}_2$ & ${\rm Model}_3$ & ${\rm Model}_4$ & ${\rm Model}_5$ & ${\rm Model}_6$ & ${\rm Model}_7$ & ${\rm Model}_8$ & ${\rm Model}_9$ & ${\rm Model}_{10}$ \\
\midrule
0.00 & 14.7757 & 14.2996 & 14.3820 & 13.9502 & 14.0592 & 14.5925 & 13.9490 & 14.4686 & 14.5209 & 14.5194 \\
0.10          & 14.7200 & 13.8373 & 13.8269 & {\bf 14.3320} & 13.6178 & {\bf 14.6066} & {\bf 13.9508} & {\bf 14.4725} & {\bf 14.5209} & 14.4869 \\
0.20          & 14.7557 & 13.5609 & 13.6631 & 13.5879 & 13.5796 & 13.9308 & 13.7069 & 14.0067 & 13.9347 & 13.8778 \\
0.30          & 14.7441 & 13.5926 & 13.5036 & 13.4356 & 12.7464 & 13.5099 & 13.4836 & 13.7012 & 13.0577 & 14.2977 \\
0.40          & 13.2733 & 13.0266 & 13.0663 & 13.2455 & 12.9559 & 13.4531 & 13.7557 & 13.1864 & 13.5648 & 14.1677 \\
0.50          & 13.1596 & 12.9217 & 12.8158 & 12.8778 & 12.8955 & 12.8753 & 13.0534 & 13.0272 & 12.6506 & 13.2123 \\
0.60          & 13.1162 & 12.9490 & 13.0452 & 12.8062 & 12.8238 & 13.0343 & 12.8726 & 13.1176 & 12.8893 & 13.2046 \\
0.70          & 13.0891 & 12.9490 & 13.0252 & 12.8062 & 12.7869 & 13.0343 & 12.8541 & 13.1074 & 12.9676 & 13.0981 \\
0.80          & 13.0891 & 12.9490 & 13.0252 & 12.8062 & 12.7869 & 13.0343 & 12.8541 & 13.1074 & 12.9676 & 13.0981 \\
0.90          & 13.0891 & 12.9490 & 13.0252 & 12.8062 & 12.7869 & 13.0343 & 12.8541 & 13.1074 & 12.9676 & 13.0981 \\
1.00          & 13.0891 & 12.9490 & 13.0252 & 12.8062 & 12.7869 & 13.0343 & 12.8541 & 13.1074 & 12.9676 & 13.0981 \\
\bottomrule
\end{tabular}
}
\label{tab:att_per_results_linftyd1}
\end{table*}

\begin{table*}[t!]
\centering
\caption{Objective values $F$ of Alg.~\ref{alg:Mgreedy} under $l_{\infty}$-norm attacks with varying $\epsilon_\infty \in [0, 1]$. Parameters: $D=10$, $\eta=0.1$.}
\scalebox{1}{
\begin{tabular}{l|cccccccccc}
\toprule
\textbf{\boldmath$\epsilon_\infty$} & ${\rm Model}_1$ & ${\rm Model}_2$ & ${\rm Model}_3$ & ${\rm Model}_4$ & ${\rm Model}_5$ & ${\rm Model}_6$ & ${\rm Model}_7$ & ${\rm Model}_8$ & ${\rm Model}_9$ & ${\rm Model}_{10}$ \\
\midrule
0.00 & 1474.2324 & 1426.8492 & 1435.0023 & 1391.8554 & 1402.8959 & 1410.4395 & 1391.6993 & 1443.6483 & 1448.8861 & 1448.7191 \\
0.10          & 1356.1932 & 1367.8265 & 1336.5555 & 1332.4540 & 1348.5302 & 1371.4058 & 1336.8640 & 1394.6237 & 1379.7231 & 1380.9254 \\
0.20          & 1299.1402 & 1318.0910 & 1281.0421 & 1293.9322 & 1290.9650 & 1298.2420 & 1298.5391 & 1316.1446 & 1321.3482 & 1314.8729 \\
0.30          & 1330.0928 & 1236.7211 & 1291.2222 & 1252.3489 & 1297.1734 & 1259.0189 & 1276.5407 & 1312.2521 & 1313.2799 & 1331.8045 \\
0.40          & 1241.2052 & 1302.3915 & 1282.0595 & 1273.9369 & 1163.9825 & 1110.1221 & 1219.5449 & 1310.2218 & 1317.0013 & 1246.7736 \\
0.50          & 1323.7156 & 1289.5550 & 1255.3676 & 1283.3641 & 1236.2072 & 1288.7349 & 1293.3822 & 1322.7892 & 1210.6488 & 1302.7711 \\
0.60          & 1315.1328 & 1290.6897 & 1221.0778 & 1312.1990 & 1299.5189 & 1169.9733 & 1250.7673 & 1262.1856 & 1305.9338 & 1309.6077 \\
0.70          & 1302.2765 & 1311.2195 & 1295.3673 & 1299.9777 & 1312.2751 & 1306.1988 & 1301.0384 & 1306.3242 & 1302.1175 & 1273.0084 \\
0.80          & 1345.0386 & 1235.4823 & 1293.4413 & 1308.1246 & 1229.8006 & 1243.8061 & 1229.2508 & 1237.6780 & 1237.8233 & 1329.4045 \\
0.90          & 1303.7923 & 1294.7092 & 1283.0093 & 1184.0212 & 1300.7484 & 1302.7943 & 1283.7163 & 1302.2064 & 1313.1154 & 1332.1812 \\
1.00          & 1323.4922 & 1234.0211 & 1288.2501 & 1247.8889 & 1237.5296 & 1282.0559 & 1163.0216 & 1313.1526 & 1225.3747 & 1321.0384 \\
\bottomrule
\end{tabular}
}
\label{tab:att_per_results_linftyd10}
\end{table*}

\paragraph{Additional Results for the Defense Algorithm (Alg.~\ref{alg:robust})}
We show several results for the performance of defense algorithm.

{\bf Why Intermediate $\gamma$ Values Yield Smaller Loss.} The non-monotonic behavior of the loss with respect to $\gamma$ can be understood from two extreme cases. When $\gamma \to \infty$, the regularization term in Eq.~\ref{eq:robust_problem} forces the weights $\mathbf{w}$ to become nearly uniform across attack types. As a result, the update of $\mathbf{x}$ is affected almost equally by all attacks, which may lead to a more conservative solution and hence larger clean-data loss. In contrast, when $\gamma \to 0$, the objective is dominated by the worst-performing attack type, so that one weight tends to approach $1$ while the others approach $0$. In this case, the update of $\mathbf{x}$ is driven almost entirely by a single attack model, which may also degrade clean-data performance. Therefore, both extremes may increase the loss, while an intermediate $\gamma$ often provides a better balance.

{\bf Per-model Results of the Robust Algorithm.} Besides the averaged metrics, we also report the objective values for each individual model under different settings. From Tab.~\ref{tab:robust_per_results_gamma0.1l1} to Tab.~\ref{tab:robust_per_results_gamma10linfty}, the results consistently satisfy $
F_{p}(\bar{\mathbf{x}}_{T}, \Omega_{i})
\geq
F_{p}^{\rm rob}(\bar{\mathbf{x}}_{T},\Omega_{i})
\geq
F_{p}^{\rm rob,att}(\bar{\mathbf{x}}_{T},\Omega_{i}(\mathbf{v}))
\geq
F_{p}^{\rm att}(\bar{\mathbf{x}}_{T}, \Omega_{i}(\mathbf{v})).
$
This shows that the robust algorithm introduces only moderate loss under clean data, while consistently improving over the attacked greedy solution under adversarial perturbations. Hence, the per-model results support the same conclusion as the averaged metrics: Alg.~\ref{alg:robust} provides effective robustness with acceptable utility loss.

\begin{table}[t!]
\centering
\normalsize
\caption{Objective values $F$ under $l_{1}$-norm attacks with $\gamma=0.1$. Parameters: $K_{\rm robust}=100, T_{\rm robust}=30, D=1$).}
\scalebox{0.7}{
\begin{tabular}{l|c|c|c|c}
\toprule
\textbf{Model} &$F_{p}(\bar{\mathbf{x}}_{T}, \Omega_{i})$ & $F_{p}^{\rm rob}(\bar{\mathbf{x}}_{T},\Omega_{i})$  & 
$F_{p}^{\rm rob, att}(\bar{\mathbf{x}}_{T},\Omega_{i}(\mathbf{v}))$& $F^{\rm att}_{p}(\bar{\mathbf{x}}_{T}, \Omega_{i}(\mathbf{v}))$ \\
\midrule
${\rm Model}_{1}$ & 14.7757 & 14.2978 & 14.0910 & 13.6322 \\
${\rm Model}_{2}$ & 14.2996 & 13.8470 & 13.7345 & 13.0018 \\
${\rm Model}_{3}$ & 14.3820 & 13.8146 & 13.7065 & 13.1876 \\
${\rm Model}_{4}$ & 13.9502 & 13.8111 & 13.6874 & 13.1219 \\
${\rm Model}_{5}$ & 14.0592 & 13.6137 & 13.5729 & 12.8331 \\
${\rm Model}_{6}$ & 14.5925 & 14.0321 & 13.8776 & 13.0850 \\
${\rm Model}_{7}$ & 13.9490 & 13.8161 & 13.6908 & 13.2278 \\
${\rm Model}_{8}$ & 14.4686 & 14.0322 & 13.9061 & 13.2038 \\
${\rm Model}_{9}$ & 14.5209 & 13.9969 & 13.8094 & 13.3297 \\
${\rm Model}_{10}$ & 14.5194 & 13.9817 & 13.9314 & 13.1609 \\
\bottomrule
\end{tabular}
}
\label{tab:robust_per_results_gamma0.1l1}
\end{table}

\begin{table}[t!]
  \centering
  \normalsize
  \caption{Objective values $F$ under $l_{2}$-norm attacks with $\gamma=0.1$. Parameters: $K_{\rm robust}=100, T_{\rm robust}=30, D=1$.}
  \scalebox{0.7}{
  \begin{tabular}{l|c|c|c|c}
  \toprule
\textbf{Model} &$F_{p}(\bar{\mathbf{x}}_{T}, \Omega_{i})$ &  $F_{p}^{\rm rob}(\bar{\mathbf{x}}_{T},\Omega_{i})$  & 
$F_{p}^{\rm rob, att}(\bar{\mathbf{x}}_{T},\Omega_{i}(\mathbf{v}))$& $F^{\rm att}_{p}(\bar{\mathbf{x}}_{T}, \Omega_{i}(\mathbf{v}))$ \\
\midrule
  ${\rm Model}_{1}$ & 14.7757 & 14.2978 & 14.1284 & 13.2274 \\
  ${\rm Model}_{2}$ & 14.2996 & 13.847  & 13.7828 & 13.3292 \\
  ${\rm Model}_{3}$ & 14.382  & 13.8146 & 13.7803 & 13.3167 \\
  ${\rm Model}_{4}$ & 13.9502 & 13.8111 & 13.7707 & 13.2822 \\
  ${\rm Model}_{5}$ & 14.0592 & 13.6137 & 13.6071 & 13.0968 \\
  ${\rm Model}_{6}$ & 14.5925 & 14.0321 & 13.9337 & 13.3608 \\
  ${\rm Model}_{7}$ & 13.949  & 13.8161 & 13.773  & 13.3767 \\
  ${\rm Model}_{8}$ & 14.4686 & 14.0322 & 13.9721 & 13.4204 \\
  ${\rm Model}_{9}$ & 14.5209 & 13.9969 & 13.9144 & 13.3758 \\
  ${\rm Model}_{10}$ & 14.5194 & 13.9817 & 13.9759 & 13.2969 \\
  \hline
  \end{tabular}
  }
\label{tab:robust_per_results_gamma0.1l2}
\end{table}
  
  \begin{table}[t!]
  \centering
  \normalsize
  \caption{Objective values $F$ under $l_{\infty}$-norm attacks with $\gamma=0.1$. Parameters: $K_{\rm robust}=100, T_{\rm robust}=30, D=0.5$.}
  \scalebox{0.7}{
  \begin{tabular}{l|c|c|c|c}
  \toprule
\textbf{Model} &$F_{p}(\bar{\mathbf{x}}_{T}, \Omega_{i})$ &$F_{p}^{\rm rob}(\bar{\mathbf{x}}_{T},\Omega_{i})$  & 
$F_{p}^{\rm rob, att}(\bar{\mathbf{x}}_{T},\Omega_{i}(\mathbf{v}))$& $F^{\rm att}_{p}(\bar{\mathbf{x}}_{T}, \Omega_{i}(\mathbf{v}))$ \\
\midrule
  ${\rm Model}_{1}$ & 14.7757 & 14.2978 & 14.1048 & 13.0891 \\
  ${\rm Model}_{2}$ & 14.2996 & 13.847  & 13.7329 & 12.949  \\
  ${\rm Model}_{3}$ & 14.382  & 13.8146 & 13.7147 & 13.0252 \\
  ${\rm Model}_{4}$ & 13.9502 & 13.8111 & 13.6962 & 12.8062 \\
  ${\rm Model}_{5}$ & 14.0592 & 13.6137 & 13.5727 & 12.7869 \\
  ${\rm Model}_{6}$ & 14.5925 & 14.0321 & 13.8854 & 13.0343 \\
  ${\rm Model}_{7}$ & 13.949  & 13.8161 & 13.6943 & 12.8541 \\
  ${\rm Model}_{8}$ & 14.4686 & 14.0322 & 13.9157 & 13.1074 \\
  ${\rm Model}_{9}$ & 14.5209 & 13.9969 & 13.82   & 12.9676 \\
  ${\rm Model}_{10}$ & 14.5194 & 13.9817 & 13.9353 & 13.0981 \\
  \hline
  \end{tabular}
  }
\label{tab:robust_per_results_gamma0.1linfty}
  \end{table}
  
  \begin{table}[t!]
  \centering
  \normalsize
  \caption{Objective values $F$ under $l_{1}$-norm attacks with $\gamma=1$. Parameters: $K_{\rm robust}=100, T_{\rm robust}=30, D=1)$.}
  \scalebox{0.7}{
  \begin{tabular}{l|c|c|c|c}
  \toprule
\textbf{Model} &$F_{p}(\bar{\mathbf{x}}_{T}, \Omega_{i})$ & $F_{p}^{\rm rob}(\bar{\mathbf{x}}_{T},\Omega_{i})$  & 
$F_{p}^{\rm rob, att}(\bar{\mathbf{x}}_{T},\Omega_{i})$& $F^{\rm att}_{p}(\bar{\mathbf{x}}_{T}, \Omega_{i}(\mathbf{v}))$ \\
\midrule
  ${\rm Model}_{1}$ & 14.7757 & 14.2299 & 14.0945 & 13.6322 \\
  ${\rm Model}_{2}$ & 14.2996 & 13.7697 & 13.7388 & 13.0018 \\
  ${\rm Model}_{3}$ & 14.382  & 13.8609 & 13.7211 & 13.1876 \\
  ${\rm Model}_{4}$ & 13.9502 & 13.8021 & 13.6995 & 13.1219 \\
  ${\rm Model}_{5}$ & 14.0592 & 13.6241 & 13.5714 & 12.8331 \\
  ${\rm Model}_{6}$ & 14.5925 & 14.0642 & 13.8867 & 13.085  \\
  ${\rm Model}_{7}$ & 13.949  & 13.806  & 13.6907 & 13.2278 \\
  ${\rm Model}_{8}$ & 14.4686 & 14.0595 & 13.917  & 13.2038 \\
  ${\rm Model}_{9}$ & 14.5209 & 14.012  & 13.8016 & 13.3297 \\
  ${\rm Model}_{10}$ & 14.5194 & 14.0363 & 13.9221 & 13.1609 \\
  \hline
  \end{tabular}
  }
\label{tab:robust_per_results_gamma1l1}
  \end{table}
  
  \begin{table}[t!]
  \centering
  \normalsize
  \caption{Objective values $F$ under $l_{2}$-norm attacks with $\gamma=1$. Parameters: $K_{\rm robust}=100, T_{\rm robust}=30, D=1)$.}
  \scalebox{0.7}{
  \begin{tabular}{l|c|c|c|c}
  \toprule
\textbf{Model} &$F_{p}(\bar{\mathbf{x}}_{T}, \Omega_{i})$ &  $F_{p}^{\rm rob}(\bar{\mathbf{x}}_{T},\Omega_{i})$  & 
$F_{p}^{\rm rob, att}(\bar{\mathbf{x}}_{T},\Omega_{i}(\mathbf{v}))$& $F^{\rm att}_{p}(\bar{\mathbf{x}}_{T}, \Omega_{i}(\mathbf{v}))$ \\
\midrule
  ${\rm Model}_{1}$ & 14.7757 & 14.2299 & 14.1512 & 13.2274 \\
  ${\rm Model}_{2}$ & 14.2996 & 13.7697 & 13.7972 & 13.3292 \\
  ${\rm Model}_{3}$ & 14.382  & 13.8609 & 13.788  & 13.3167 \\
  ${\rm Model}_{4}$ & 13.9502 & 13.8021 & 13.7607 & 13.2822 \\
  ${\rm Model}_{5}$ & 14.0592 & 13.6241 & 13.6126 & 13.0968 \\
  ${\rm Model}_{6}$ & 14.5925 & 14.0642 & 13.9551 & 13.3608 \\
  ${\rm Model}_{7}$ & 13.949  & 13.806  & 13.7672 & 13.3767 \\
  ${\rm Model}_{8}$ & 14.4686 & 14.0595 & 13.969  & 13.4204 \\
  ${\rm Model}_{9}$ & 14.5209 & 14.012  & 13.908  & 13.3758 \\
  ${\rm Model}_{10}$ & 14.5194 & 14.0363 & 13.9863 & 13.2969 \\
  \hline
  \end{tabular}
  }
  \label{tab:robust_per_results_gamma1l2}
  \end{table}
  
\begin{table}[t!]
  \centering
  \normalsize
 \caption{Objective values $F$ under $l_{\infty}$-norm attacks with $\gamma=1$. Parameters: $K_{\rm robust}=100, T_{\rm robust}=30, D=0.5)$.}
  \scalebox{0.7}{
  \begin{tabular}{l|c|c|c|c}
  \toprule
\textbf{Model} &$F_{p}(\bar{\mathbf{x}}_{T}, \Omega_{i})$ & $F_{p}^{\rm rob}(\bar{\mathbf{x}}_{T},\Omega_{i})$  & 
$F_{p}^{\rm rob, att}(\bar{\mathbf{x}}_{T},\Omega_{i}(\mathbf{v}))$& $F^{\rm att}_{p}(\bar{\mathbf{x}}_{T}, \Omega_{i}(\mathbf{v}))$ \\
\midrule
  ${\rm Model}_{1}$ & 14.7757 & 14.2299 & 14.1095 & 13.0891 \\
  ${\rm Model}_{2}$ & 14.2996 & 13.7697 & 13.7507 & 12.949  \\
  ${\rm Model}_{3}$ & 14.382  & 13.8609 & 13.7181 & 13.0252 \\
  ${\rm Model}_{4}$ & 13.9502 & 13.8021 & 13.7095 & 12.8062 \\
  ${\rm Model}_{5}$ & 14.0592 & 13.6241 & 13.5799 & 12.7869 \\
  ${\rm Model}_{6}$ & 14.5925 & 14.0642 & 13.8915 & 13.0343 \\
  ${\rm Model}_{7}$ & 13.949  & 13.806  & 13.7061 & 12.8541 \\
  ${\rm Model}_{8}$ & 14.4686 & 14.0595 & 13.922  & 13.1074 \\
  ${\rm Model}_{9}$ & 14.5209 & 14.012  & 13.8126 & 12.9676 \\
  ${\rm Model}_{10}$ & 14.5194 & 14.0363 & 13.9329 & 13.0981 \\
  \hline
  \end{tabular}
  }
\label{tab:robust_per_results_gamma1linfty}
  \end{table}
  
  \begin{table}[t!]
  \centering
  \normalsize
   \caption{Objective values $F$ under $l_{1}$-norm attacks with $\gamma=10$. Parameters: $K_{\rm robust}=100, T_{\rm robust}=30, D=1)$.}
  \scalebox{0.7}{
  \begin{tabular}{l|c|c|c|c}
  \toprule
\textbf{Model} &$F_{p}(\bar{\mathbf{x}}_{T}, \Omega_{i})$ & $F_{p}^{\rm rob}(\bar{\mathbf{x}}_{T},\Omega_{i})$ & 
$F_{p}^{\rm rob, att}(\bar{\mathbf{x}}_{T},\Omega_{i}(\mathbf{v}))$& $F^{\rm att}_{p}(\bar{\mathbf{x}}_{T}, \Omega_{i}(\mathbf{v}))$ \\
\midrule
  ${\rm Model}_{1}$ & 14.7757 & 14.0579 & 14.0259 & 13.6322 \\
  ${\rm Model}_{2}$ & 14.2996 & 13.7544 & 13.6889 & 13.0018 \\
  ${\rm Model}_{3}$ & 14.382  & 13.6549 & 13.6758 & 13.1876 \\
  ${\rm Model}_{4}$ & 13.9502 & 13.6953 & 13.5945 & 13.1219 \\
  ${\rm Model}_{5}$ & 14.0592 & 13.5546 & 13.5416 & 12.8331 \\
  ${\rm Model}_{6}$ & 14.5925 & 13.7843 & 13.8238 & 13.085  \\
  ${\rm Model}_{7}$ & 13.949  & 13.6951 & 13.5806 & 13.2278 \\
  ${\rm Model}_{8}$ & 14.4686 & 13.885  & 13.9145 & 13.2038 \\
  ${\rm Model}_{9}$ & 14.5209 & 13.7452 & 13.7495 & 13.3297 \\
  ${\rm Model}_{10}$ & 14.5194 & 13.9167 & 13.8738 & 13.1609 \\
  \hline
  \end{tabular}
  }
\label{tab:robust_per_results_ganma10l1}
  \end{table}
  
  \begin{table}[t!]
  \centering
  \normalsize
   \caption{Objective values $F$ under $l_{2}$-norm attacks with $\gamma=10$. Parameters: $K_{\rm robust}=100, T_{\rm robust}=30, D=1)$.}
  \scalebox{0.7}{
  \begin{tabular}{l|c|c|c|c}
  \toprule
\textbf{Model} & $F_{p}(\bar{\mathbf{x}}_{T}, \Omega_{i})$ & $F_{p}^{\rm rob}(\bar{\mathbf{x}}_{T},\Omega_{i})$ & 
$F_{p}^{\rm rob, att}(\bar{\mathbf{x}}_{T},\Omega_{i}(\mathbf{v}))$& $F^{\rm att}_{p}(\bar{\mathbf{x}}_{T}, \Omega_{i}(\mathbf{v}))$ \\
\midrule
  ${\rm Model}_{1}$ & 14.7757 & 14.0579 & 14.0914 & 13.2274 \\
  ${\rm Model}_{2}$ & 14.2996 & 13.7544 & 13.7465 & 13.3292 \\
  ${\rm Model}_{3}$ & 14.382  & 13.6549 & 13.7477 & 13.3167 \\
  ${\rm Model}_{4}$ & 13.9502 & 13.6953 & 13.6913 & 13.2822 \\
  ${\rm Model}_{5}$ & 14.0592 & 13.5546 & 13.5397 & 13.0968 \\
  ${\rm Model}_{6}$ & 14.5925 & 13.7843 & 13.7952 & 13.3608 \\
  ${\rm Model}_{7}$ & 13.949  & 13.6951 & 13.7238 & 13.3767 \\
  ${\rm Model}_{8}$ & 14.4686 & 13.885  & 13.9524 & 13.4204 \\
  ${\rm Model}_{9}$ & 14.5209 & 13.7452 & 13.8496 & 13.3758 \\
  ${\rm Model}_{10}$ & 14.5194 & 13.9167 & 13.9218 & 13.2969 \\
  \hline
  \end{tabular}
  }
\label{tab:robust_per_results_gamma10l2}
  \end{table}

  \begin{table}[t!]
  \centering
  \normalsize
  \caption{Objective values $F$ under $l_{\infty}$-norm attacks with $\gamma=10$. Parameters: $K_{\rm robust}=100, T_{\rm robust}=30, D=0.5)$.}
  \scalebox{0.7}{
  \begin{tabular}{l|c|c|c|c}
 \toprule
\textbf{Model} &$F_{p}(\bar{\mathbf{x}}_{T}, \Omega_{i})$ & $F_{p}^{\rm rob}(\bar{\mathbf{x}}_{T},\Omega_{i})$ & 
$F_{p}^{\rm rob, att}(\bar{\mathbf{x}}_{T},\Omega_{i}(\mathbf{v}))$& $F^{\rm att}_{p}(\bar{\mathbf{x}}_{T}, \Omega_{i}(\mathbf{v}))$ \\
\midrule
  ${\rm Model}_{1}$ & 14.7757 & 14.0579 & 14.0508 & 13.0891 \\
  ${\rm Model}_{2}$ & 14.2996 & 13.7544 & 13.7812 & 12.949  \\
  ${\rm Model}_{3}$ & 14.382  & 13.6549 & 13.7406 & 13.0252 \\
  ${\rm Model}_{4}$ & 13.9502 & 13.6953 & 13.7374 & 12.8062 \\
  ${\rm Model}_{5}$ & 14.0592 & 13.5546 & 13.5332 & 12.7869 \\
  ${\rm Model}_{6}$ & 14.5925 & 13.7843 & 13.8321 & 13.0343 \\
  ${\rm Model}_{7}$ & 13.949  & 13.6951 & 13.6681 & 12.8541 \\
  ${\rm Model}_{8}$ & 14.4686 & 13.885  & 13.8686 & 13.1074 \\
  ${\rm Model}_{9}$ & 14.5209 & 13.7452 & 13.7479 & 12.9676 \\
  ${\rm Model}_{10}$ & 14.5194 & 13.9167 & 13.8852 & 13.0981 \\
  \hline
  \end{tabular}
  }
\label{tab:robust_per_results_gamma10linfty}
  \end{table}

\clearpage
\bibliographystyle{IEEEtran}
\bibliography{sample-base}

\end{document}